%%%%%%%%%%%%%%%%%%%%%%% file template.tex %%%%%%%%%%%%%%%%%%%%%%%%%
%
% This is a general template file for the LaTeX package SVJour3
% for Springer journals.          Springer Heidelberg 2010/09/16
%
% Copy it to a new file with a new name and use it as the basis
% for your article. Delete % signs as needed.
%
% This template includes a few options for different layouts and
% content for various journals. Please consult a previous issue of
% your journal as needed.
%
%%%%%%%%%%%%%%%%%%%%%%%%%%%%%%%%%%%%%%%%%%%%%%%%%%%%%%%%%%%%%%%%%%%
%
% First comes an example EPS file -- just ignore it and
% proceed on the \documentclass line
% your LaTeX will extract the file if required
% [arxiv_v2: filecontents example.eps stripped, 188 chars]
\RequirePackage{fix-cm}
\documentclass[smallextended]{svjour3}       % onecolumn (second format)
\smartqed  % flush right qed marks, e.g. at end of proof
\usepackage{graphicx}
\usepackage{epstopdf}
\usepackage{amsfonts}
\usepackage{epsfig}
\usepackage{graphics}
\usepackage{amsmath}
\usepackage{amssymb}
\usepackage{algorithmic}
\usepackage{algorithm}
\usepackage{subfigure}  
%\usepackage{amsthm}

%
% \usepackage{mathptmx}      % use Times fonts if available on your TeX system
%
% insert here the call for the packages your document requires
%\usepackage{latexsym}
% etc.
%
% please place your own definitions here and don't use \def but
% \newcommand{}{}
%
% Insert the name of "your journal" with
% \journalname{myjournal}
%
\begin{document}

\title{A Convergent Algorithm for Bi-orthogonal Nonnegative Matrix Tri-Factorization}
%\subtitle{Do you have a subtitle?\\ If so, write it here}

%\titlerunning{Short form of title}        % if too long for running head

\author{Andri Mirzal}

%\authorrunning{Short form of author list} % if too long for running head

%\institute{F. Author \at
%              first address \\
%              Tel.: +123-45-678910\\
%              Fax: +123-45-678910\\
%              \email{fauthor@example.com}           %  \\
%             \emph{Present address:} of F. Author  %  if needed
%           \and
%           S. Author \at
%              second address
%}

\date{Received: date / Accepted: date}
% The correct dates will be entered by the editor
\maketitle

\begin{abstract}
A convergent algorithm for nonnegative matrix factorization with orthogonality constraints imposed on both factors is proposed in this paper. This factorization concept was first introduced by Ding et al.~\cite{Ding1} with intent to further improve clustering capability of NMF. However, as the original algorithm was developed based on multiplicative update rules, the convergence of the algorithm cannot be guaranteed. In this paper, we utilize the technique  presented in our previous work \cite{Mirzal2} to develop the algorithm and prove that it converges to a stationary point inside the solution space.
\keywords{nonnegative matrix factorization \and orthogonality constraint \and convergent algorithm \and clustering methods}
% \PACS{PACS code1 \and PACS code2 \and more}
\subclass{65F30 \and 15A23}
\end{abstract}

\section{Introduction}\label{intro}

The nonnegative matrix factorization (NMF) was initially introduced with an intention to produce a meaningful decomposition for learning the parts of objects and semantic features of
articles from a document corpus \cite{Lee}. In the basic form, the NMF seeks to decompose a nonnegative matrix into a pair of other nonnegative matrices with lower ranks:
\begin{equation*}
\mathbf{A} \approx \mathbf{B}\mathbf{C},
%\label{eq1}
\end{equation*}
where $\mathbf{A}\in\mathbb{R}_{+}^{M\times N}=\left[\mathbf{a}_1,\ldots,\mathbf{a}_N\right]$ denotes the data matrix, $\mathbf{B}\in\mathbb{R}_{+}^{M\times R}=\left[\mathbf{b}_1,\ldots,\mathbf{b}_R\right]$ denotes the basis matrix, $\mathbf{C}\in\mathbb{R}_{+}^{R\times N}=\left[\mathbf{c}_1,\ldots,\mathbf{c}_N\right]$ denotes the coef\mbox{}ficient matrix, $R$ denotes the number of factors which usually is chosen so that $R\ll\min(M,N)$, and $\mathbb{R}_{+}^{M\times N}$ denotes $M$ by $N$ nonnegative real matrix. The following minimization problem is usually solved to compute $\mathbf{B}$ and $\mathbf{C}$:
\begin{equation}
\min_{\mathbf{B},\mathbf{C}}J(\mathbf{B},\mathbf{C})=\frac{1}{2}\|\mathbf{A}-\mathbf{B}\mathbf{C}\|_{F}^{2}\;\,\mathrm{s.t.}\;\, \mathbf{B}\succeq\mathbf{0},\mathbf{C}\succeq\mathbf{0},
\label{eq4}
\end{equation} where the symbol $\succeq$ denotes entrywise comparison.

In order to improve clustering capability of NMF, Ding et al.~\cite{Ding1} introduced two types of orthogonal NMFs: uni-orthogonal NMF (UNMF) and bi-orthogonal nonnegative matrix tri-factorization (BNMtF) where the former imposes orthogonality constraint on either columns of $\mathbf{B}$ or rows of $\mathbf{C}$, and the latter imposes orthogonality constraints on both columns of $\mathbf{B}$ and rows of $\mathbf{C}$ simultaneously. And due to the tight constraints in the latter, they introduced the third factor to absorb the approximation error. They then proposed an algorithm for each orthogonal NMF based on the multiplicative update rules which are known to only have the nonincreasing property \cite{Lee2}. A convergent algorithm for UNMF has been presented in our previous work \cite{Mirzal2}. In this paper, we extend it to the BNMtF case. As orthogonality constraint cannot be recast into alternating nonnegativity-constrained least square (ANLS) framework (see \cite{HKim2,HKim} for discussion on ANLS) some convergent algorithms for the standard NMF, e.g., \cite{HKim,CJLin2,DKim,DKim2,JKim2,CJLin} cannot be extended to the problem.

\section{Bi-orthogonal nonnegative matrix tri-factorization}\label{nmtf}

BNMtF puts orthogonality constraints on both columns of $\mathbf{B}$ and rows of $\mathbf{C}$ and it is expected that this approach can be used to simultaneously cluster columns and rows of $\mathbf{A}$ (biclustering). The following describes the original BNMtF objective function proposed by Ding et al.~\cite{Ding1}.
\begin{align}
&\min_{\mathbf{B},\mathbf{C},\mathbf{S}}J(\mathbf{B},\mathbf{C},\mathbf{S})=\frac{1}{2}\|\mathbf{A}-\mathbf{BSC}\|_{F}^{2} \label{eq37}\\
&\mathrm{s.t.}\;\,\mathbf{B}\succeq\mathbf{0},\;\mathbf{S}\succeq\mathbf{0},\;\mathbf{C}\succeq\mathbf{0},\;\frac{1}{2}\big(\mathbf{CC}^T-\mathbf{I}\big)=\mathbf{0},\;\mathrm{and}\;\frac{1}{2}\big(\mathbf{B}^T\mathbf{B}-\mathbf{I}\big)=\mathbf{0} \nonumber
\end{align}
where $\mathbf{B}\in\mathbb{R}_{+}^{M\times P}$ and $\mathbf{C}\in\mathbb{R}_{+}^{Q\times N}$; and $\mathbf{S}\in\mathbb{R}_{+}^{P\times Q}$ is introduced to absorb the scale differences of $\mathbf{A}$ and $\mathbf{B}\mathbf{C}$ due to the strict orthogonality constraints on $\mathbf{B}$ and $\mathbf{C}$. We will set $P=Q$ for the rest of this paper (for biclustering task, it is natural to have the same number of clusters for both columns and rows of the data matrix).

The Karush-Kuhn-Tucker (KKT) function of the objective function can be defined as:
\begin{align*}
L(\mathbf{B},\mathbf{C},\mathbf{S})=\;&J(\mathbf{B},\mathbf{C},\mathbf{S})-\mathrm{tr}\;\big(\mathbf{\Gamma}_{\mathbf{B}}\mathbf{B}^T\big)-\mathrm{tr}\;\big(\mathbf{\Gamma}_{\mathbf{S}}\mathbf{S}^T\big)-\mathrm{tr}\;\big(\mathbf{\Gamma}_{\mathbf{C}}\mathbf{C}\big) \\
&+ \frac{1}{2}\mathrm{tr}\;\big(\mathbf{\Lambda}_{\mathbf{C}}\big(\mathbf{CC}^T-\mathbf{I}\big)\big) + \frac{1}{2}\mathrm{tr}\;\big(\mathbf{\Lambda}_{\mathbf{B}}\big(\mathbf{B}^T\mathbf{B}-\mathbf{I}\big)\big),
\end{align*}
where $\mathbf{\Gamma}_{\mathbf{B}}\in\mathbb{R}_{+}^{M\times P}$, $\mathbf{\Gamma}_{\mathbf{S}}\in\mathbb{R}_{+}^{P\times Q}$, $\mathbf{\Gamma}_{\mathbf{C}}\in\mathbb{R}_{+}^{N\times Q}$, $\mathbf{\Lambda}_{\mathbf{C}}\in\mathbb{R}_{+}^{Q\times Q}$,  and $\mathbf{\Lambda}_{\mathbf{B}}\in\mathbb{R}_{+}^{P\times P}$ are the KKT multipliers. 

An equivalent objective function to eq.~\ref{eq37} was proposed by Ding et al.~\cite{Ding1} to absorb the orthogonality constraints into the objective:
\begin{align}
\min_{\mathbf{B},\mathbf{C},\mathbf{S}}J(\mathbf{B},\mathbf{C},\mathbf{S})=\;&\frac{1}{2}\|\mathbf{A}-\mathbf{BSC}\|_{F}^{2} + \frac{1}{2}\mathrm{tr}\;\big(\mathbf{\Lambda}_{\mathbf{C}}\big(\mathbf{C}\mathbf{C}^T-\mathbf{I}\big)\big) + \nonumber\\
\; &\frac{1}{2}\mathrm{tr}\;\big(\mathbf{\Lambda}_{\mathbf{B}}\big(\mathbf{B}^T\mathbf{B}-\mathbf{I}\big)\big)\label{eq39}.
\end{align}

The KKT conditions for objective in eq.~\ref{eq39} can be written as:
\begin{equation*}
\begin{array}{rrr}
\mathbf{B}^*\succeq\mathbf{0}, & \mathbf{S}^*\succeq\mathbf{0}, & \mathbf{C}^*\succeq\mathbf{0}, \\
\nabla_{\mathbf{B}}J(\mathbf{B}^*)=\mathbf{\Gamma}_{\mathbf{B}}\succeq\mathbf{0}, & \nabla_{\mathbf{S}}J(\mathbf{S}^*)=\mathbf{\Gamma}_{\mathbf{S}}\succeq\mathbf{0}, & \nabla_{\mathbf{C}}J(\mathbf{C}^*)=\mathbf{\Gamma}_{\mathbf{C}}^T\succeq\mathbf{0},\\
\nabla_{\mathbf{B}}J(\mathbf{B}^*)\odot\mathbf{B}^*=\mathbf{0}, & \nabla_{\mathbf{S}}J(\mathbf{S}^*)\odot\mathbf{S}^*=\mathbf{0}, & \nabla_{\mathbf{C}}J(\mathbf{C}^*)\odot\mathbf{C}^*=\mathbf{0}, \nonumber
\end{array}
\end{equation*} where $\odot$ denotes entrywise multiplication operation, and
\begin{align*}
\nabla_{\mathbf{B}}J(\mathbf{B})&=\mathbf{BSCC}^T\mathbf{S}^T-\mathbf{AC}^T\mathbf{S}^T+\mathbf{B\Lambda}_{\mathbf{B}}, \\
\nabla_{\mathbf{C}}J(\mathbf{C})&=\mathbf{S}^T\mathbf{B}^T\mathbf{BSC}-\mathbf{S}^T\mathbf{B}^T\mathbf{A}+\mathbf{\Lambda}_{\mathbf{C}}\mathbf{C}, \\
\nabla_{\mathbf{S}}J(\mathbf{S})&=\mathbf{B}^T\mathbf{BSCC}^T-\mathbf{B}^T\mathbf{AC}^T.
\end{align*} Then, by using the multiplicative updates \cite{Lee2}, Ding et al.~\cite{Ding1} derived BNMtF algorithm as follows:
\begin{align}
b_{mp} &\longleftarrow b_{mp}\frac{\big(\mathbf{A}\mathbf{C}^T\mathbf{S}^T\big)_{mp}}{\left[\mathbf{B}\big(\mathbf{SCC}^T\mathbf{S}^T + \mathbf{\Lambda}_{\mathbf{B}}\big)\right]_{mp}}, \label{eq44}
\\
c_{qn} &\longleftarrow c_{qn}\frac{\big(\mathbf{S}^T\mathbf{B}^T\mathbf{A}\big)_{qn}}{\left[\big(\mathbf{S}^T\mathbf{B}^T\mathbf{B}\mathbf{S} + \mathbf{\Lambda}_{\mathbf{C}}\big)\mathbf{C}\right]_{qn}}, \label{eq45} 
\\
s_{pq} &\longleftarrow s_{pq}\frac{(\mathbf{B}^T\mathbf{A}\mathbf{C}^T)_{pq}}{(\mathbf{B}^T\mathbf{BSCC}^T)_{pq}}, \label{eq46}
\end{align} with
\begin{align*}
\mathbf{\Lambda}_{\mathbf{B}} &= \mathbf{B}^T\mathbf{AC}^T\mathbf{S}^T - \mathbf{SCC}^T\mathbf{S}^T\;\;\;\; \mathrm{and}\\
\mathbf{\Lambda}_{\mathbf{C}} &= \mathbf{S}^T\mathbf{B}^T\mathbf{AC}^T - \mathbf{S}^T\mathbf{B}^T\mathbf{B}\mathbf{S}
\end{align*}
are derived exactly for the diagonal entries, and \emph{approximately} for off-diagonal entries by relaxing the nonnegativity constraints. 

The complete BNMtF algorithm proposed in \cite{Ding1} is shown in algorithm \ref{algorithm3} where $\delta$ denotes some small positive number (note that the normalization step is not recommended as it will change the objective value). As there are approximations in deriving $\mathbf{\Lambda}_{\mathbf{B}}$ and $\mathbf{\Lambda}_{\mathbf{C}}$, algorithm \ref{algorithm3} may or may not be minimizing the objective eq.~\ref{eq39}. Further, the auxiliary function used by the authors to prove the nonincreasing property is for the algorithm in eqs.~\ref{eq44} -- \ref{eq46}, not for algorithm \ref{algorithm3}. So there is no guarantee that algorithm \ref{algorithm3} has the nonincreasing property. Figure \ref{fig2} shows error per iteration of algorithm \ref{algorithm3} in Reuters4 dataset (see section \ref{results} for detailed info about the dataset). As the algorithm \ref{algorithm3} not only does not have the nonincreasing property but also fails to minimize the objective function, it is clear that the assumptions used to obtain $\mathbf{\Lambda}_{\mathbf{B}}$ and $\mathbf{\Lambda}_{\mathbf{C}}$ are not acceptable.

\begin{algorithm}
\caption{: Original BNMtF algorithm.}
\label{algorithm3}
\begin{algorithmic}
\STATE Initialize $\mathbf{B}^{(0)}$, $\mathbf{C}^{(0)}$, and $\mathbf{S}^{(0)}$ with positive matrices to avoid zero locking.
\FOR {$k=0,\ldots,K$}
\STATE \begin{align*} b_{mp}^{(k+1)} &\longleftarrow b_{mp}^{(k)}\frac{\big(\mathbf{AC}^{(k)T}\mathbf{S}^{(k)T}\big)_{mp}}{\big(\mathbf{B}^{(k)}\mathbf{B}^{(k)T}\mathbf{A}\mathbf{C}^{(k)T}\mathbf{S}^{(k)T}\big)_{mp}+\delta}\;\;\forall m,p \\
c_{qn}^{(k+1)} &\longleftarrow c_{qn}^{(k)}\frac{\big(\mathbf{S}^{(k)T}\mathbf{B}^{(k+1)T}\mathbf{A}\big)_{qn}}{\big(\mathbf{S}^{(k)T}\mathbf{B}^{(k+1)T}\mathbf{A}\mathbf{C}^{(k)T}\mathbf{C}^{(k)}\big)_{qn}+\delta}\;\;\forall q,n \\
s_{pq}^{(k+1)} &\longleftarrow s_{pq}^{(k)}\frac{\big(\mathbf{B}^{(k+1)T}\mathbf{A}\mathbf{C}^{(k+1)T}\big)_{pq}}{\big(\mathbf{B}^{(k+1)T}\mathbf{B}^{(k+1)}\mathbf{S}^{(k)}\mathbf{C}^{(k+1)}\mathbf{C}^{(k+1)T}\big)_{pq}+\delta}\;\;\forall p,q \end{align*}
\ENDFOR
\end{algorithmic}
\end{algorithm}

\begin{figure}
\begin{center}
\includegraphics[width=0.6\textwidth]{./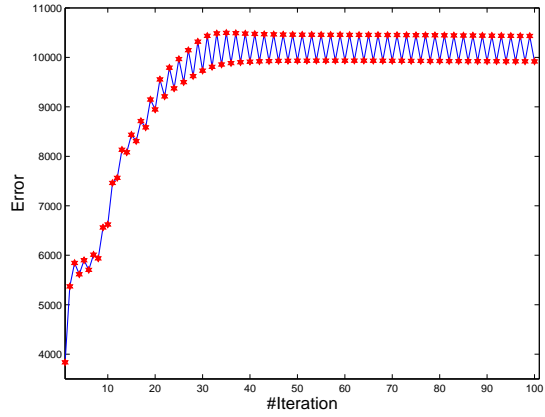}
\caption{Error per iteration of algorithm \ref{algorithm3} for Reuters4 dataset.}
\label{fig2}
\end{center}
\end{figure}

\section{A Convergent Algorithm for BNMtF}\label{convergent}

We define BNMtF problem with following equation:
\begin{align}
&\min_{\mathbf{B},\mathbf{C},\mathbf{S}}J(\mathbf{B},\mathbf{C},\mathbf{S})=\frac{1}{2}\|\mathbf{A}-\mathbf{B}\mathbf{S}\mathbf{C}\|_{F}^{2} + \frac{\alpha}{2}\|\mathbf{CC}^T-\mathbf{I}\|_{F}^{2} + \frac{\beta}{2}\|\mathbf{B}^T\mathbf{B}-\mathbf{I}\|_{F}^{2} \label{eq90}\\
&\mathrm{s.t.}\;\, \mathbf{B}\succeq\mathbf{0},\mathbf{C}\succeq\mathbf{0},\mathbf{S}\succeq\mathbf{0}, \nonumber
\end{align}
with $\alpha$ and $\beta$ are regularization parameters to adjust the degree of orthogonality of $\mathbf{C}$ and $\mathbf{B}$ respectively. The KKT function of the objective can be written as:
\begin{align*}
L(\mathbf{B},\mathbf{C},\mathbf{S})=\;&J(\mathbf{B},\mathbf{C},\mathbf{S})-\mathrm{tr}\;\big(\mathbf{\Gamma}_{\mathbf{B}}\mathbf{B}^T\big)-\mathrm{tr}\;\big(\mathbf{\Gamma}_{\mathbf{S}}\mathbf{S}^T\big)-\mathrm{tr}\;\big(\mathbf{\Gamma}_{\mathbf{C}}\mathbf{C}\big). 
\end{align*}
And the KKT conditions are:
\begin{equation}
\begin{array}{rrr}
\mathbf{B}^*\succeq\mathbf{0}, & \mathbf{S}^*\succeq\mathbf{0}, & \mathbf{C}^*\succeq\mathbf{0}, \\
\nabla_{\mathbf{B}}J(\mathbf{B}^*)=\mathbf{\Gamma}_{\mathbf{B}}\succeq\mathbf{0}, & \nabla_{\mathbf{S}}J(\mathbf{S}^*)=\mathbf{\Gamma}_{\mathbf{S}}\succeq\mathbf{0}, & \nabla_{\mathbf{C}}J(\mathbf{C}^*)=\mathbf{\Gamma}_{\mathbf{C}}^T\succeq\mathbf{0},\\
\nabla_{\mathbf{B}}J(\mathbf{B}^*)\odot\mathbf{B}^*=\mathbf{0}, & \nabla_{\mathbf{S}}J(\mathbf{S}^*)\odot\mathbf{S}^*=\mathbf{0}, & \nabla_{\mathbf{C}}J(\mathbf{C}^*)\odot\mathbf{C}^*=\mathbf{0}, \label{eq91}
\end{array}
\end{equation}
where
\begin{align*}
\nabla_{\mathbf{B}}J(\mathbf{B})&=\mathbf{BSCC}^T\mathbf{S}^T-\mathbf{AC}^T\mathbf{S}^T+\beta\mathbf{BB}^T\mathbf{B}-\beta\mathbf{B}, \\
\nabla_{\mathbf{C}}J(\mathbf{C})&=\mathbf{S}^T\mathbf{B}^T\mathbf{BSC}-\mathbf{S}^T\mathbf{B}^T\mathbf{A}+\alpha\mathbf{CC}^T\mathbf{C}-\alpha\mathbf{C}, \\
\nabla_{\mathbf{S}}J(\mathbf{S})&=\mathbf{B}^T\mathbf{BSCC}^T-\mathbf{B}^T\mathbf{AC}^T.
\end{align*}

As shown by Lee \& Seung \cite{Lee2}, a multiplicative update rules (MUR) based algorithm can be derived by utilizing complementary slackness in the KKT conditions (the last line in equations \ref{eq91}). Therefore, a MUR based algorithm for our BNMtF problem can be written as:
\begin{align*} 
b_{mp} &\longleftarrow b_{mp}\frac{\big(\mathbf{AC}^T\mathbf{S}^T+\beta\mathbf{B}\big)_{mp}}{\big(\mathbf{BSCC}^T\mathbf{S}^T+\beta\mathbf{BB}^T\mathbf{B}\big)_{mp}}, \\
c_{qn} &\longleftarrow c_{qn}\frac{\big(\mathbf{S}^T\mathbf{B}^T\mathbf{A}+\alpha\mathbf{C}\big)_{qn}}{\big(\mathbf{S}^T\mathbf{B}^T\mathbf{BSC}+\alpha\mathbf{CC}^T\mathbf{C}\big)_{qn}}, \\
s_{pq} &\longleftarrow s_{pq}\frac{\big(\mathbf{B}^T\mathbf{AC}^T\big)_{pq}}{\big(\mathbf{B}^T\mathbf{BSCC}^T\big)_{pq}}.
\end{align*} The complete MUR algorithm is given in algorithm \ref{algorithm7}, and the additive update rules (AUR) version is given in algorithm \ref{algorithm8} (please see e.g., \cite{CJLin2,Mirzal2} for detailed discussion about AUR based NMF algorithms). As shown, the AUR algorithm can be initialized using nonnegative matrices as it does not inherit zero locking problems from its MUR algorithm counterpart.

\begin{algorithm}
\caption{: The MUR based algorithm for BNMtF problem in eq.~\ref{eq90}.}
\label{algorithm7}
\begin{algorithmic}
\STATE Initialize $\mathbf{B}^{(0)}$, $\mathbf{C}^{(0)}$, and $\mathbf{S}^{(0)}$ with positive matrices to avoid zero locking.
\FOR {$k=0,\ldots,K$}
\STATE \begin{align*} b_{mp}^{(k+1)} &\longleftarrow b_{mp}^{(k)}\frac{\big(\mathbf{AC}^{(k)T}\mathbf{S}^{(k)T} + \beta\mathbf{B}^{(k)}\big)_{mp}}{\big(\mathbf{B}^{(k)}\mathbf{S}^{(k)}\mathbf{C}^{(k)}\mathbf{C}^{(k)T}\mathbf{S}^{(k)T}+\beta\mathbf{B}^{(k)}\mathbf{B}^{(k)T}\mathbf{B}^{(k)}\big)_{mp}+\delta}\;\;\forall m,p %\label{eq95}
\\
c_{qn}^{(k+1)} &\longleftarrow c_{qn}^{(k)}\frac{\big(\mathbf{S}^{(k)T}\mathbf{B}^{(k+1)T}\mathbf{A}+\alpha\mathbf{C}^{(k)}\big)_{qn}}{\big(\mathbf{S}^{(k)T}\mathbf{B}^{(k+1)T}\mathbf{B}^{(k+1)}\mathbf{S}^{(k)}\mathbf{C}^{(k)}+\alpha\mathbf{C}^{(k)}\mathbf{C}^{(k)T}\mathbf{C}^{(k)}\big)_{qn}+\delta}\;\;\forall q,n %\label{eq96}
\\
s_{pq}^{(k+1)} &\longleftarrow s_{pq}^{(k)}\frac{\big(\mathbf{B}^{(k+1)T}\mathbf{A}\mathbf{C}^{(k+1)T}\big)_{pq}}{\big(\mathbf{B}^{(k+1)T}\mathbf{B}^{(k+1)}\mathbf{S}^{(k)}\mathbf{C}^{(k+1)}\mathbf{C}^{(k+1)T}\big)_{pq}+\delta}\;\;\forall p,q %\label{eq97} 
\end{align*}
\ENDFOR
\end{algorithmic}
\end{algorithm}

\begin{algorithm}
\caption{: The AUR based algorithm for BNMtF problem in eq.~\ref{eq90}.}
\label{algorithm8}
\begin{algorithmic}
\STATE Initialize $\mathbf{B}^{(0)}$, $\mathbf{C}^{(0)}$, and $\mathbf{S}^{(0)}$ with nonnegative matrices.
\FOR {$k=0,\ldots,K$}
\STATE \begin{align} b_{mp}^{(k+1)} \longleftarrow & \;b_{mp}^{(k)} - \frac{\bar{b}_{mp}^{(k)}\times\nabla_{\mathbf{B}}J(\mathbf{B}^{(k)},\mathbf{S}^{(k)},\mathbf{C}^{(k)})_{mp}}{\big(\mathbf{\bar{B}}^{(k)}\mathbf{S}^{(k)}\mathbf{C}^{(k)}\mathbf{C}^{(k)T}\mathbf{S}^{(k)T}+\beta\mathbf{\bar{B}}^{(k)}\mathbf{\bar{B}}^{(k)T}\mathbf{\bar{B}}^{(k)}\big)_{mp}+\delta_{\mathbf{B}}^{(k)}}\;\;\forall m,p \label{eq98}\\
c_{qn}^{(k+1)} \longleftarrow & \;c_{qn}^{(k)} - \frac{\bar{c}_{qn}^{(k)}\times\nabla_{\mathbf{C}}J(\mathbf{B}^{(k+1)},\mathbf{S}^{(k)},\mathbf{C}^{(k)})_{qn}}{\big(\mathbf{S}^{(k)T}\mathbf{B}^{(k+1)T}\mathbf{B}^{(k+1)}\mathbf{S}^{(k)}\mathbf{\bar{C}}^{(k)}+\alpha\mathbf{\bar{C}}^{(k)}\mathbf{\bar{C}}^{(k)T}\mathbf{\bar{C}}^{(k)}\big)_{qn}+\delta_{\mathbf{C}}^{(k)}} \;\;\forall q,n \label{eq99} \\
s_{pq}^{(k+1)} \longleftarrow & \;s_{pq}^{(k)} - \frac{\bar{s}_{pq}^{(k)}\times\nabla_{\mathbf{S}}J(\mathbf{B}^{(k+1)},\mathbf{S}^{(k)},\mathbf{C}^{(k+1)})_{pq}}{\big(\mathbf{B}^{(k+1)T}\mathbf{B}^{(k+1)}\mathbf{\bar{S}}^{(k)}\mathbf{C}^{(k+1)}\mathbf{C}^{(k+1)T}\big)_{pq}+\delta_{\mathbf{S}}^{(k)}} \;\;\forall p,q \label{eq100}
\end{align}
\ENDFOR
\end{algorithmic}
\end{algorithm}

There are $\bar{b}_{mp}^{(k)}$, $\bar{c}_{qn}^{(k)}$, and $\bar{s}_{pq}^{(k)}$ in algorithm \ref{algorithm8} which are modifications to avoid zero locking problems. The following gives their definitions.
\begin{align*}
\bar{b}_{mp}^{(k)} &\equiv \left\{
  \begin{array}{rl}
    b_{mp}^{(k)}\hspace{13 mm} & \text{if  } \nabla_{\mathbf{B}}J\big(\mathbf{B}^{(k)},\mathbf{S}^{(k)},\mathbf{C}^{(k)}\big)_{mp} \ge 0 \\
    \max(b_{mp}^{(k)}, \sigma) & \text{otherwise}\end{array}, \right. \\
\bar{c}_{qn}^{(k)} &\equiv \left\{
  \begin{array}{rl}
    c_{qn}^{(k)}\hspace{13 mm} & \text{if  } \nabla_{\mathbf{C}}J\big(\mathbf{B}^{(k+1)},\mathbf{S}^{(k)},\mathbf{C}^{(k)}\big)_{qn} \ge 0 \\
    \max(c_{qn}^{(k)}, \sigma) & \text{otherwise}\end{array}, \right. \\
\bar{s}_{pq}^{(k)} &\equiv \left\{
  \begin{array}{rl}
    s_{pq}^{(k)}\hspace{13 mm} & \text{if  } \nabla_{\mathbf{S}}J\big(\mathbf{B}^{(k+1)},\mathbf{S}^{(k)},\mathbf{C}^{(k+1)}\big)_{pq} \ge 0 \\
    \max(s_{pq}^{(k)}, \sigma) & \text{otherwise}\end{array}, \right. 
\end{align*}
with $\sigma$ is a small positive number, $\mathbf{\bar{B}}$, $\mathbf{\bar{C}}$, and $\mathbf{\bar{S}}$ are matrices that contain $\bar{b}_{mp}$, $\bar{c}_{qn}$, and $\bar{s}_{pq}$ respectively. And there are also the variables $\delta_{\mathbf{B}}$, $\delta_{\mathbf{C}}$, and $\delta_{\mathbf{S}}$ in algorithm \ref{algorithm8} that play a crucial role in guaranteeing the convergence of the algorithm (see appendix for the details).

Algorithm \ref{algorithm9} shows modifications to algorithm \ref{algorithm8} in order to guarantee the convergence as suggested by theorem \ref{theorem30}, \ref{theorem31}, and \ref{theorem32} in appendix with step is a constant that determine how fast $\delta_{\mathbf{B}}^{(k)}$, $\delta_{\mathbf{C}}^{(k)}$, and $\delta_{\mathbf{S}}^{(k)}$ grow in order to satisfy the nonincreasing property. Note that we set the same step value for all sequences, but different values can also be employed. 

\begin{algorithm}
\caption{A Convergent algorithm for BNMtF problem.}
\label{algorithm9}
\begin{algorithmic}
\STATE Initialize $\mathbf{B}^{(0)}$, $\mathbf{C}^{(0)}$, and $\mathbf{S}^{(0)}$ with nonnegative matrices, and choose a small positive number for $\delta$ and an integer number for step.
\FOR {$k=0,\ldots,K$}
\STATE $ $
\STATE $\delta_{\mathbf{B}}^{(k)} \longleftarrow \delta$
\REPEAT
\STATE \begin{align*} b_{mp}^{(k+1)} \longleftarrow & \;b_{mp}^{(k)} - \frac{\bar{b}_{mp}^{(k)}\times\nabla_{\mathbf{B}}J(\mathbf{B}^{(k)},\mathbf{S}^{(k)},\mathbf{C}^{(k)})_{mp}}{\big(\mathbf{\bar{B}}^{(k)}\mathbf{S}^{(k)}\mathbf{C}^{(k)}\mathbf{C}^{(k)T}\mathbf{S}^{(k)T}+\beta\mathbf{\bar{B}}^{(k)}\mathbf{\bar{B}}^{(k)T}\mathbf{\bar{B}}^{(k)}\big)_{mp}+\delta_{\mathbf{B}}^{(k)}}\;\;\forall m,p 
\\ 
\delta_{\mathbf{B}}^{(k)} \longleftarrow & \;\delta_{\mathbf{B}}^{(k)}\times \mathrm{step} \end{align*}
\UNTIL {$J\big( \mathbf{B}^{(k+1)},\mathbf{S}^{(k)}, \mathbf{C}^{(k)} \big) \le J\big(\mathbf{B}^{(k)},\mathbf{S}^{(k)}, \mathbf{C}^{(k)} \big)$}
\STATE $ $
\STATE $\delta_{\mathbf{C}}^{(k)} \longleftarrow \delta$
\REPEAT
\STATE \begin{align*} c_{qn}^{(k+1)} \longleftarrow & \;c_{qn}^{(k)} - \frac{\bar{c}_{qn}^{(k)}\times\nabla_{\mathbf{C}}J(\mathbf{B}^{(k+1)},\mathbf{S}^{(k)},\mathbf{C}^{(k)})_{qn}}{\big(\mathbf{S}^{(k)T}\mathbf{B}^{(k+1)T}\mathbf{B}^{(k+1)}\mathbf{S}^{(k)}\mathbf{\bar{C}}^{(k)}+\alpha\mathbf{\bar{C}}^{(k)}\mathbf{\bar{C}}^{(k)T}\mathbf{\bar{C}}^{(k)}\big)_{qn}+\delta_{\mathbf{C}}^{(k)}} \;\;\forall q,n %\label{130} 
\\ 
\delta_{\mathbf{C}}^{(k)} \longleftarrow & \;\delta_{\mathbf{C}}^{(k)}\times \mathrm{step} \end{align*}
\UNTIL {$J\big( \mathbf{B}^{(k+1)},\mathbf{S}^{(k)}, \mathbf{C}^{(k+1)} \big) \le J\big(\mathbf{B}^{(k+1)},\mathbf{S}^{(k)}, \mathbf{C}^{(k)} \big)$}
\STATE $ $
\STATE $\delta_{\mathbf{S}}^{(k)} \longleftarrow \delta$
\REPEAT
\STATE \begin{align*} s_{pq}^{(k+1)} \longleftarrow & \;s_{pq}^{(k)} - \frac{\bar{s}_{pq}^{(k)}\times\nabla_{\mathbf{S}}J(\mathbf{B}^{(k+1)},\mathbf{S}^{(k)},\mathbf{C}^{(k+1)})_{pq}}{\big(\mathbf{B}^{(k+1)T}\mathbf{B}^{(k+1)}\mathbf{\bar{S}}^{(k)}\mathbf{C}^{(k+1)}\mathbf{C}^{(k+1)T}\big)_{pq}+\delta_{\mathbf{S}}^{(k)}} \;\;\forall p,q \\ \delta_{\mathbf{S}}^{(k)} \longleftarrow & \;\delta_{\mathbf{S}}^{(k)}\times \mathrm{step} \end{align*}
\UNTIL {$J\big( \mathbf{B}^{(k+1)},\mathbf{S}^{(k+1)}, \mathbf{C}^{(k+1)} \big) \le J\big(\mathbf{B}^{(k+1)},\mathbf{S}^{(k)}, \mathbf{C}^{(k+1)} \big)$}
\ENDFOR
\end{algorithmic}
\end{algorithm}

\section{Experimental results} \label{results}

We will now analyze the convergence of the proposed algorithms \ref{algorithm7} (\textbf{MU-B}) and \ref{algorithm9} (\textbf{AU-B}) numerically. However, as it is generally difficult to reach stationary point in an acceptable computational time, only the nonincreasing property (or lack of it) will be shown. And because the BNMtF was originally designed for clustering purpose, we will also analyze this property. As dataset, we use Reuters-21578 data corpus. The dataset is especially interesting because many NMF-based clustering methods are tested using it, e.g.: \cite{Shahnaz,Ding1,Xu}. Detailed discussion about the dataset and preprocessing steps can be found in our previous work \cite{Mirzal2}. In summary, datasets were formed by combining top 2, 4, 6, 8, 10, and 12 classes from the corpus. Table \ref{ch2:table3} summarizes the statistics of these test datasets, where \#doc, \#word, \%nnz, max, and min refer to the number of documents, the number of words, percentage of nonzero entries, maximum cluster size, and minimum cluster size respectively. And as the corpus is bipartite, clustering can be done either for documents or words. We will evaluate both document clustering and word clustering. For comparison, we use the following algorithms:
\begin{itemize}
\item standard NMF algorithm \cite{Lee2} $\rightarrow$ \textbf{LS},
\item original UNMF algorithm \cite{Ding1} $\rightarrow$ \textbf{D-U},
\item original BNMtF algorithm \cite{Ding1} $\rightarrow$ \textbf{D-B},
\item MUR based algorithm for UNMF, i.e.: algorithm 3 in \cite{Mirzal2} $\rightarrow$ \textbf{MU-U}, and
\item convergent algorithm for UNMF proposed in our previous work, i.e.: algorithm 4 in \cite{Mirzal2} $\rightarrow$ \textbf{AU-U}.
%\item MUR based algorithm for BNMtF, i.e.,: algorithm \ref{algorithm7} in this paper $\rightarrow$ \textbf{MU-B}.
\end{itemize}
All codes were written in Octave and executed under Linux platform using a notebook with 1.86 GHz Intel processor and 2 GB RAM.

\begin{table}
  \begin{center}
    \caption{Statistics of the test datasets.}
    \centering
    \begin{tabular}{lrrrrr}
    \hline
    Dataset &\#doc & \#word & \%nnz & max & min \\
    \hline
    Reuters2    & 6090 & 8547  & 0.363 & 3874 & 2216 \\
    Reuters4    & 6797 & 9900  & 0.353 & 3874 & 333 \\
    Reuters6    & 7354 & 10319 & 0.347 & 3874 & 269 \\
    Reuters8    & 7644 & 10596 & 0.340 & 3874 & 144 \\
    Reuters10   & 7887 & 10930 & 0.336 & 3874 & 114 \\
    Reuters12   & 8052 & 11172 & 0.333 & 3874 &  75 \\
    \hline
    \end{tabular}
    \label{ch2:table3}
  \end{center}
\end{table}

\subsection{The nonincreasing property} \label{nonincreasing}

Figures \ref{fig5}--\ref{fig8} show graphically the nonincreasing property (or lack of it) of MU-B and AU-B. Because there are two adjustable parameters, $\alpha$ and $\beta$, we fix one parameter while studying the other. Figure \ref{fig5} and \ref{fig6} show the results for fixed $\beta=1$, and figure \ref{fig7} and \ref{fig8} for fixed $\alpha=1$. As shown, while MU-B fails to show the nonincreasing property for large $\alpha$ and $\beta$ values, AU-B successfully preserves this property regardless of $\alpha$ and $\beta$ values. Note that we set $\delta=\sigma=10^{-8}$, and $\mathrm{step}=10$ for MU-B and AU-B in all experiments. A similar study for UNMF algorithms can be found in our previous work \cite{Mirzal2}.

\begin{figure}
 \begin{center}
  \subfigure[Small $\alpha$]{
   \includegraphics[width=0.45\textwidth]{./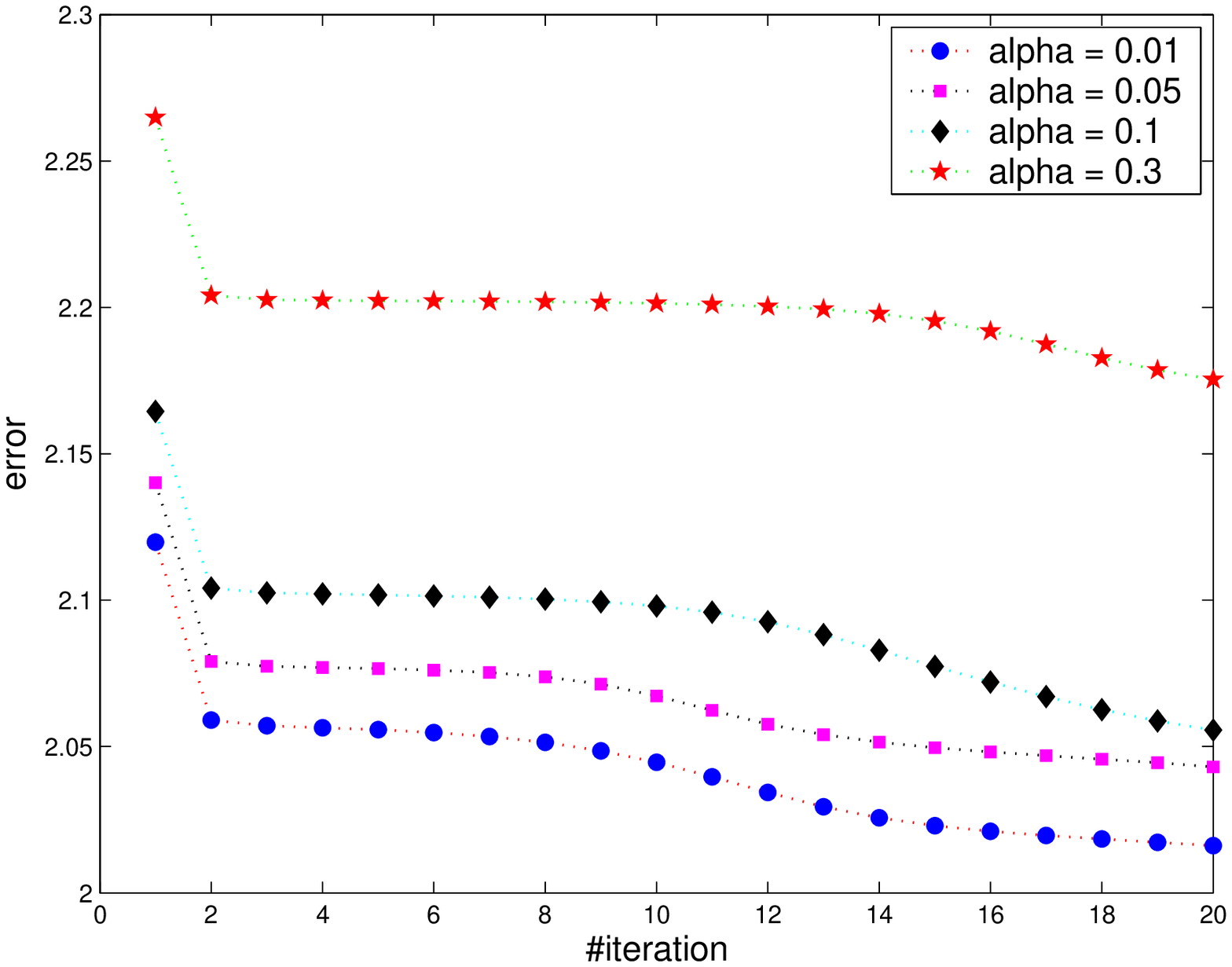}
   \label{fig5a}
  }
  \subfigure[Medium to large $\alpha$]{
   \includegraphics[width=0.45\textwidth]{./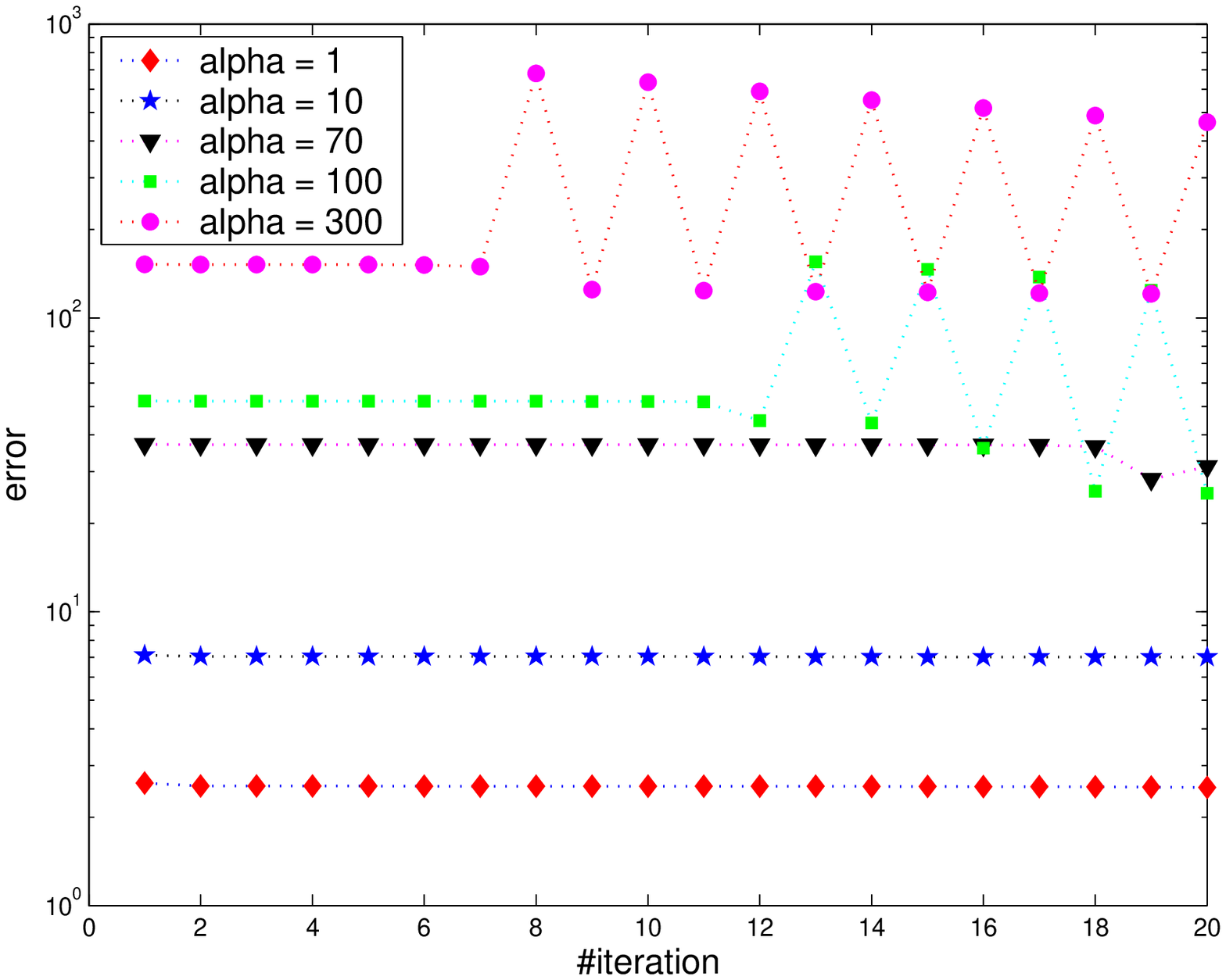}
   \label{fig5b}
  }
\\
  \subfigure[Some values of $\alpha$]{
   \includegraphics[width=0.7\textwidth]{./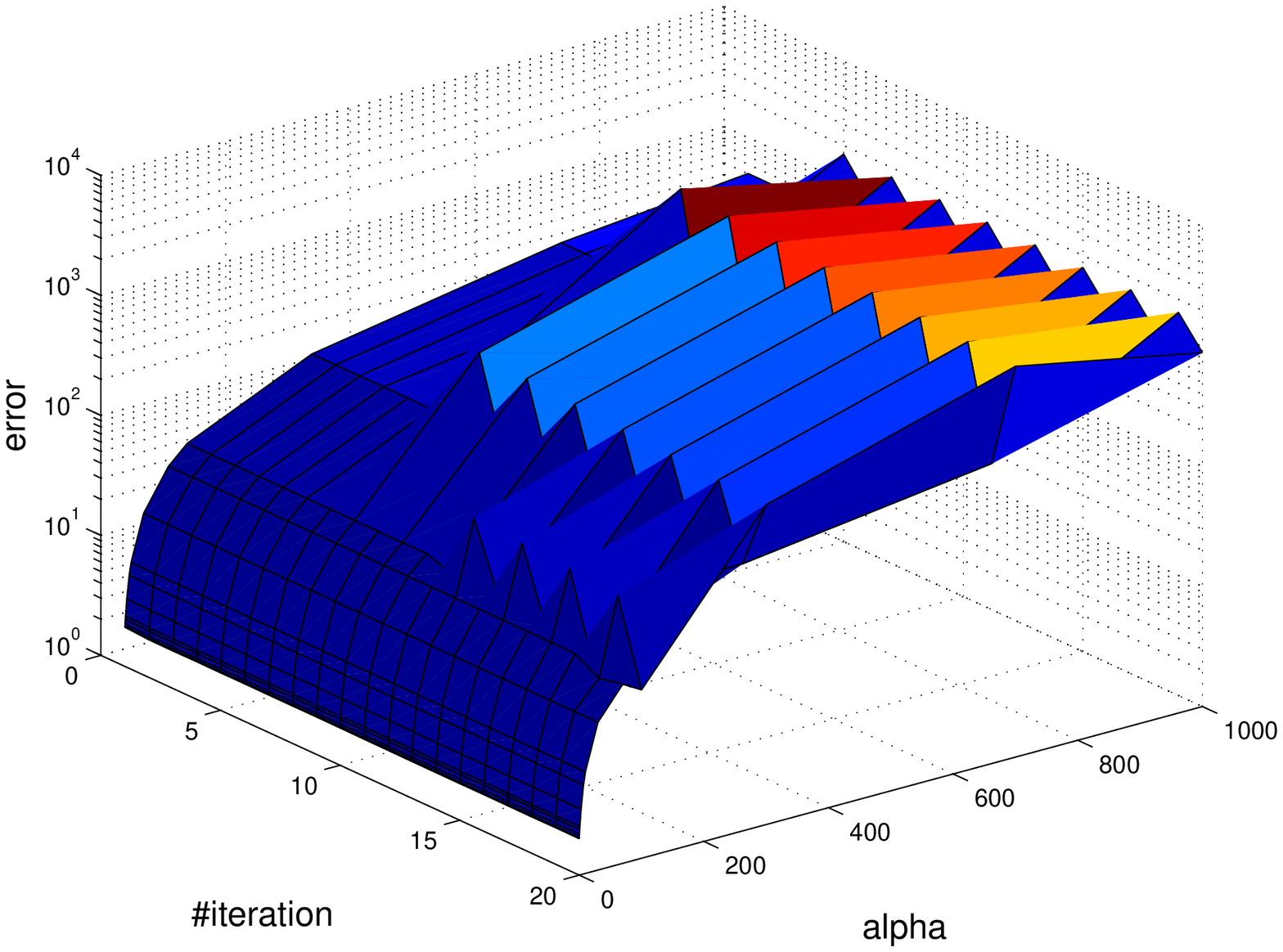}
   \label{fig5c}
  }
  \caption{MU-B($\alpha$) error per iteration for Reuters4 dataset ($\beta=1$).}
  \label{fig5}
 \end{center}
\end{figure}

\begin{figure}
 \begin{center}
  \subfigure[Small $\alpha$]{
   \includegraphics[width=0.45\textwidth]{./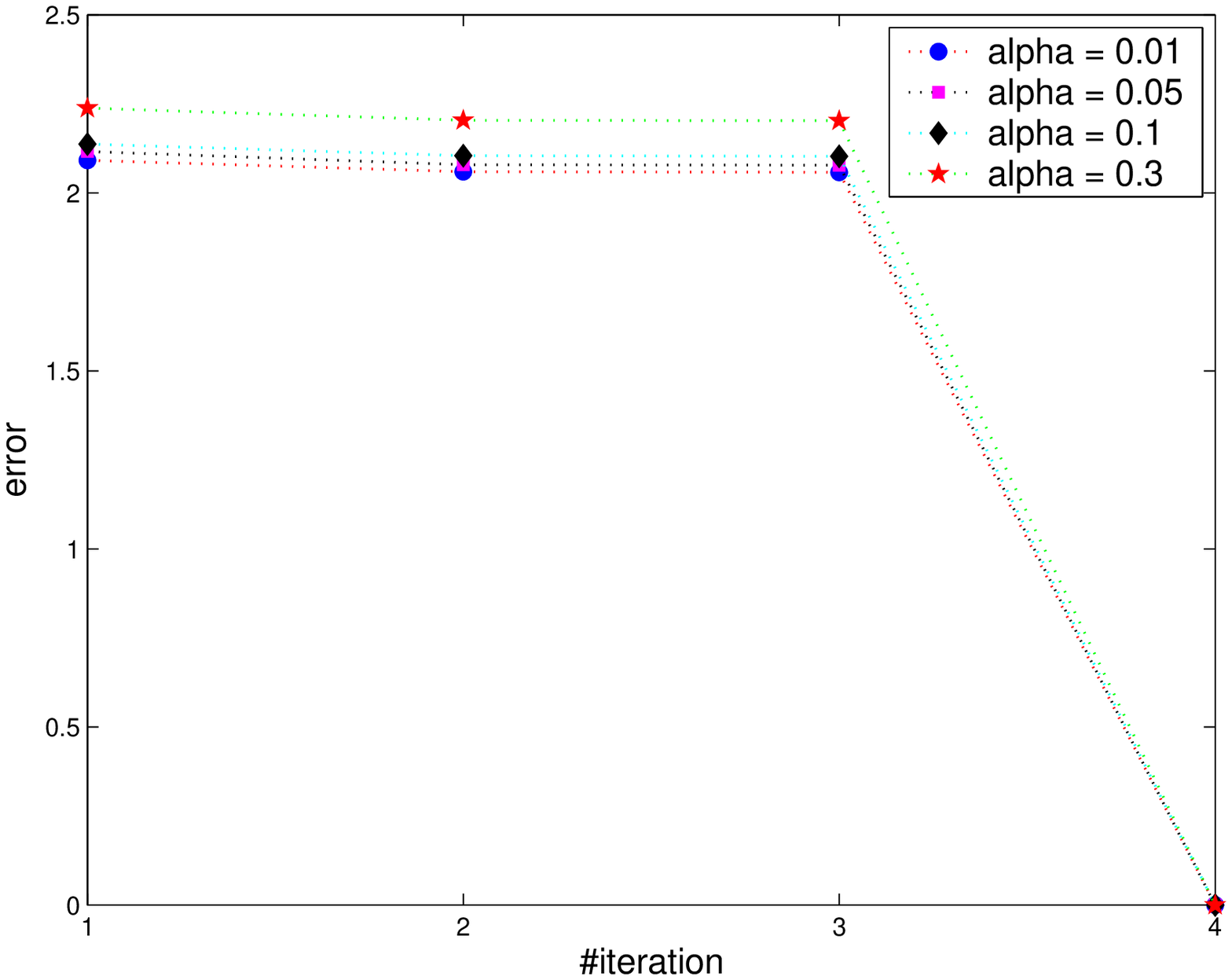}
   \label{fig6a}
  }
  \subfigure[Medium to large $\alpha$]{
   \includegraphics[width=0.45\textwidth]{./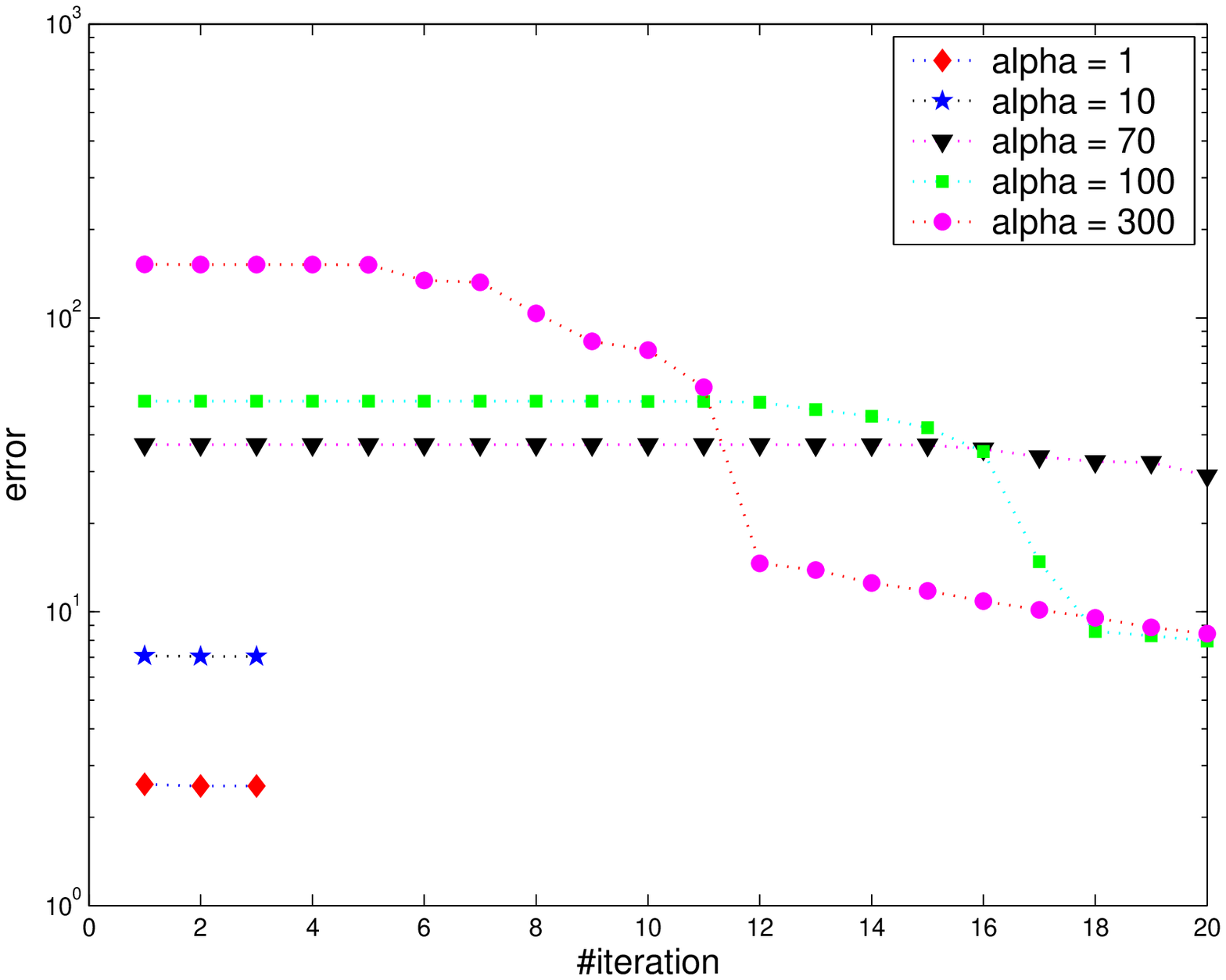}
   \label{fig6b}
  }
\\
  \subfigure[Some values of $\alpha$]{
   \includegraphics[width=0.7\textwidth]{./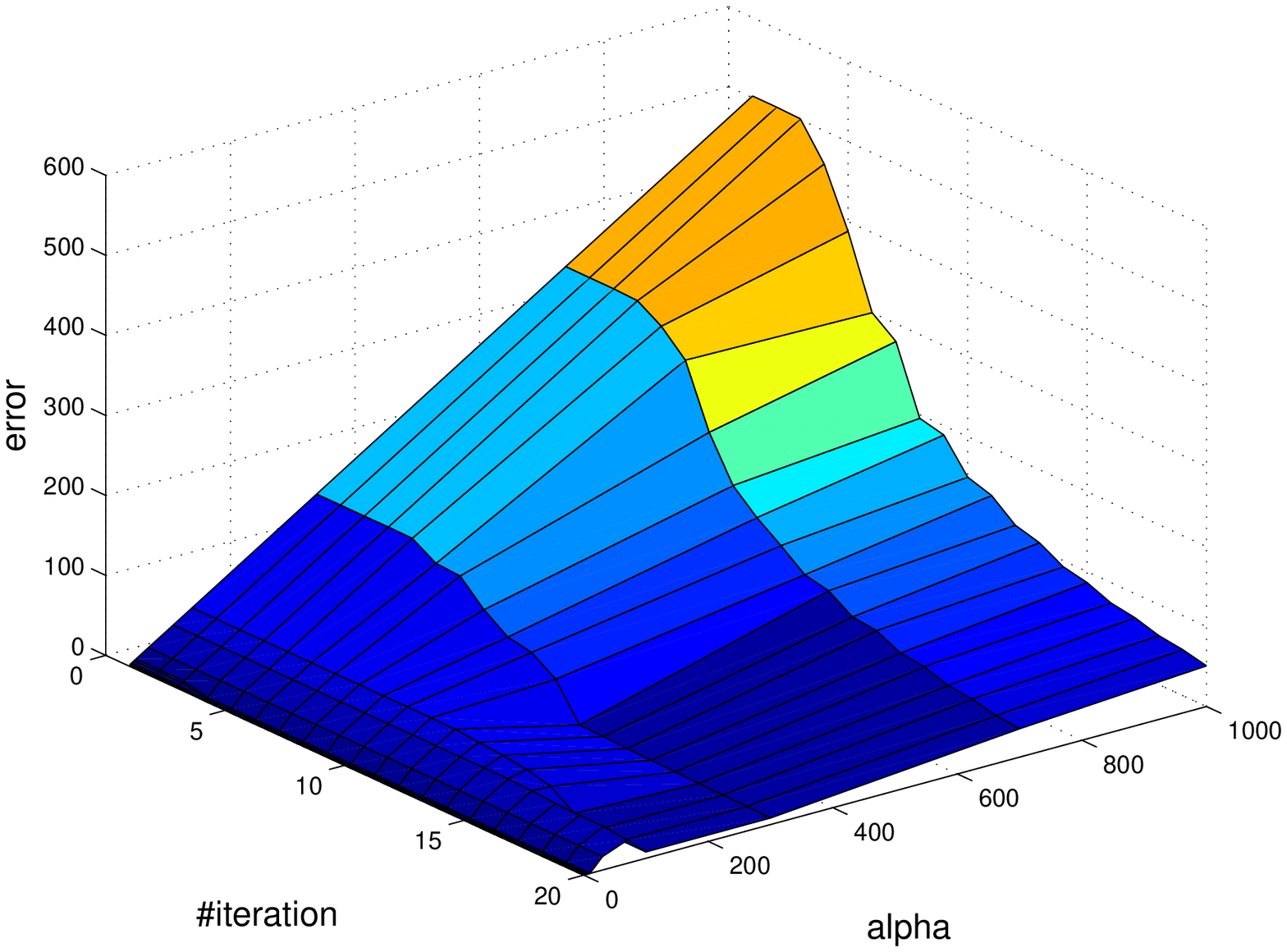}
   \label{fig6c}
  }
  \caption{AU-B($\alpha$) error per iteration for Reuters4 dataset ($\beta=1$).}
  \label{fig6}
 \end{center}
\end{figure}

\begin{figure}
 \begin{center}
  \subfigure[Small $\beta$]{
   \includegraphics[width=0.45\textwidth]{./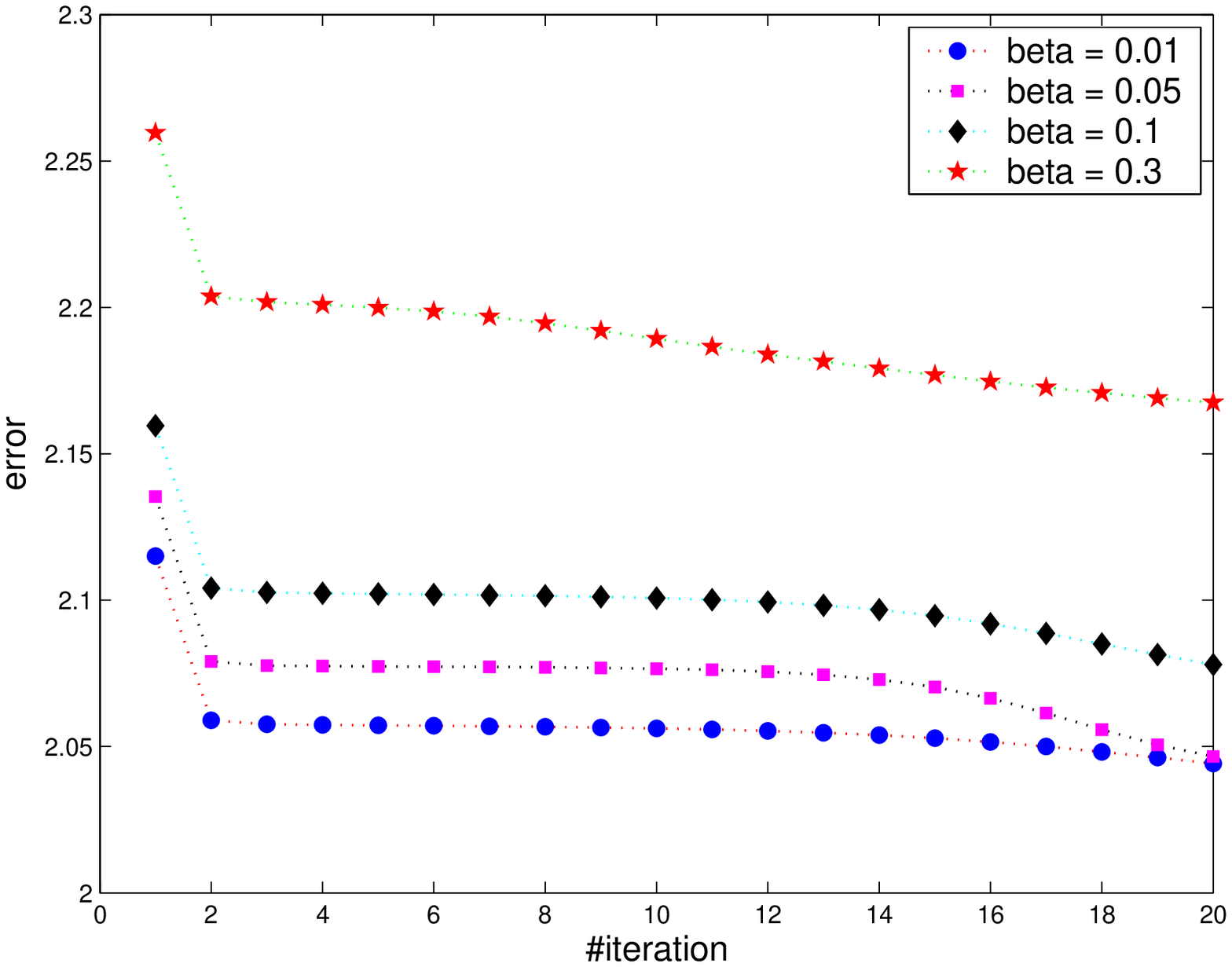}
   \label{fig7a}
  }
  \subfigure[Medium to large $\beta$]{
   \includegraphics[width=0.45\textwidth]{./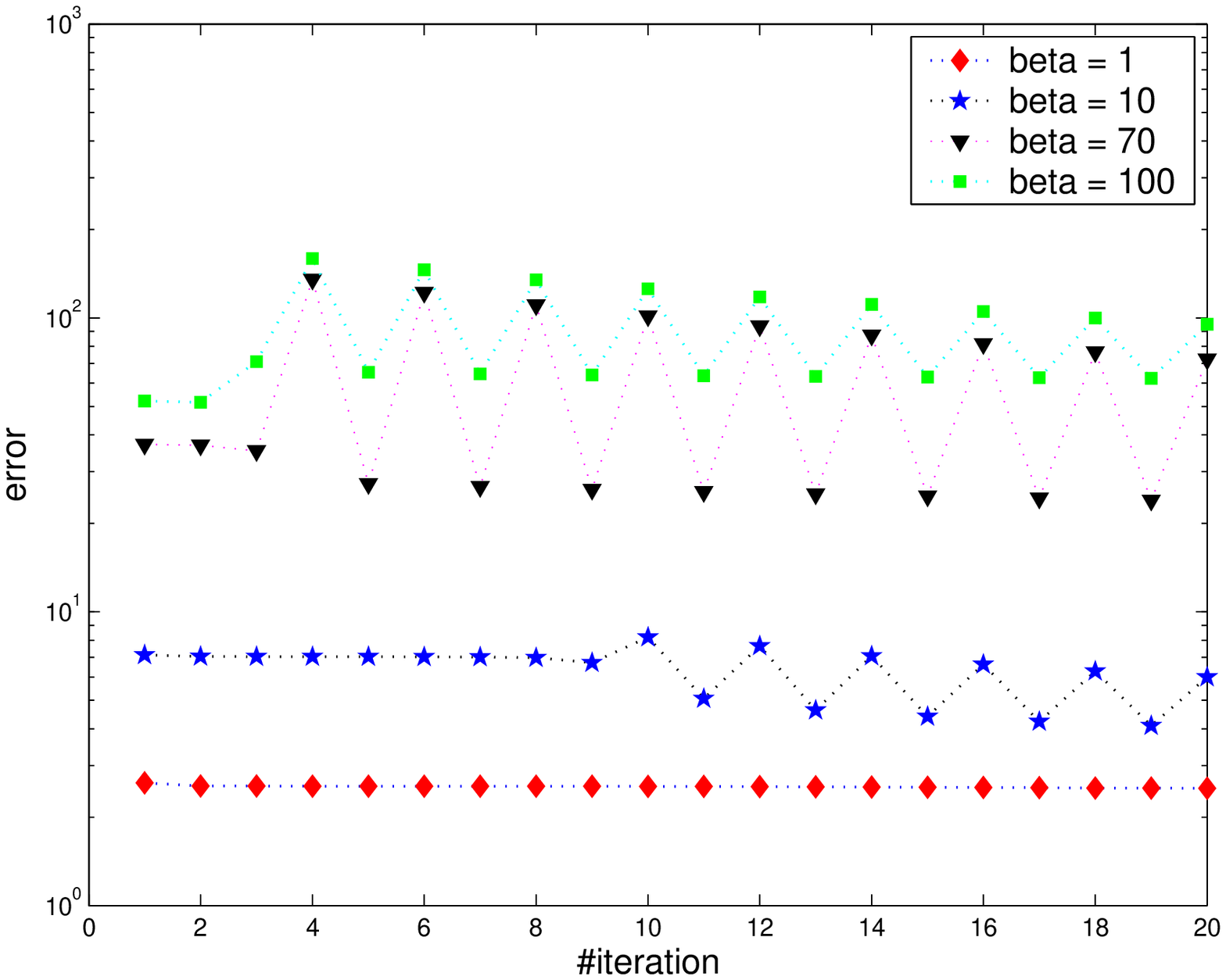}
   \label{fig7b}
  }
\\
  \subfigure[Some values of $\beta$]{
   \includegraphics[width=0.7\textwidth]{./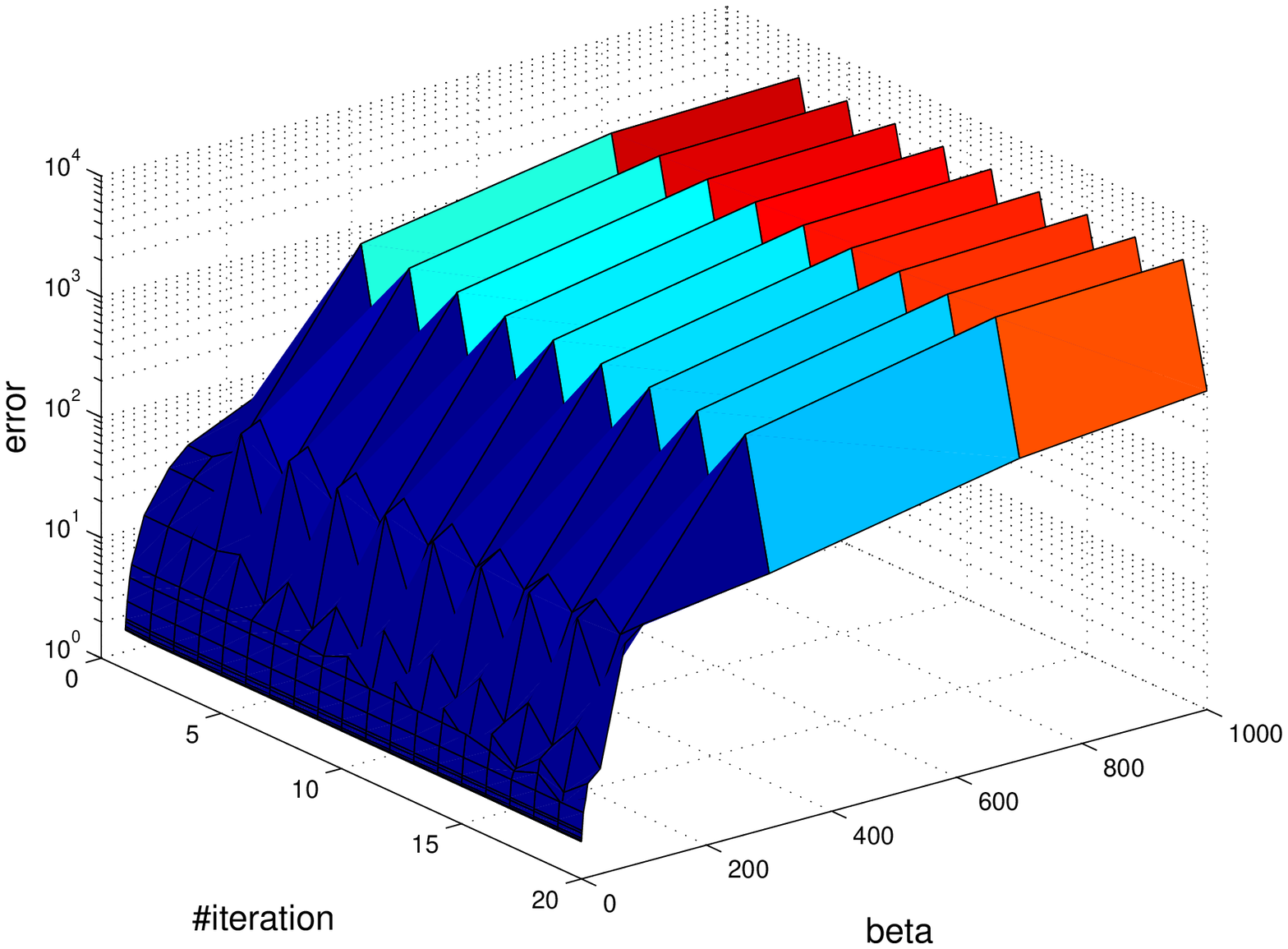}
   \label{fig7c}
  }
  \caption{MU-B($\beta$) error per iteration for Reuters4 dataset ($\alpha=1$).}
  \label{fig7}
 \end{center}
\end{figure}

\begin{figure}
 \begin{center}
  \subfigure[Small $\beta$]{
   \includegraphics[width=0.45\textwidth]{./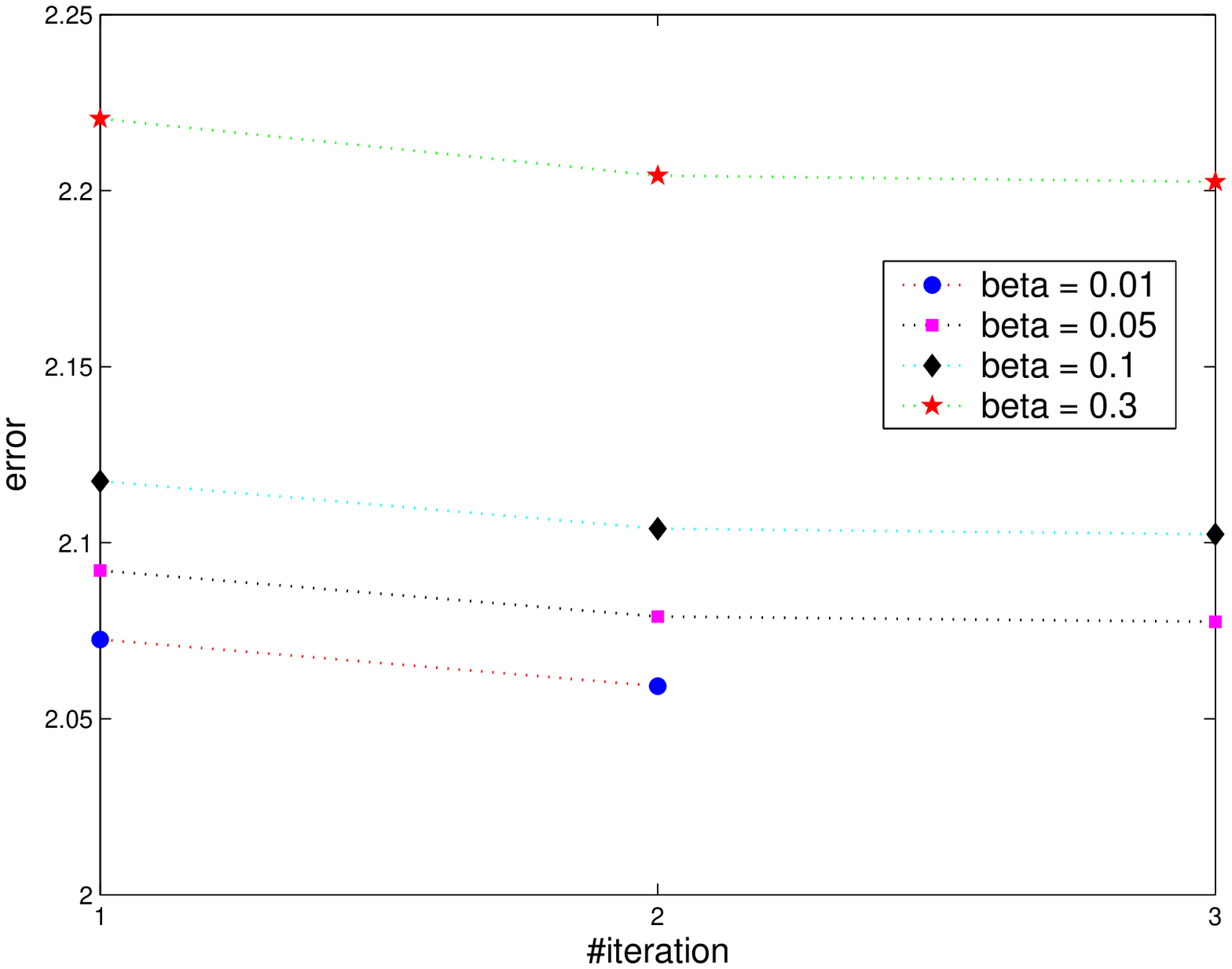}
   \label{fig8a}
  }
  \subfigure[Medium to large $\beta$]{
   \includegraphics[width=0.45\textwidth]{./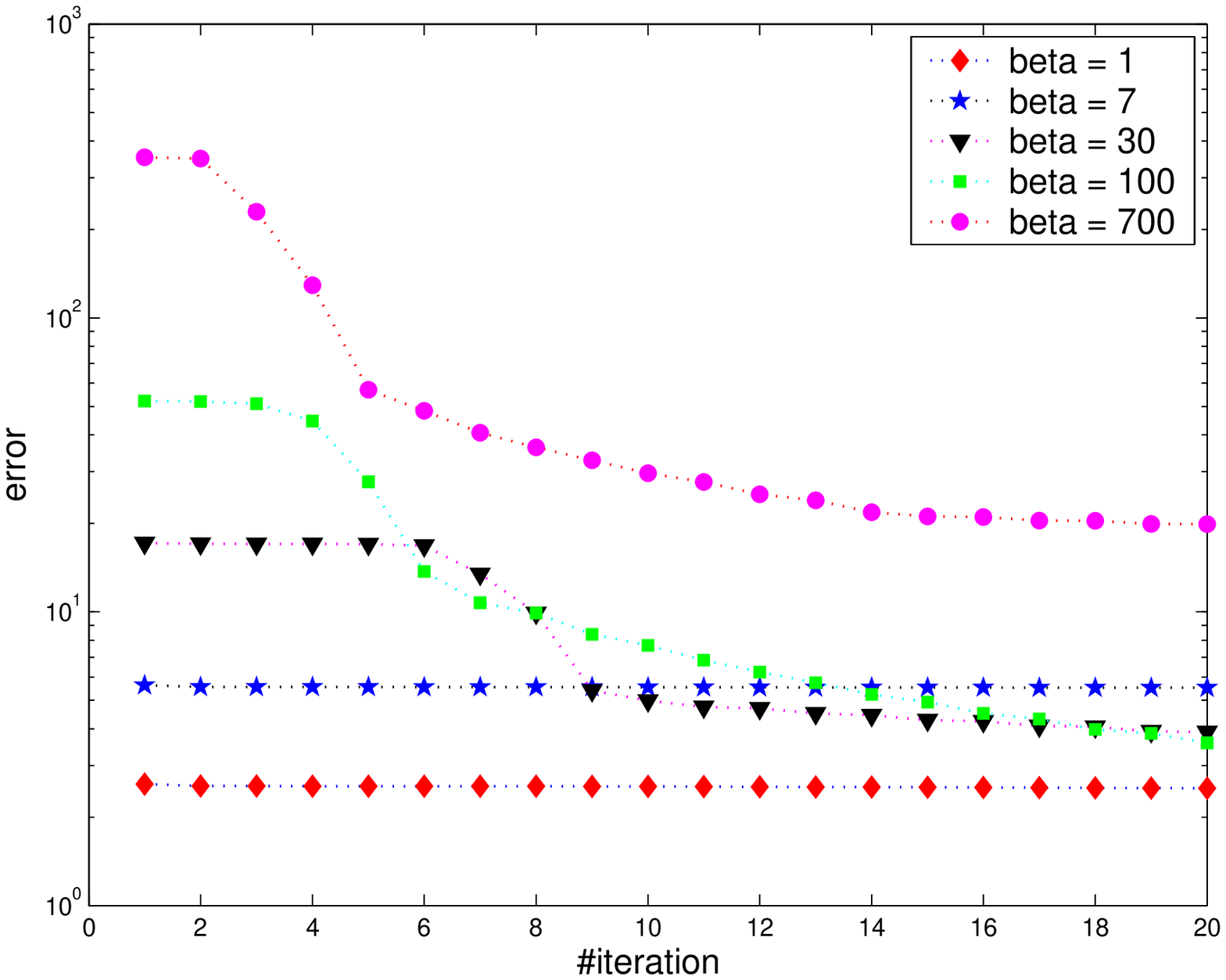}
   \label{fig8b}
  }
\\
  \subfigure[Some values of $\beta$]{
   \includegraphics[width=0.7\textwidth]{./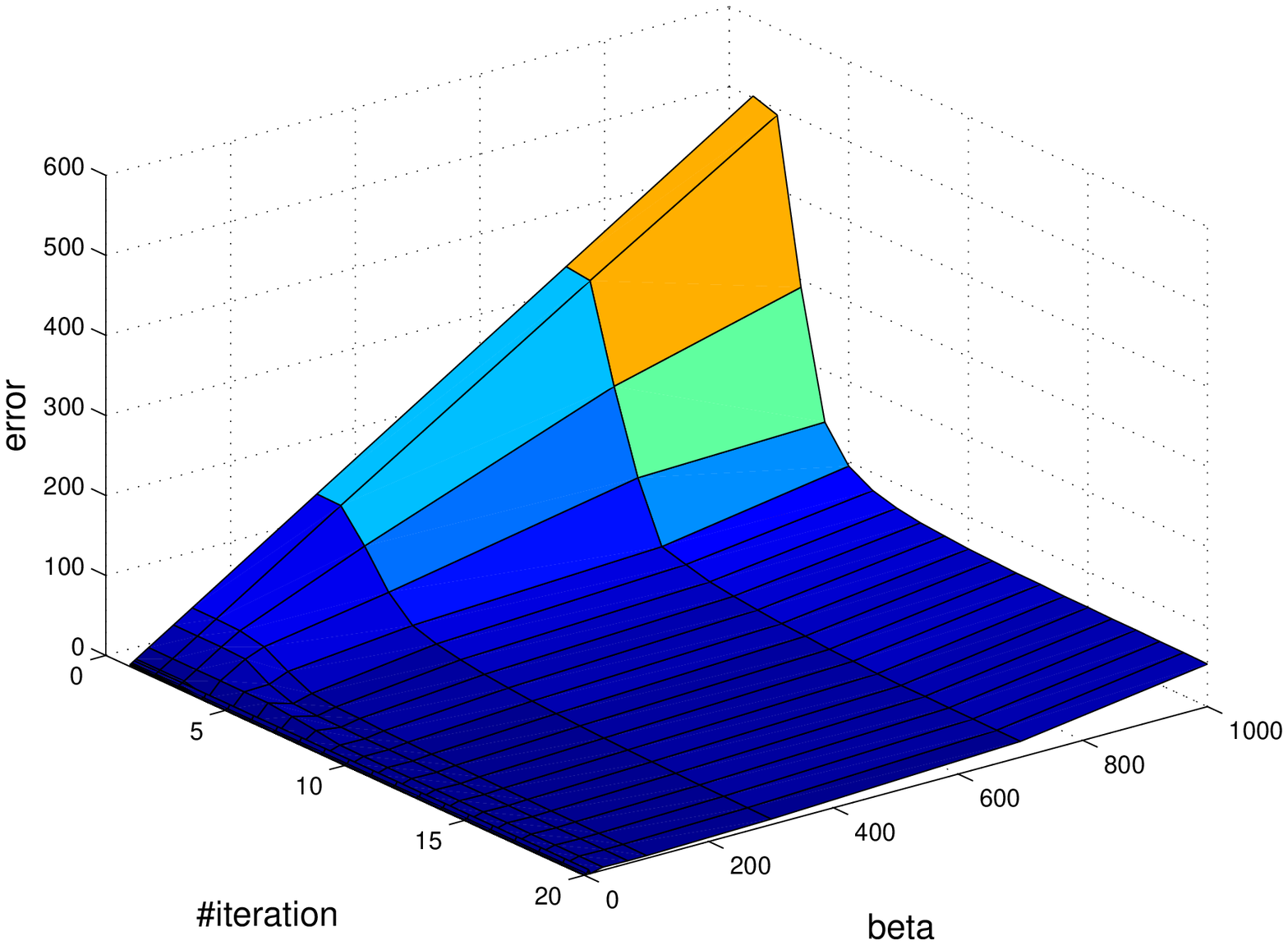}
   \label{fig8c}
  }
  \caption{AU-B($\beta$) error per iteration for Reuters4 dataset ($\alpha=1$).}
  \label{fig8}
 \end{center}
\end{figure}

Not surprisingly, there are computational consequence for this property for large $\alpha$ and/or $\beta$. Table \ref{table3} shows time comparisons between these algorithms for Reuters4 dataset. Note that, $\alpha$ or $\beta$ letter is appended to the algorithm's acronyms to indicate which parameter is being varied. For example AU-B($\alpha$) means AU-B with fixed $\beta$ and varied $\alpha$. Note that LS, D-U, and D-B do not have these parameters. As shown, the computational times of MU algorithms practically are independent from $\alpha$ and $\beta$ values. And AU algorithms seem to become slower for some large $\alpha$ or $\beta$. This probably because for large $\alpha$ or $\beta$ values the AU algorithms execute the inner iterations (shown as $\mathbf{repeat}$ $\mathbf{until}$ loops). Also, there are some anomalies in the AU-B($\alpha$) and AU-B($\beta$) cases where for some $\alpha$ or $\beta$ values, execution times are unexpectedly very fast. To investigate these, we display the number of iteration (\#iter) and the number of inner iteration (\#initer) for AU algorithms in table \ref{table4} (note that MU algorithms reach maximum predefined number of iteration for all cases: 20 iterations). As displayed, in the cases where AU algorithms performed worse than their MU counterparts, they executed the inner iterations. And when an AU algorithm performed better, then its \#iter is smaller than \#iter of the corresponding MU algorithm and the inner iteration was not executed. These explain the differences in computational times in table \ref{table3}.

\begin{table}
\renewcommand{\arraystretch}{1}
  \begin{center}
    \caption{Time comparison (in second) for Reuters4 dataset.}
    \centering
    \begin{tabular}{|r|r|r|r|r|r|r|}
    \hline
    $\alpha$ or $\beta$ & MU-U & AU-U & MU-B($\alpha$) & AU-B($\alpha$) & MU-B($\beta$) & AU-B($\beta$)\\
    \hline

    0.01   & 110 & 110 & 121 & 41.1 & 122 & 27.2\\
    0.05   & 110 & 110 & 121 & 40.9 & 121 & 40.7\\
     0.1   & 109 & 109 & 121 & 40.8 & 121 & 41.2\\
     0.3   & 110 & 109 & 121 & 40.4 & 121 & 41.1\\
     0.7   & 110 & 110 & 121 & 272  & 121 & 41.2\\
       1   & 110 & 110 & 121 & 40.8 & 121 & 273\\
       3   & 110 & 110 & 121 & 40.4 & 121 & 40.7\\
       7   & 110 & 110 & 121 & 40.4 & 121 & 273\\
      10   & 110 & 110 & 121 & 40.8 & 121 & 41.1\\
      30   & 109 & 110 & 121 & 272  & 121 & 442\\
      70   & 109 & 137 & 121 & 332  & 121 & 525\\
     100   & 110 & 232 & 121 & 382  & 121 & 605\\
     300   & 110 & 232 & 121 & 514  & 121 & 579\\
     700   & 110 & 461 & 121 & 607  & 121 & 606\\
    1000   & 110 & 411 & 121 & 606  & 121 & 365\\
    \hline
    \end{tabular}
    \label{table3}
  \end{center}
\end{table}

\begin{table}
\renewcommand{\arraystretch}{1}
  \begin{center}
    \caption{\#iter and \#initer of AU algorithms (Reuters4).}
    \centering
    \begin{tabular}{|r|r|r|r|r|r|r|}
    \hline
    $\alpha$ or $\beta$ & AU-U & AU-B($\alpha$) & AU-B($\beta$)\\
    \hline
    & \#iter / \#initer & \#iter / \#initer & \#iter / \#initer \\
    \hline
    0.01   & 20 / 0 &  3 / 0 &  2 / 0 \\
    0.05   & 20 / 0 &  3 / 0 &  3 / 0 \\
     0.1   & 20 / 0 &  3 / 0 &  3 / 0 \\
     0.3   & 20 / 0 &  3 / 0 &  3 / 0 \\
     0.7   & 20 / 0 & 20 / 0 &  3 / 0 \\
       1   & 20 / 0 &  3 / 0 & 20 / 0 \\
       3   & 20 / 0 &  3 / 0 &  3 / 0 \\
       7   & 20 / 0 &  3 / 0 & 20 / 0 \\
      10   & 20 / 0 &  3 / 0 &  3 / 0 \\
      30   & 20 / 0 & 20 / 0 & 20 / 44 \\
      70   & 20 / 7 & 20 / 23 & 20 / 66 \\
     100   & 20 / 32 & 20 / 22 & 20 / 88 \\
     300   & 20 / 32 & 20 / 65 & 20 / 81 \\
     700   & 20 / 92 & 20 / 75 & 20 / 88 \\
    1000   & 20 / 79 & 20 / 90 & 20 / 24 \\
    \hline
    \end{tabular}
    \label{table4}
  \end{center}
\end{table}

\subsection{The minimization slopes}

The minimization slopes of MU and AU algorithms are important to be studied as the algorithms can be slow to settle. As shown by Lin \cite{CJLin}, LS is very fast to minimize the objective for some first iterations and then settles. In table \ref{table5}, we display errors for some first iterations for LS, MU-U, AU-U, MU-B, and AU-B (D-U and D-B do not have the nonincreasing property). Note that error0 refers to the initial error before the algorithms start running, and error$n$ is the error at $n$-th iteration. As shown, all algorithms are exceptionally very good at reducing errors in the first iterations. But then, the improvements are rather negligible with respect to the corresponding first improvements and the sizes of the datasets. Accordingly, we set maximum number of iteration to 20 for the next experiments. Note that, in this case AU-B converged at the third iteration.

\begin{table}
 \begin{center}
   \caption{Errors for some first iterations (Reuters4).}
   \centering
   \begin{tabular}{|l|r|r|r|r|r|r|}
   \hline
   & error0 & error1 & error2 & error3 & error4 & error5 \\
   \hline

   LS   & 1373  & 0.476 & 0.474 & 0.472 & 0.469 & 0.466 \\
   MU-U & 4652  & 1.681 & 1.603 & 1.596 & 1.591 & 1.583 \\
   AU-U & 4657  & 1.681 & 1.605 & 1.595 & 1.586 & 1.573 \\
   MU-B & 12474 & 2.164 & 2.104 & 2.103 & 2.102 & 2.102 \\
   AU-B & 12680 & 2.137 & 2.104 & 2.103 & -     & -     \\
   \hline
  \end{tabular}
  \label{table5}
 \end{center}
\end{table}

\subsection{Determining $\alpha$ and $\beta$}

In the proposed algorithms, there are two dataset-dependent parameters, $\alpha$ and $\beta$, that have to be learned first. Because orthogonal NMFs are introduced to improve clustering capability of the standard NMF \cite{Ding1}, these parameters will be learned based on clustering results on test dataset. We use Reuters4 for this purpose. These parameters do not exist in the original orthogonal NMFs nor in the other orthogonal NMF algorithms \cite{Yoo1,Yoo2,Choi}. However, we notice that our formulations resemble sparse NMF formulation \cite{HKim2,JKim,HKim}, or in general case also known as constrained NMF \cite{Pauca}. As shown in \cite{HKim2,HKim,JKim}, sparse NMF usually can give good results if $\alpha$ and/or $\beta$ are rather small positive numbers. To determine $\alpha$ and $\beta$, we evaluate clustering qualities produced by our algorithms as $\alpha$ or $\beta$ values grow measured by the standard clustering metrics: \emph{mutual information} (MI), \emph{entropy} (E), \emph{purity} (P), and \emph{Fmeasure} (F). The detailed discussions on these metrics can be found in \cite{Mirzal2}. Note that while larger MI, F, and P indicate better results, smaller E indicates better results. As shown in figure \ref{fig9}, for UNMF algorithms (MU-U and AU-U) $\alpha=0.1$ seems to be a good choice. For MU-B it seems that $\alpha=0.1$ and $\beta=3$ are acceptable settings. And for AU-B, $\alpha=0.7$ and $\beta=1$ seem to be good settings. Based on this results, we decided to set $\alpha=0.1$ and $\beta=1$ for all datasets and algorithms.

\begin{figure}
 \begin{center}

  \subfigure[MU-U]{
   \includegraphics[width=0.45\textwidth]{./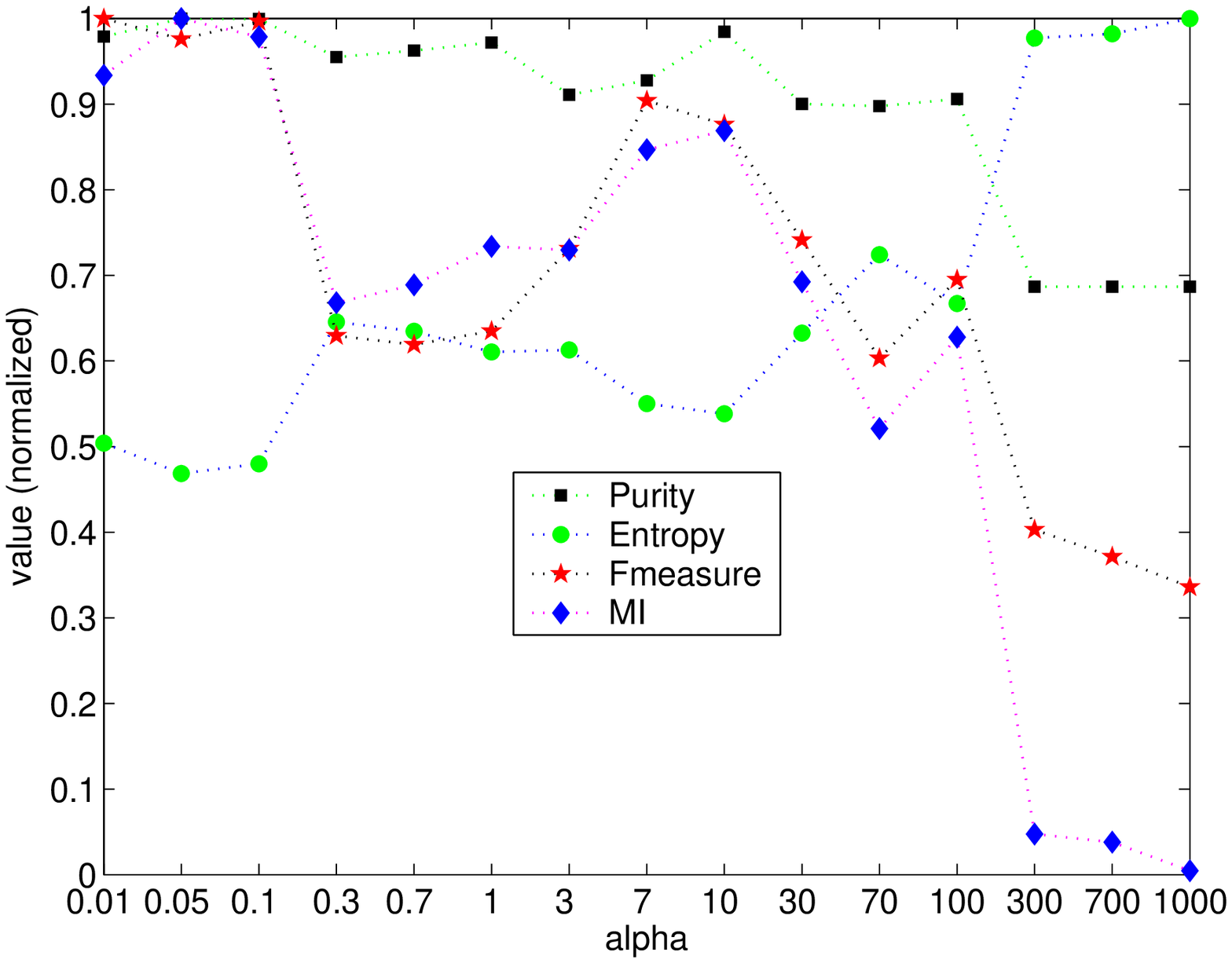}
   \label{fig9a}
  }
  \subfigure[AU-U]{
   \includegraphics[width=0.45\textwidth]{./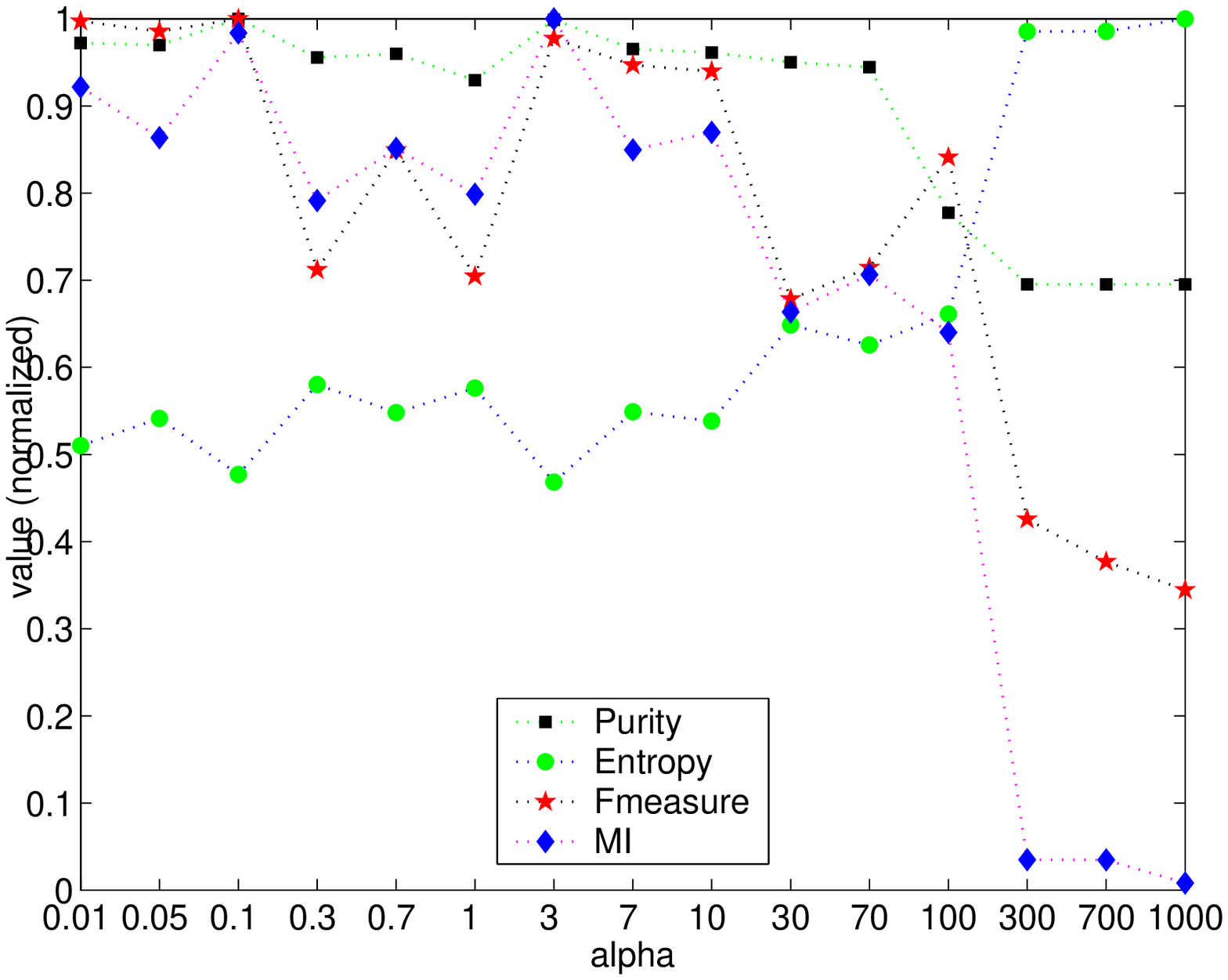}
   \label{fig9b}
  } 
\\
  \subfigure[MU-B($\alpha$), $\beta=1$]{
   \includegraphics[width=0.45\textwidth]{./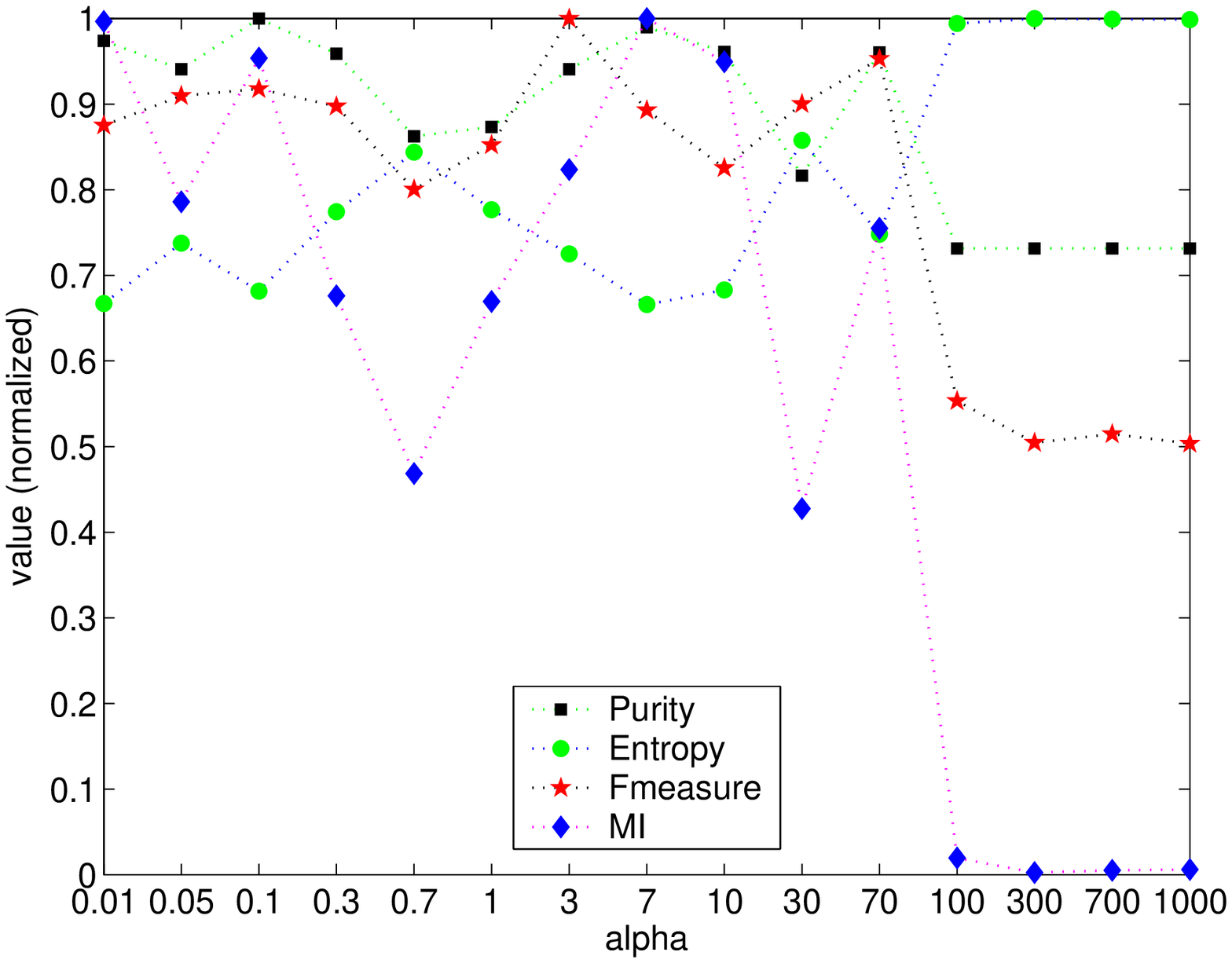}
   \label{fig9c}
  }
  \subfigure[AU-B($\alpha$), $\beta=1$]{
   \includegraphics[width=0.45\textwidth]{./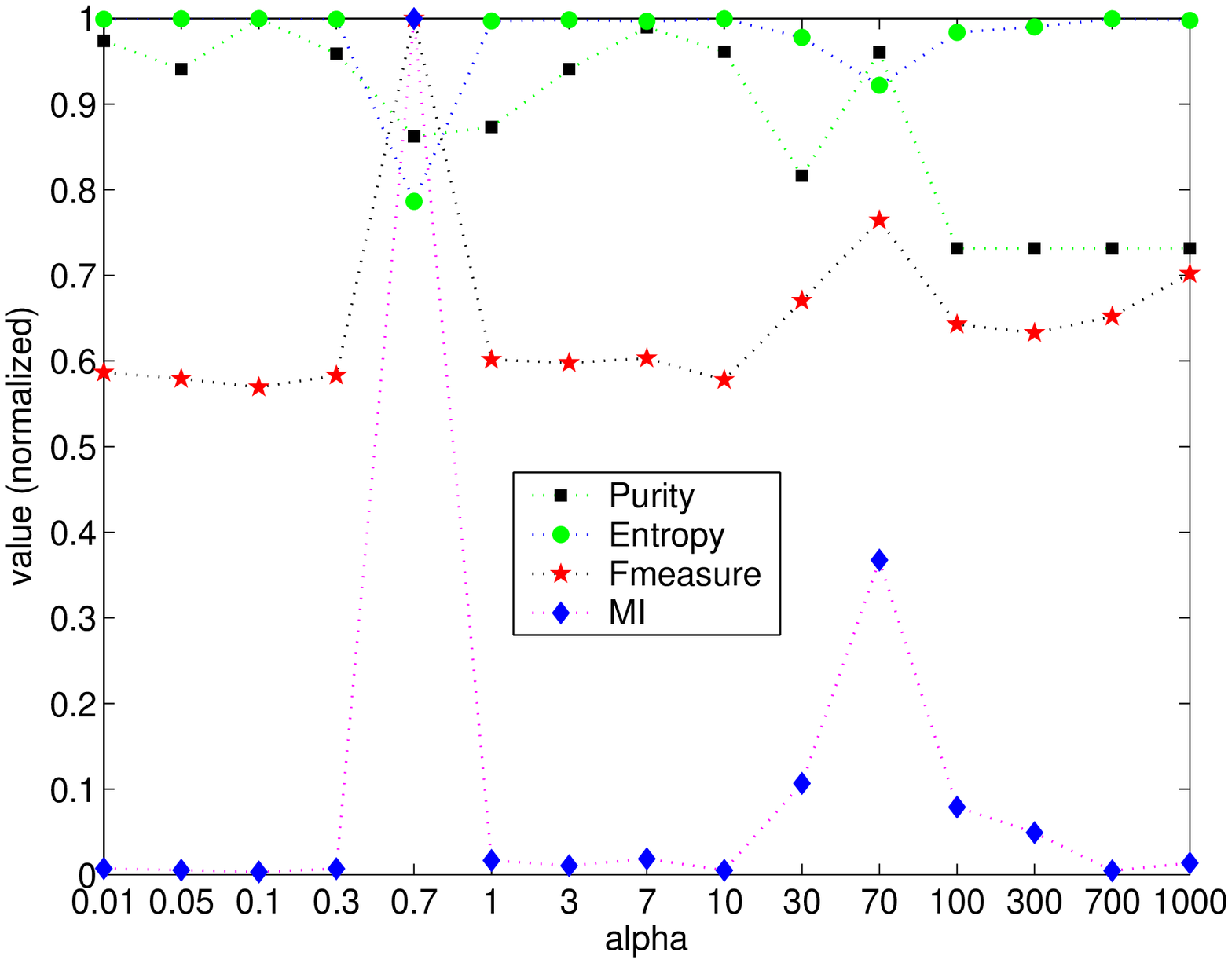}
   \label{fig9d}
  }
\\
  \subfigure[MU-B($\beta$), $\alpha=1$]{
   \includegraphics[width=0.45\textwidth]{./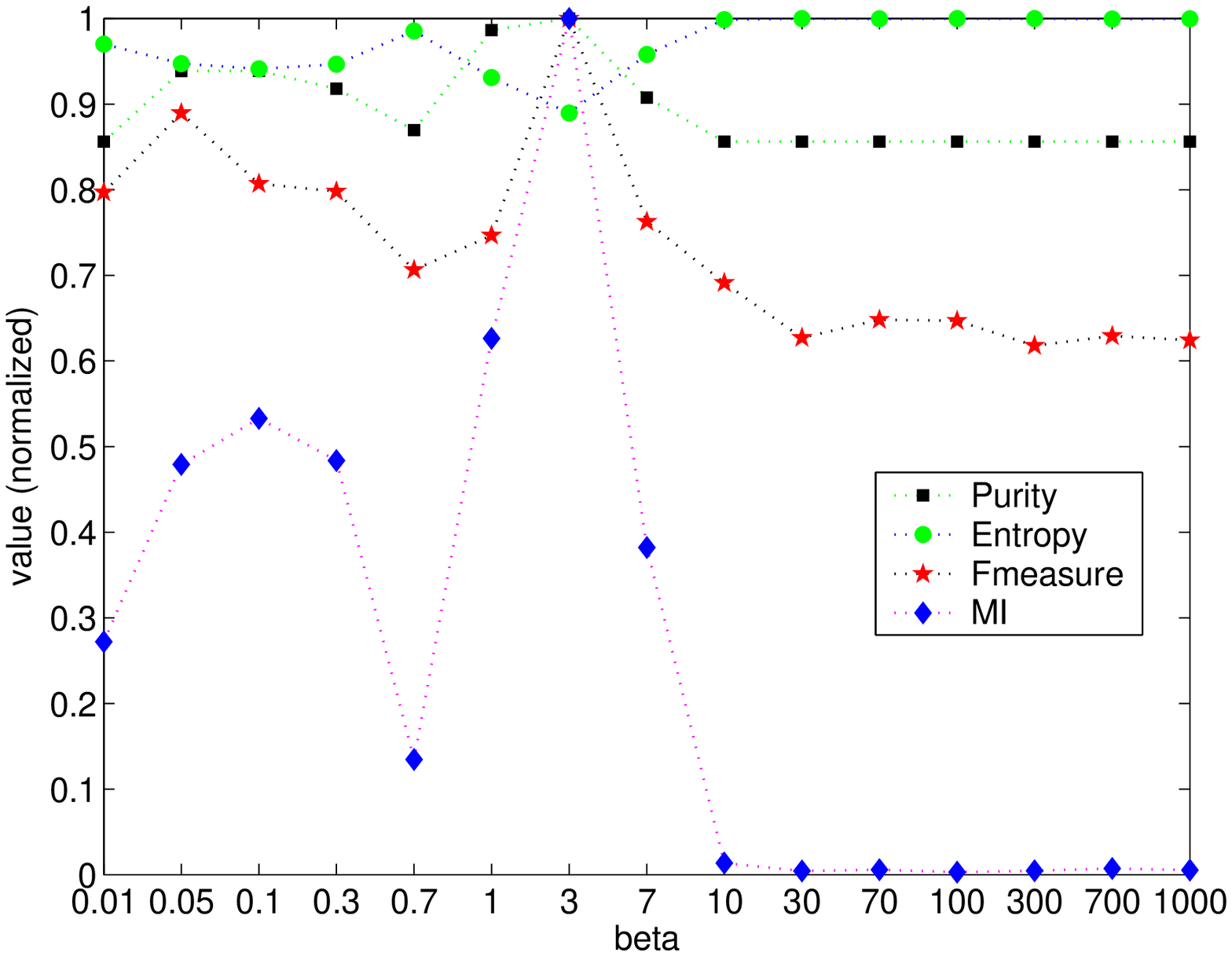}
   \label{fig9e}
  }
  \subfigure[AU-B($\beta$), $\alpha=1$]{
   \includegraphics[width=0.45\textwidth]{./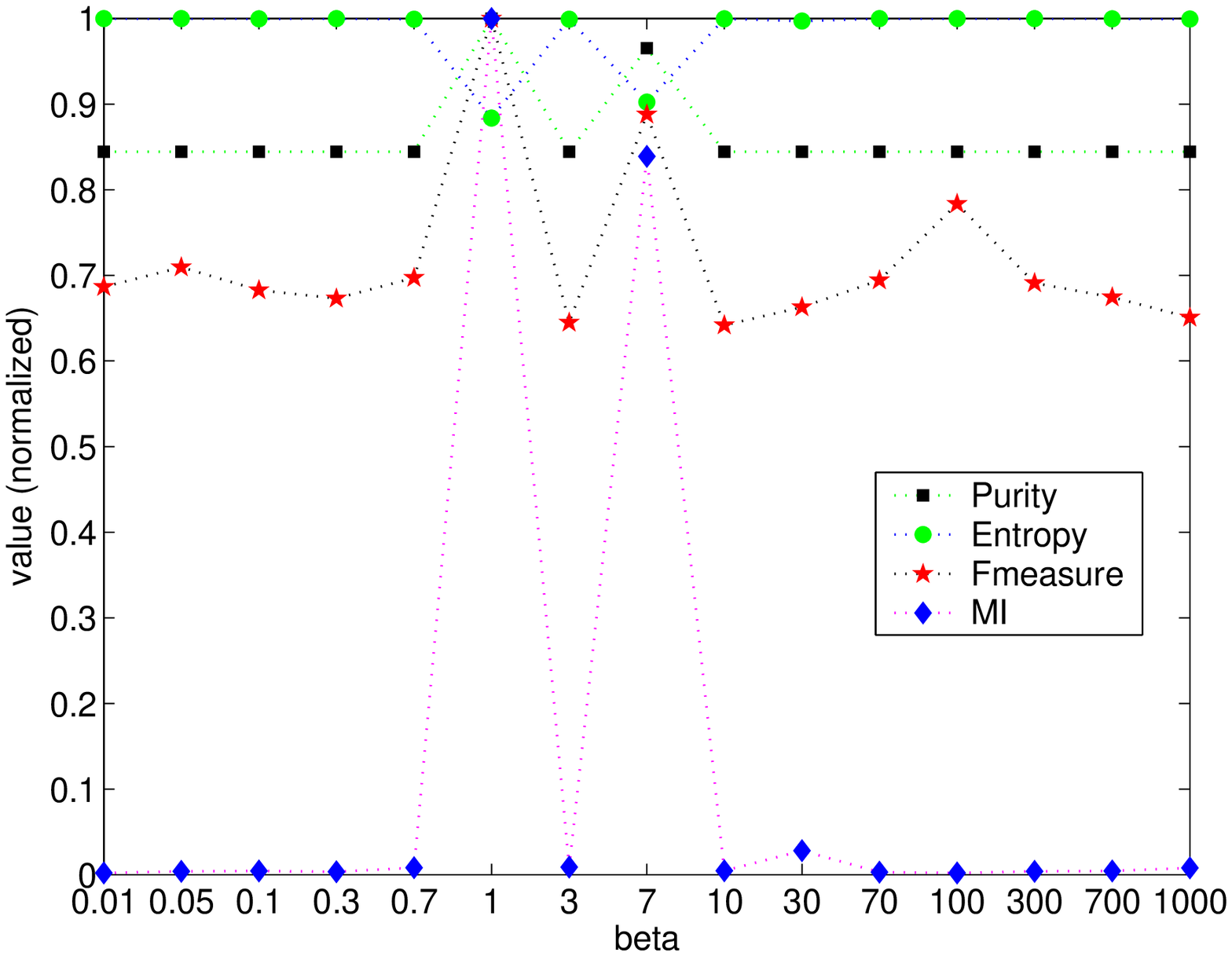}
   \label{fig9f}
  }
\caption{Clustering qualities as functions of $\alpha$ or $\beta$ for Reuters4.}
  \label{fig9}
 \end{center}
\end{figure}

\subsection{Times, \#iterations, and errors}

To evaluate computational performances of the algorithms, we measure average and maximum running times, average and maximum \#iterations, and average and maximum errors produced at the last iterations for 10 trials. Table \ref{table6}-\ref{table8} show the results.

As shown in the table \ref{table6}, LS generally is the fastest; however when MU-B or AU-B converge before reaching the maximum iteration (20 iterations), then these algorithms outperform LS. Our uni-orthogonal algorithms (MU-U and AU-U) seem to have comparable running times with LS. MU-B seems to be slower for smaller datasets and then performs better than MU-U and AU-U for larger datasets: Reuters10 and Reuters12. Since AU-B usually converges before reaching the maximum iteration (see table \ref{table7}), comparison can be done by using maximum running times where for Reuters4, Reuters6, Reuters10, and Reuters12 the data is available. As shown in table \ref{table7}, AU-B is the slowest to perform calculation per iteration. There are also abrupt changes in the running times for Reuters10 and Reuters12 for all algorithms which are unfortunate since as shown in table \ref{ch2:table3}, the sizes of the datasets are only slightly larger. Figure \ref{fig10} shows the average running times as the sizes of the datasets grow.

Average and maximum errors at the last iterations are shown in table \ref{table8}. Results for D-U and D-B are unsurprisingly really high as these algorithms do not minimize the objectives that are supposed to be minimized. Because only MU-U \& AU-U and MU-B \& AU-B pairs have the same objective each, we compare average errors for these pairs in figure \ref{fig11} for each dataset. There is no significant difference between MU-U \& AU-U in the average errors, but as shown in figure \ref{fig10}, MU-U has better average running times especially for larger datasets. And for MU-B \& AU-B, the differences in the average errors grow slightly as the size and classes of the datasets grow with significant differences occured at Reuters10 and Reuters12 where in these cases MU-B significantly outperformed AU-B.

\begin{table}
\renewcommand{\arraystretch}{1}
 \begin{center}
   \caption{Average and maximum running time.}
   \centering
   \small{
   \begin{tabular}{|l|l|r|r|r|r|r|r|r|}
   \hline
Data & Time & LS & D-U & D-B & MU-U & AU-U & MU-B & AU-B \\
\hline
Reuters2 & Av.  & 77.266 & 83.655 & 104.98 & 78.068 & 77.825 & 66.318 & 38.367 \\
         & Max. & 79.031 & 84.743 & 106.25 & 79.075 & 79.176 & 83.960 & 49.477 \\ \hline 
Reuters4 & Av.  & 108.84 & 119.42 & 152.77 & 109.04 & 109.12 & 119.46 & 86.745 \\
         & Max. & 109.39 & 119.55 & 153.17 & 109.20 & 109.28 & 119.72 & 271.40 \\ \hline 
Reuters6 & Av.  & 134.02 & 149.32 & 194.43 & 133.91 & 134.19 & 149.63 & 75.432 \\
         & Max. & 134.50 & 149.62 & 194.75 & 134.27 & 134.51 & 149.95 & 327.70 \\ \hline 
Reuters8 & Av.  & 158.37 & 173.43 & 228.59 & 153.53 & 155.03 & 173.00 & 56.464 \\
         & Max. & 181.58 & 175.71 & 235.54 & 155.15 & 159.19 & 174.05 & 59.021 \\ \hline 
Reuters10 & Av. & 834.69 & 892.91 & 911.34 & 874.18 & 914.93 & 859.31 & 601.57 \\
         & Max. & 1004.5 & 1141.2 & 1127.3 & 1137.5 & 1162.0 & 1059.0 & 2794.1 \\ \hline 
Reuters12 & Av. & 1249.2 & 1348.4 & 1440.1 & 1319.7 & 1335.6 & 1309.0 & 1602.4 \\
         & Max. & 1389.0 & 1590.4 & 1746.1 & 1565.7 & 1529.4 & 1506.7 & 4172.2 \\ 
\hline
  \end{tabular}}
  \label{table6}
 \end{center}
\end{table}

\begin{table}
\renewcommand{\arraystretch}{1}
 \begin{center}
   \caption{Average and maximum \#iteration.}
   \centering
   \small{
   \begin{tabular}{|l|l|r|r|r|r|r|r|r|}
   \hline
Data &\#iter. & LS & D-U & D-B & MU-U & AU-U & MU-B & AU-B \\
\hline
Reuters2 & Av.  & 20 & 20 & 20 & 20 & 20 & 16.2 & 4.9 \\
         & Max. & 20 & 20 & 20 & 20 & 20 & 20 & 6 \\ \hline 
Reuters4 & Av.  & 20 & 20 & 20 & 20 & 20 & 20 & 7.2 \\
         & Max. & 20 & 20 & 20 & 20 & 20 & 20 & 20 \\ \hline 
Reuters6 & Av.  & 20 & 20 & 20 & 20 & 20 & 20 & 5.5 \\
         & Max. & 20 & 20 & 20 & 20 & 20 & 20 & 20 \\ \hline 
Reuters8 & Av.  & 20 & 20 & 20 & 20 & 20 & 20 & 4 \\
         & Max. & 20 & 20 & 20 & 20 & 20 & 20 & 4 \\ \hline 
Reuters10 & Av. & 20 & 20 & 20 & 20 & 20 & 20 & 5.6 \\
         & Max. & 20 & 20 & 20 & 20 & 20 & 20 & 20 \\ \hline 
Reuters12 & Av. & 20 & 20 & 20 & 20 & 20 & 20 & 8.8 \\
         & Max. & 20 & 20 & 20 & 20 & 20 & 20 & 20 \\ 
\hline
  \end{tabular}}
  \label{table7}
 \end{center}
\end{table}

\begin{table}
\renewcommand{\arraystretch}{1}
 \begin{center}
   \caption{Average and maximum errors at the last iteration.}
   \centering
   \small{
   \begin{tabular}{|l|l|r|r|r|r|r|r|r|}
   \hline
Data &\#iter. & LS & D-U & D-B & MU-U & AU-U & MU-B & AU-B \\
\hline
Reuters2 & Av.  & 1.3763 & 3435.6 & 3626.5 & 1.4106 & 1.4138 & 1.7955 & 1.8021  \\
         & Max. & 1.3854 & 3587.2 & 3867.4 & 1.4201 & 1.4230 & 1.8022 & 1.8025  \\ \hline 
Reuters4 & Av.  & 1.4791 & 9152.8 & 8689.0 & 1.5299 & 1.5310 & 2.0708 & 2.0962 \\
         & Max. & 1.4855 & 9474.9 & 9297.9 & 1.5408 & 1.5402 & 2.0880 & 2.1028 \\ \hline 
Reuters6 & Av.  & 1.5229 & 17135 & 15823 & 1.5844 & 1.5878 & 2.2627 & 2.2921 \\
         & Max. & 1.5301 & 17971 & 16955 & 1.5884 & 1.5952 & 2.2758 & 2.2998 \\ \hline 
Reuters8 & Av.  & 1.5434 & 25913 & 22893 & 1.6215 & 1.6171 & 2.3863 & 2.4421 \\
         & Max. & 1.5473 & 27462 & 25553 & 1.6342 & 1.6262 & 2.3993 & 2.4422 \\ \hline 
Reuters10 & Av. & 1.5696 & 34154 & 30518 & 1.6533 & 1.6533 & 1.8836 & 2.5673 \\
         & Max. & 1.5801 & 35236 & 35152 & 1.6662 & 1.6618 & 1.9529 & 2.5718 \\ \hline 
Reuters12 & Av. & 1.5727 & 42739 & 37038 & 1.6620 & 1.6621 & 1.8860 & 2.6551  \\
         & Max. & 1.5815 & 44325 & 41940 & 1.6705 & 1.6713 & 1.9193 & 2.6697 \\ 
\hline
  \end{tabular}}
  \label{table8}
 \end{center}
\end{table}

\begin{figure}
\begin{center}
\includegraphics[width=0.6\textwidth]{./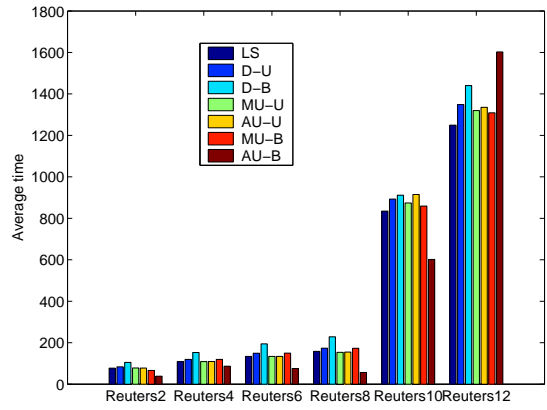}
\caption{Average running time comparison as the datasets grow.}
\label{fig10}
\end{center}
\end{figure}

\begin{figure}
 \begin{center}
  \subfigure[MU-U and AU-U.]{
   \includegraphics[width=0.45\textwidth]{./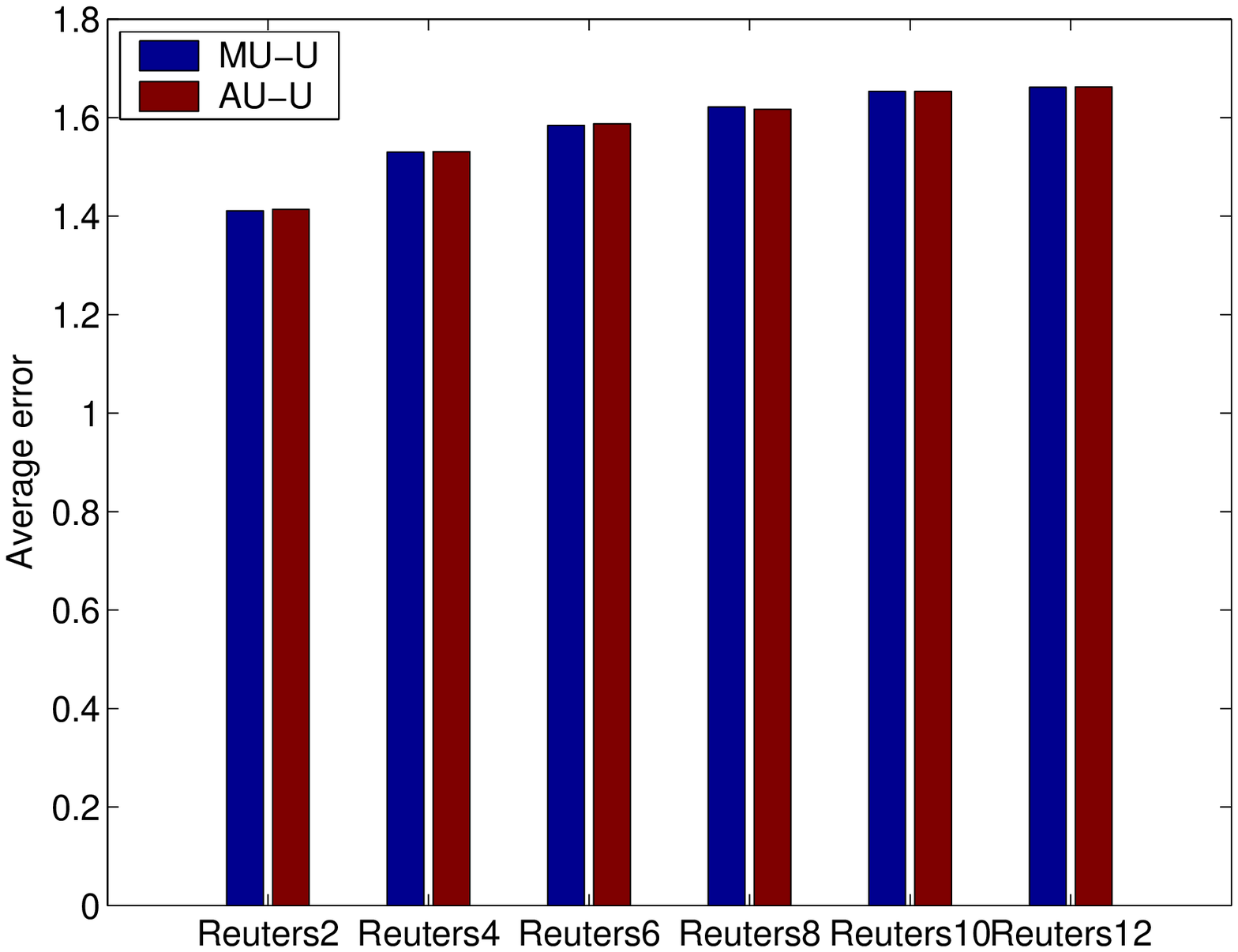}
   \label{fig11a}
  }
  \subfigure[MU-B and AU-B.]{
   \includegraphics[width=0.45\textwidth]{./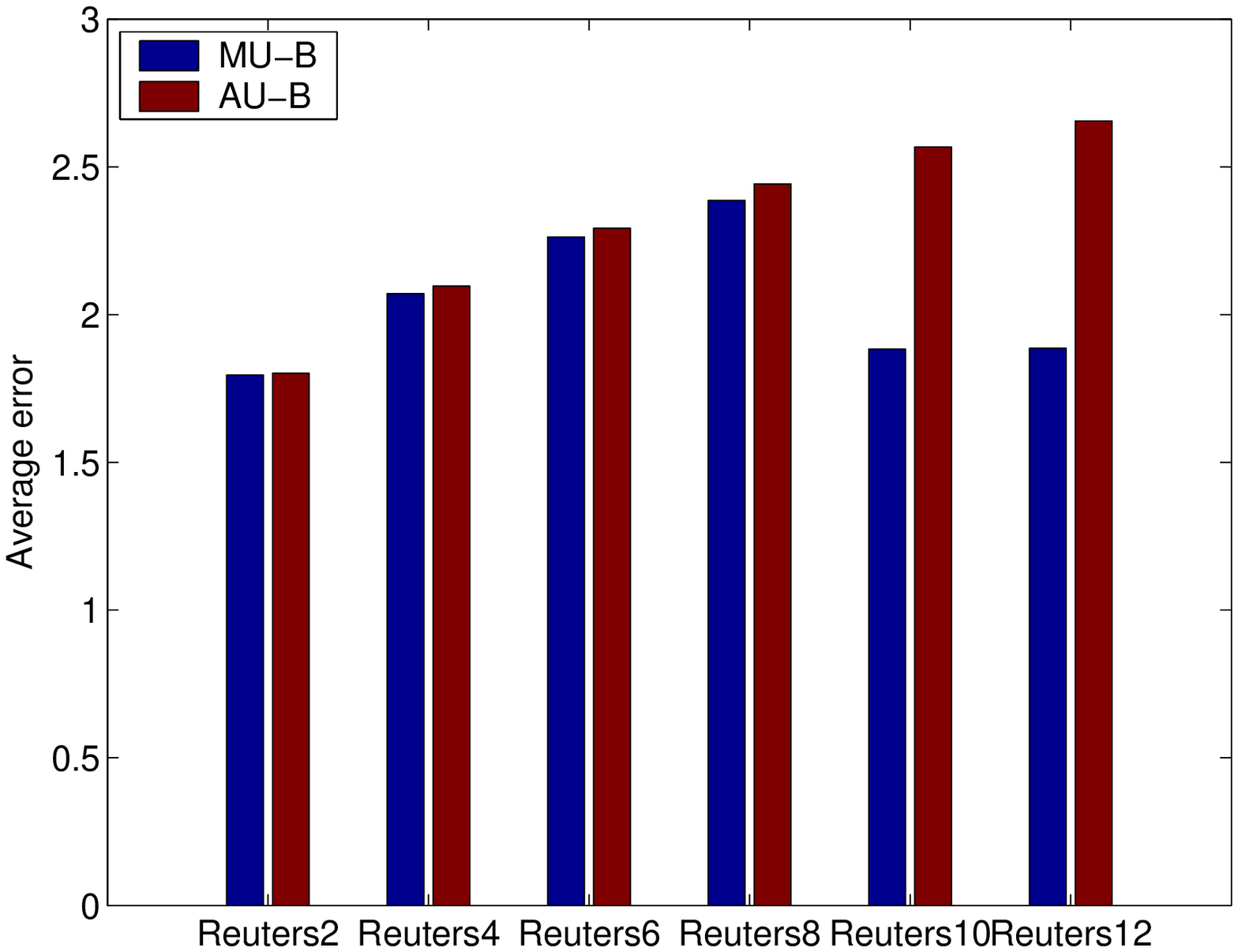}
   \label{fig11b}
  }
  \caption{Average errors comparison as the datasets grow.}
  \label{fig11}
 \end{center}
\end{figure}

\subsection{Document clustering}

The results of document clustering are shown in table \ref{table9}--\ref{table12}. In average, MU-U gives the best performances in all metrics especially for datasets with small \#clusters. Then followed by LS, AU-U, and D-U with small margins. LS seems to be better for datasets with large \#clusters. Generally, MU-U, LS, AU-U and D-U can give consistent results for varied \#clusters, but unfortunately this is not the case for D-B, MU-B and AU-B which are all BNMtF algorithms. AU-B especially seems to offer only slightly better clustering than random results.

\begin{table}
 \begin{center}
   \caption{Average mutual information over 10 trials (document clustering).}
   \centering
   \footnotesize{
   \begin{tabular}{|l|r|r|r|r|r|r|r|}
   \hline
Data & LS & D-U & D-B & MU-U & AU-U & MU-B & AU-B \\
\hline
Reuters2 & 0.40392 & 0.42487 & 0.36560 & $\mathbf{0.47507}$ & 0.42150 & 0.057799 & 0.00087646 \\ 
Reuters4 & 0.62879 & 0.61723 & 0.48007 & $\mathbf{0.65080}$ & 0.63640 & 0.32142 & 0.072621 \\ 
Reuters6 & 0.79459 & 0.81831 & 0.52498 & 0.81811 & $\mathbf{0.82425}$ & 0.37924 & 0.078201 \\ 
Reuters8 & 0.92285 & 0.90260 & 0.54534 & $\mathbf{0.94165}$ & 0.92720 & 0.48435 & 0.013518 \\ 
Reuters10 & $\mathbf{1.0415}$ & 1.0275 & 0.62125 & 1.0063 & 1.0138 & 0.50980 & 0.072014 \\ 
Reuters12 & $\mathbf{1.1326}$ & 1.0865 & 0.58469 & 1.1195 & 1.0821 & 0.47697 & 0.16389 \\ \hline
Average & 0.82071 & 0.81283 & 0.52032 & $\mathbf{0.83523}$ & 0.81754 & 0.37160 & 0.066853 \\
\hline
  \end{tabular}}
  \label{table9}
 \end{center}
\end{table}

\begin{table}
 \begin{center}
   \caption{Average entropy over 10 trials (document clustering).}
   \centering
   \footnotesize{
   \begin{tabular}{|l|r|r|r|r|r|r|r|}
   \hline
Data & LS & D-U & D-B & MU-U & AU-U & MU-B & AU-B \\
\hline
Reuters2 & 0.54193 & 0.52098 & 0.58025 & $\mathbf{0.47078}$ & 0.52435 & 0.88805 & 0.94498 \\ 
Reuters4 &  0.40202 & 0.40780 & 0.47638 & $\mathbf{0.39102}$ & 0.39822 & 0.55571 & 0.68011 \\ 
Reuters6 &  0.38391 & 0.37473 & 0.48821 & 0.37481 & $\mathbf{0.37243}$ & 0.54459 & 0.66105 \\ 
Reuters8 & 0.35568 & 0.36242 & 0.48151 & $\mathbf{0.34941}$ & 0.35423 & 0.50184 & 0.65879 \\ 
Reuters10 & $\mathbf{0.33601}$ & 0.34023 & 0.46253 & 0.34661 & 0.34434 & 0.49608 & 0.62786 \\ 
Reuters12 & $\mathbf{0.31953}$ & 0.33239 & 0.47236 & 0.32319 & 0.33362 & 0.50241 & 0.58974 \\ 
\hline
Average & 0.38985 & 0.389760 & 0.49354 & $\mathbf{0.37597}$ & 0.38787 & 0.58145 & 0.69375 \\ 
\hline
  \end{tabular}}
  \label{table10}
 \end{center}
\end{table}

\begin{table}
 \begin{center}
   \caption{Average purity over 10 trials (document clustering).}
   \centering
   \footnotesize{
   \begin{tabular}{|l|r|r|r|r|r|r|r|}
   \hline
Data & LS & D-U & D-B & MU-U & AU-U & MU-B & AU-B \\
\hline
Reuters2 & 0.82154 & 0.83599 & 0.80452 & $\mathbf{0.85089}$ & 0.82507 & 0.66102 & 0.63612 \\ 
Reuters4 &  0.79417 & 0.78023 & 0.73778 & $\mathbf{0.80400}$ & 0.79704 & 0.70119 & 0.59657 \\ 
Reuters6 &  0.74510 & $\mathbf{0.75158}$ & 0.68844 & 0.74868 & 0.75069 & 0.66433 & 0.54569 \\ 
Reuters8 &  $\mathbf{0.74906}$ & 0.73982 & 0.66536 & 0.74869 & 0.73987 & 0.65033 & 0.50680 \\ 
Reuters10 & 0.73120 & $\mathbf{0.73762}$ & 0.64845 & 0.72813 & 0.73330 & 0.63194 & 0.50639 \\ 
Reuters12 & 0.73877 & 0.72719 & 0.62223 & $\mathbf{0.74127}$ & 0.72340 & 0.60118 & 0.52019 \\ 
\hline
Average & 0.76331 & 0.76207 & 0.69446 & $\mathbf{0.77028}$ & 0.76156 & 0.65166 & 0.55196 \\ 
\hline
  \end{tabular}}
  \label{table11}
 \end{center}
\end{table}

\begin{table}
 \begin{center}
   \caption{Average Fmeasure over 10 trials (document clustering).}
   \centering
   \footnotesize{
   \begin{tabular}{|l|r|r|r|r|r|r|r|}
   \hline
Data & LS & D-U & D-B & MU-U & AU-U & MU-B & AU-B \\
\hline
Reuters2 & 0.81904 & 0.83234 & 0.79163 & $\mathbf{0.84823}$ & 0.82241 & 0.58237 & 0.50399 \\ 
Reuters4 & 0.56154 & 0.53754 & 0.44352 & $\mathbf{0.57989}$ & 0.54267 & 0.36917 & 0.24585 \\ 
Reuters6 & 0.46225 & 0.47714 & 0.33910 & $\mathbf{0.48444}$ & 0.47270 & 0.26372 & 0.17171 \\ 
Reuters8 & 0.40408 & 0.40554 & 0.25052 & 0.41822 & $\mathbf{0.42996}$ & 0.23904 & 0.10869 \\ 
Reuters10 & 0.38001 & $\mathbf{0.38041}$ & 0.23309 & 0.36923 & 0.35947 & 0.19552 & 0.094912 \\ 
Reuters12 & 0.35671 & $\mathbf{0.35811}$ & 0.17387 & 0.35214 & 0.34435 & 0.16401 & 0.099949 \\ 
\hline
Average & 0.49727 & 0.49851 & 0.37196 & $\mathbf{0.50869}$ & 0.49526 & 0.30231 & 0.20418 \\ 
\hline
  \end{tabular}}
  \label{table12}
 \end{center}
\end{table}

\subsection{Word clustering}

In some cases, the ability of clustering methods to simultaneously group similar documents with related words (co-clustering) can become an added value. And because the original BNMtF is designed to have this ability, we will also investigate the quality of word clustering (in the context of co-clustering) produced by all algorithms. Since word clustering has no reference class, we adopt idea from \cite{Ding1} where the authors proposed to create reference classes by using word frequencies: each word is assigned to a class where the word appears with the highest frequency. Table \ref{table13}--\ref{table16} show the results. As shown, D-U has the best overall results followed by LS, MU-U and AU-U by small margins. MU-U is especially good for small \#clusters and LS is good for large \#clusters. However, all BNMtF algorithms, D-B, MU-B, and AU-B, which are designed to accomodate co-clustering task, seem to have poor results. These results are in accord with document clustering results where BNMtFs also perform poorly.

\begin{table}
 \begin{center}
   \caption{Average mutual information over 10 trials (word clustering).}
   \centering
   \footnotesize{
   \begin{tabular}{|l|r|r|r|r|r|r|r|}
   \hline
Data & LS & D-U & D-B & MU-U & AU-U & MU-B & AU-B \\
\hline
Reuters2 & 0.15715 & 0.16609 & 0.12966 & $\mathbf{0.17351}$ & 0.14978 & 0.013995 & 0.00029807 \\ 
Reuters4 &  0.42558 & 0.39193 & 0.21495 & $\mathbf{0.42619}$ & 0.41663 & 0.11812 & 0.026943 \\ 
Reuters6 &  0.54112 & $\mathbf{0.57472}$ &	0.26971 & 0.54239 & 0.54828 & 0.12460 & 0.035309 \\ 
Reuters8 &  0.63022 & 0.63368 & 0.29277 & 0.64699 & $\mathbf{0.65774}$ & 0.15692 & 0.0037071 \\ 
Reuters10 &  0.70386 & $\mathbf{0.73345}$ & 0.33046 & 0.66262 & 0.68367 & 0.025320 & 0.029618 \\ 
Reuters12 & 0.80111 & 0.77959 & 0.28412 & 0.76128 & 0.73517 & 0.013483 & 0.073478 \\ \hline
Average &  0.54317 & $\mathbf{0.54658}$ & 0.25361 & 0.53549 & 0.53188 & 0.075407 & 0.028226 \\
\hline
  \end{tabular}}
  \label{table13}
 \end{center}
\end{table}

\begin{table}
 \begin{center}
   \caption{Average entropy over 10 trials (word clustering).}
   \centering
   \footnotesize{
   \begin{tabular}{|l|r|r|r|r|r|r|r|}
   \hline
Data & LS & D-U & D-B & MU-U & AU-U & MU-B & AU-B \\
\hline
Reuters2 & 0.76778 & 0.75884 & 0.79527 & $\mathbf{0.75142}$ & 0.77515 & 0.91094 & 0.92463 \\ 
Reuters4 & 0.62965 & 0.64647 & 0.73496 & $\mathbf{0.62934}$ & 0.63412 & 0.78338 & 0.82897 \\ 
Reuters6 & 0.56184 & $\mathbf{0.54884}$ & 0.66683 & 0.56134 & 0.55906 & 0.72297 & 0.75751 \\ 
Reuters8 & 0.52006 & 0.51891 & 0.63255 & 0.51447 & $\mathbf{0.51089}$ & 0.67783 & 0.72890 \\ 
Reuters10 & 0.50612 & $\mathbf{0.49721}$ & 0.61852 & 0.51853 & 0.51220 & 0.71038 & 0.70909 \\ 
Reuters12 & $\mathbf{0.48211}$ & 0.48811 &	0.62632 & 0.49322 & 0.50050 & 0.70181 & 0.68507 \\ \hline
Average & 0.57792 & $\mathbf{0.57640}$ & 0.67908 & 0.57806 & 0.58199 & 0.75122 & 0.77236 \\
\hline
  \end{tabular}}
  \label{table14}
 \end{center}
\end{table}

\begin{table}
 \begin{center}
   \caption{Average purity over 10 trials (word clustering).}
   \centering
   \footnotesize{
   \begin{tabular}{|l|r|r|r|r|r|r|r|}
   \hline
Data & LS & D-U & D-B & MU-U & AU-U & MU-B & AU-B \\
\hline
Reuters2 & 0.76987 & 0.77082 & 0.75378 & $\mathbf{0.77730}$ & 0.76021 & 0.67006 & 0.65988 \\ 
Reuters4 & 0.64400 & 0.62881 & 0.60566 & $\mathbf{0.64676}$ & 0.64184 & 0.55808 & 0.53116 \\ 
Reuters6 & 0.59830 & $\mathbf{0.61733}$ & 0.55949 & 0.59763 & 0.59103 & 0.52966 & 0.49661 \\ 
Reuters8 & $\mathbf{0.59560}$ & 0.58935 & 0.54296 & 0.59179 & 0.58770 & 0.50933 & 0.46499 \\ 
Reuters10 & 0.58123 & $\mathbf{0.60236}$ & 0.51576 & 0.57045 & 0.58724 & 0.44765 & 0.45395 \\ 
Reuters12 &  $\mathbf{0.60208}$ & 0.59563 & 0.49555 & 0.58628 & 0.56846 & 0.43611 & 0.44882 \\ \hline
Average & 0.63185 & $\mathbf{0.63405}$ & 0.57887 & 0.62837 & 0.62274 & 0.52515 & 0.50923 \\
\hline
  \end{tabular}}
  \label{table15}
 \end{center}
\end{table}

\begin{table}
 \begin{center}
   \caption{Average Fmeasure over 10 trials (word clustering).}
   \centering
   \footnotesize{
   \begin{tabular}{|l|r|r|r|r|r|r|r|}
   \hline
Data & LS & D-U & D-B & MU-U & AU-U & MU-B & AU-B \\
\hline
Reuters2 & 0.59287 & 0.59471 & 0.58733 & $\mathbf{0.59696}$ & 0.59427 & 0.52628 & 0.49976 \\ 
Reuters4 & 0.46891 & 0.43469 & 0.36397 & $\mathbf{0.48118}$ & 0.46180 & 0.32520 & 0.27101 \\ 
Reuters6 & 0.37490 & 0.38365 & 0.27356 & $\mathbf{0.38648}$ & 0.38026 & 0.21620 & 0.17572 \\ 
Reuters8 & 0.32488 & 0.32674 & 0.20820 & 0.33527 & $\mathbf{0.34251}$ & 0.17127 & 0.12565 \\ 
Reuters10 & 0.29864 & $\mathbf{0.30768}$ & 0.18626 & 0.28930 & 0.28573 & 0.10700 & 0.10545 \\ 
Reuters12 & $\mathbf{0.29116}$ & 0.29072 & 0.14255 & 0.27525 & 0.27380 & 0.088517 & 0.095880 \\ 
\hline
Average & 0.39189 & 0.38970 & 0.29365 & $\mathbf{0.39407}$ & 0.38973 & 0.23908 & 0.21224 \\
\hline
  \end{tabular}}
  \label{table16}
 \end{center}
\end{table}

\section{Conclusions}\label{conclusion}

We have presented a convergent algorithm for BNMtF based on a technique presented in our previous work. The convergence property of the algorithm is proven theoretically and its nonincreasing property is investigated numerically. As shown in the experimental results, the algorithm preserves the nonincreasing property even in the case where the regularization parameters are large. We then investigate some aspects of the algorithm like its behaviour under different values of regularization parameters, minimization slopes, computational times, and error per iterations. We also investigate the document/word clustering capabilities of the algorithm.

\begin{acknowledgements}
We are very grateful to the referees for their most interesting comments and suggestions.
\end{acknowledgements}

% BibTeX users please use one of
%\bibliographystyle{spbasic}      % basic style, author-year citations
\bibliographystyle{spmpsci}      % mathematics and physical sciences
\bibliography{paper}
%\bibliographystyle{spphys}       % APS-like style for physics
%\bibliography{}   % name your BibTeX data base

\clearpage

\appendix

\section{Convergence analysis} \label{appendix}

From a result in convergence analysis of block coordinate descent method \cite{CJLin2,CJLin,Grippo}, algorithm \ref{algorithm8} has a convergence guarantee if the following conditions are satisfied:
\begin{enumerate}
\item sequence $J\big(\mathbf{B}^{(k)}, \mathbf{S}^{(k)}, \mathbf{C}^{(k)}\big)$ has nonincreasing property,
\item any limit point of sequence $\big\{\mathbf{B}^{(k)}, \mathbf{S}^{(k)}, \mathbf{C}^{(k)}\big\}$ generated by algorithm \ref{algorithm8} is a stationary point, and
\item sequence $\big\{\mathbf{B}^{(k)}, \mathbf{S}^{(k)}, \mathbf{C}^{(k)}\big\}$ has at least one limit point.
\end{enumerate}

Because algorithm \ref{algorithm8} uses the alternating strategy, sequences $J\big(\mathbf{B}^{(k)}\big)$, $J\big(\mathbf{C}^{(k)}\big)$, and $J\big(\mathbf{S}^{(k)}\big)$ can be analyzed separately \cite{Lee2,CJLin2}. And because update rule for $\mathbf{B}^{(k)}$ in eq.~\ref{eq98} is similar to the update rule for $\mathbf{C}^{(k)}$ in eq.~\ref{eq99}, it suffices to prove nonincreasing property of one of them.

\subsection{Nonincreasing property of $J\big(\mathbf{B}^{(k)}\big)$}

By using auxiliary function approach \cite{Lee2}, nonincreasing property of $J\big(\mathbf{B}^{(k)}\big)$ can be proven if the following statement is true:
\begin{equation*}
J\big(\mathbf{B}^{(k+1)}\big) = \; G\big(\mathbf{B}^{(k+1)},\mathbf{B}^{(k+1)}\big) \le G\big(\mathbf{B}^{(k+1)},\mathbf{B}^{(k)}\big) \le G\big(\mathbf{B}^{(k)},\mathbf{B}^{(k)}\big) = J\big(\mathbf{B}^{(k)}\big). %\label{eq107}
\end{equation*} To define $G$, let us rearrange $\mathbf{B}$ into:
\begin{equation*}
\mathfrak{B}^T \equiv
\begin{bmatrix}
\mathfrak{b}_1^T & & & \\
& \mathfrak{b}_2^T & & \\
& & \ddots & \\
& & & \mathfrak{b}_M^T
\end{bmatrix}\in\mathbb{R}_+^{MP\times M},
\end{equation*}
where $\mathfrak{b}_m$ is the $m$-th row of $\mathbf{B}$. And also let us define:
\begin{equation*}
\nabla_{\mathfrak{B}^T}\mathfrak{J}\big(\mathfrak{B}^{(k)T}\big) \equiv
\begin{bmatrix}
\nabla_{\mathbf{B}}\mathfrak{J}\big(\mathbf{B}^{(k)}\big)_1^T & & & \\
& \nabla_{\mathbf{B}}\mathfrak{J}\big(\mathbf{B}^{(k)}\big)_2^T & & \\
& & \ddots & \\
& & & \nabla_{\mathbf{B}}\mathfrak{J}\big(\mathbf{B}^{(k)}\big)_M^T
\end{bmatrix}\in\mathbb{R}_+^{MP\times M},
\end{equation*}
where $\nabla_{\mathbf{B}}\mathfrak{J}\big(\mathbf{B}^{(k)}\big)_m$ is the $m$-th row of $\nabla_{\mathbf{B}}J(\mathbf{B}^{(k)})$. Then we define:
\begin{equation*}
\mathbf{D} \equiv \mathrm{diag}\;\big(\mathbf{D}^1,\ldots,\mathbf{D}^M\big)\in\mathbb{R}_+^{MP\times MP},
\end{equation*}
where $\mathbf{D}^m$ is a diagonal matrix with its diagonal entries defined as:
\begin{equation*}
d_{pp}^m \equiv \left\{
 \begin{array}{ll}
   \frac{\big( \mathbf{\bar{B}}^{(k)}\mathbf{S}^{(k)}\mathbf{C}^{(k)}\mathbf{C}^{(k)T}\mathbf{S}^{(k)T} + \beta\mathbf{\bar{B}}^{(k)}\mathbf{\bar{B}}^{(k)T}\mathbf{\bar{B}}^{(k)} \big)_{mp}+\delta_{\mathbf{B}}^{(k)}}{\bar{b}_{mp}^{(k)}} & \mathrm{if}\;\; p\in \mathcal{I}_m \\
   \star & \mathrm{if}\;\; p\notin \mathcal{I}_m
 \end{array} \right.
\end{equation*}
with
\begin{align*}
\mathcal{I}_m \equiv \big\{p|&b_{mp}^{(k)}>0,\;\nabla_{\mathbf{B}}J\big(\mathbf{B}^{(k)}\big)_{mp}\ne 0,\;\mathrm{or} \\
&b_{mp}^{(k)}=0,\;\nabla_{\mathbf{B}}J\big(\mathbf{B}^{(k)}\big)_{mp} < 0\big\}
\end{align*}
denotes the set of non-KKT indices in $m$-th row of $\mathbf{B}^{(k)}$, and the symbol $\star$ is defined so that $\star \equiv 0$ and $\star^{-1} \equiv 0$.

Then, the auxiliary function $\mathfrak{G}$ can be defined as:
\begin{equation}
\mathfrak{G}\big(\mathfrak{B}^T,\mathfrak{B}^{(k)T}\big) \equiv \mathfrak{J}\big(\mathfrak{B}^{(k)T}\big) + \mathrm{tr}\;\big\{\big(\mathfrak{B}-\mathfrak{B}^{(k)}\big)\nabla_{\mathfrak{B}^T}\mathfrak{J}\big(\mathfrak{B}^{(k)T}\big)\big\} + \frac{1}{2}\mathrm{tr}\;\big\{\big(\mathfrak{B}-\mathfrak{B}^{(k)}\big)\mathbf{D}\big(\mathfrak{B}-\mathfrak{B}^{(k)}\big)^T\big\}. \label{eq109}
\end{equation}
Note that $\mathfrak{J}$ and $\mathfrak{G}$ are equivalent to $J$ and $G$ with $\mathbf{B}$ is rearranged into $\mathfrak{B}^T$, and other variables are reordered accordingly. And:
\begin{equation*}
\nabla_{\mathfrak{B}^T}\mathfrak{G}\big(\mathfrak{B}^T,\mathfrak{B}^{(k)T}\big)=\mathbf{D}\big(\mathfrak{B}-\mathfrak{B}^{(k)}\big)^T + \nabla_{\mathfrak{B}^T}\mathfrak{J}\big(\mathfrak{B}^{(k)T}\big).
\end{equation*}
By definition, $\mathbf{D}$ is positive definite for all $\mathbf{B}^{(k)}$ not satisfy the KKT conditions, so $\mathfrak{G}\big(\mathfrak{B}^T,\mathfrak{B}^{(k)T}\big)$ is a strict convex function, and consequently has a unique minimum.
\begin{align}
\mathbf{D}\big(\mathfrak{B}-\mathfrak{B}^{(k)}\big)^T + \nabla_{\mathfrak{B}^T}\mathfrak{J}\big(\mathfrak{B}^{(k)T}\big)=0, \label{eq110}\\
\mathfrak{B}^T = \mathfrak{B}^{(k)T} - \mathbf{D}^{-1}\nabla_{\mathfrak{B}^T}\mathfrak{J}\big(\mathfrak{B}^{(k)T}\big), \nonumber
\end{align}
which is exactly the update rule for $\mathbf{B}^{(k)}$.

By using the Taylor series expansion, $\mathfrak{J}\big(\mathfrak{B}^T\big)$ can also be written as:
\begin{align}
\mathfrak{J}\big(\mathfrak{B}^T\big) = &\;\mathfrak{J}\big(\mathfrak{B}^{(k)T}\big) + \mathrm{tr}\;\big\{\big(\mathfrak{B}-\mathfrak{B}^{(k)}\big)\nabla_{\mathfrak{B}^T}\mathfrak{J}\big(\mathfrak{B}^{(k)T}\big)\big\} + \nonumber\\
&\;\frac{1}{2}\mathrm{tr}\;\big\{\big(\mathfrak{B}-\mathfrak{B}^{(k)}\big)\nabla_{\mathbf{B}}^2\mathbf{J}\big(\mathbf{B}^{(k)}\big)\big(\mathfrak{B}-\mathfrak{B}^{(k)}\big)^T\big\} + \varepsilon_{\mathbf{B}}^{(k)}, \label{eq111}
\end{align}
where
\begin{align*}
\varepsilon_{\mathbf{B}}^{(k)} = &\frac{1}{6}\mathrm{tr}\;\big\{\big(\mathfrak{B}-\mathfrak{B}^{(k)}\big)\big( 6\beta\mathfrak{B}^{(k)T} \big) \big(\mathfrak{B}-\mathfrak{B}^{(k)}\big) \big(\mathfrak{B}-\mathfrak{B}^{(k)}\big)^T\big\} + \\
&\frac{1}{24}\mathrm{tr}\;\big\{\big(\mathfrak{B}-\mathfrak{B}^{(k)}\big)\big(\mathfrak{B}-\mathfrak{B}^{(k)}\big)^T\big( 6\beta\mathbf{I} \big) \big(\mathfrak{B}-\mathfrak{B}^{(k)}\big) \big(\mathfrak{B}-\mathfrak{B}^{(k)}\big)^T\big\}
\end{align*}
and
\begin{equation*}
\nabla_{\mathbf{B}}^2\mathbf{J}\big(\mathbf{B}^{(k)}\big) \equiv
\begin{bmatrix}
\nabla_{\mathbf{B}}^2 J \big(\mathbf{B}^{(k)}\big) & & \\
& \ddots & \\
& & \nabla_{\mathbf{B}}^2 J \big(\mathbf{B}^{(k)}\big)
\end{bmatrix}\in\mathbb{R}_+^{MP\times MP}
\end{equation*}
with $\nabla_{\mathbf{B}}^2 J \big(\mathbf{B}^{(k)}\big)$ components are arranged along its diagonal area (there are $M$ components).

To show the nonincreasing property of $J\big(\mathbf{B}^{(k)}\big)$, the following statements must be proven:
\begin{enumerate}
\item $\mathfrak{G}\big(\mathfrak{B}^T,\mathfrak{B}^T\big)=\mathfrak{J}\big(\mathfrak{B}^T\big)$,
\item $\mathfrak{G}\big(\mathfrak{B}^{(k)T},\mathfrak{B}^{(k)T}\big)=\mathfrak{J}\big(\mathfrak{B}^{(k)T}\big)$,
\item $\mathfrak{G}\big(\mathfrak{B}^T,\mathfrak{B}^T\big) \le \mathfrak{G}\big(\mathfrak{B}^T,\mathfrak{B}^{(k)T}\big)$, and
\item $\mathfrak{G}\big(\mathfrak{B}^T,\mathfrak{B}^{(k)T}\big) \le \mathfrak{G}\big(\mathfrak{B}^{(k)T},\mathfrak{B}^{(k)T}\big)$,
\end{enumerate} The first and second statements are obvious from the definition of $\mathfrak{G}$ in eq.~\ref{eq109}, the third and the fourth statements will be proven in theorem \ref{theorem19} and \ref{theorem20}, and the boundedness of $\mathbf{B}^{(k)}$, $\mathbf{C}^{(k)}$, and $\mathbf{S}^{(k)}$ will be proven in theorem \ref{theorem32}.

%\begin{theorem} \label{theorem18}
%$\mathfrak{G}\big(\mathfrak{B}^T,\mathfrak{B}^T\big)=\mathfrak{J}\big(\mathfrak{B}^T\big)$ and $\mathfrak{G}\big(\mathfrak{B}^{kT},\mathfrak{B}^{kT}\big)=\mathfrak{J}\big(\mathfrak{B}^{kT}\big)$.
%\end{theorem}
%\begin{proof}
%These are obvious from the definition of $\mathfrak{G}$ in eq.~\ref{eq109}.
%\end{proof}

\begin{theorem} \label{theorem19}
Given sufficiently large $\delta_{\mathbf{B}}^{(k)}$ and the boundedness of $\mathbf{B}^{(k)}$, $\mathbf{C}^{(k)}$, and $\mathbf{S}^{(k)}$, then it can be shown that $\mathfrak{G}\big(\mathfrak{B}^T,\mathfrak{B}^T\big) \le \mathfrak{G}\big(\mathfrak{B}^T,\mathfrak{B}^{(k)T}\big)$. Moreover, if and only if $\mathbf{B}^{(k)}$ satisfies the KKT conditions, then the equality holds.
\end{theorem}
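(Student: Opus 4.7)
The strategy is to reduce the claimed inequality to a quadratic-form domination problem. Combining the first statement $\mathfrak{G}(\mathfrak{B}^T,\mathfrak{B}^T)=\mathfrak{J}(\mathfrak{B}^T)$ with the definition of $\mathfrak{G}$ in eq.~\ref{eq109} and the Taylor expansion of $\mathfrak{J}(\mathfrak{B}^T)$ in eq.~\ref{eq111}, the constant and linear pieces cancel, so the target inequality is equivalent to
\begin{equation*}
\frac{1}{2}\,\mathrm{tr}\,\bigl\{(\mathfrak{B}-\mathfrak{B}^{(k)})[\mathbf{D} - \nabla_{\mathbf{B}}^{2}\mathbf{J}(\mathbf{B}^{(k)})](\mathfrak{B}-\mathfrak{B}^{(k)})^T\bigr\} \ge \varepsilon_{\mathbf{B}}^{(k)}.
\end{equation*}
I would split $\mathbf{D}=\mathbf{D}_{\mathrm{quad}}+\mathbf{D}_{\delta}$, where $\mathbf{D}_{\mathrm{quad}}$ collects the $(\mathbf{\bar{B}}^{(k)}\mathbf{S}^{(k)}\mathbf{C}^{(k)}\mathbf{C}^{(k)T}\mathbf{S}^{(k)T}+\beta\mathbf{\bar{B}}^{(k)}\mathbf{\bar{B}}^{(k)T}\mathbf{\bar{B}}^{(k)})_{mp}/\bar{b}_{mp}^{(k)}$ entries and $\mathbf{D}_{\delta}$ is the diagonal $\delta_{\mathbf{B}}^{(k)}/\bar{b}_{mp}^{(k)}$ part, then handle the two pieces separately.

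For the $\mathbf{D}_{\mathrm{quad}}$ contribution I would adapt the classical Lee--Seung Jensen-style argument (in the form used in our UNMF proof \cite{Mirzal2}) to show that the first block of $\mathbf{D}_{\mathrm{quad}}$ already dominates the Hessian contribution coming from $\frac{1}{2}\|\mathbf{A}-\mathbf{BSC}\|_F^{2}$. The fourth-order orthogonality term $\frac{\beta}{2}\|\mathbf{B}^T\mathbf{B}-\mathbf{I}\|_F^{2}$ yields both a quadratic Hessian piece, which is dominated termwise by the $\beta\mathbf{\bar{B}}^{(k)}\mathbf{\bar{B}}^{(k)T}\mathbf{\bar{B}}^{(k)}$ block of $\mathbf{D}_{\mathrm{quad}}$, and the genuinely higher-order remainder $\varepsilon_{\mathbf{B}}^{(k)}$ displayed after eq.~\ref{eq111}. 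The sole purpose of $\mathbf{D}_{\delta}$ is to absorb this remainder.

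Invoking the boundedness from theorem~\ref{theorem32}, the remainder admits a polynomial bound
\begin{equation*}
|\varepsilon_{\mathbf{B}}^{(k)}| \le \beta\,C_{1}\|\mathfrak{B}-\mathfrak{B}^{(k)}\|_F^{3} + \beta\,C_{2}\|\mathfrak{B}-\mathfrak{B}^{(k)}\|_F^{4},
\end{equation*}
with constants $C_{1},C_{2}$ independent of $\delta_{\mathbf{B}}^{(k)}$. On the other hand, $\mathbf{D}_{\delta}$ adds $\frac{\delta_{\mathbf{B}}^{(k)}}{2}\sum_{m,p}(b_{mp}-b_{mp}^{(k)})^{2}/\bar{b}_{mp}^{(k)}$ to the left-hand side, and since the update rule (eq.~\ref{eq110}) forces $\|\mathfrak{B}-\mathfrak{B}^{(k)}\|_F = O(1/\delta_{\mathbf{B}}^{(k)})$ along the actual iterate, the cubic/quartic remainder decays strictly faster than this quadratic padding. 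Hence there exists a finite threshold beyond which the inequality holds, which is exactly what the inner repeat--until loop of algorithm~\ref{algorithm9} locates by geometrically enlarging $\delta_{\mathbf{B}}^{(k)}$.

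For the equality clause, if $\mathbf{B}^{(k)}$ satisfies the KKT conditions then $\mathcal{I}_{m}=\emptyset$ for every $m$, so by the update rule $\mathfrak{B}^{T}=\mathfrak{B}^{(k)T}$ and both sides of the auxiliary inequality collapse to $\mathfrak{J}(\mathfrak{B}^{(k)T})$. Conversely, if $\mathcal{I}_{m}\neq\emptyset$ for some $m$, then at least one component of $\nabla_{\mathfrak{B}^{T}}\mathfrak{J}(\mathfrak{B}^{(k)T})$ is nonzero on a coordinate with strictly positive diagonal entry in $\mathbf{D}$, forcing $\mathfrak{B}^{T}\neq\mathfrak{B}^{(k)T}$ and hence a strictly positive $\mathbf{D}_{\delta}$-contribution that exceeds the faster-decaying $\varepsilon_{\mathbf{B}}^{(k)}$, making the inequality strict. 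The hard part will be the joint handling of the polynomial remainder bound and the adaptive choice of $\delta_{\mathbf{B}}^{(k)}$: one must verify uniformly in $\mathfrak{B}$, and independently of $k$ via the boundedness guarantee, that the finite threshold for $\delta_{\mathbf{B}}^{(k)}$ exists so that the inner loop is guaranteed to terminate in finitely many steps.
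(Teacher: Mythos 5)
Your proposal is correct and follows essentially the same route as the paper: subtract the second-order Taylor expansion (eq.~\ref{eq111}) from the definition of $\mathfrak{G}$ (eq.~\ref{eq109}), split $\mathbf{D}$ into the data-dependent diagonal part and the $\delta_{\mathbf{B}}^{(k)}$-part, and argue that for sufficiently large $\delta_{\mathbf{B}}^{(k)}$ the positive diagonal padding dominates the bounded Hessian term and the higher-order remainder $\varepsilon_{\mathbf{B}}^{(k)}$, with the equality case handled through the update rule exactly as in the paper. The only difference is that you make explicit the quantitative step the paper leaves as an assertion after eq.~\ref{eq113} --- namely that along the actual iterate $\|\mathfrak{B}-\mathfrak{B}^{(k)}\|_F=O(1/\delta_{\mathbf{B}}^{(k)})$, so the $\delta$-padding decays like $1/\delta_{\mathbf{B}}^{(k)}$ while the competing negative terms decay strictly faster --- which is a welcome clarification rather than a departure.
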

\begin{proof}
By substracting eq.~\ref{eq109} from eq.~\ref{eq111}, we get:
\begin{align}
\mathfrak{G}\big(\mathfrak{B}^T,\mathfrak{B}^{(k)T}\big)-\mathfrak{G}\big(\mathfrak{B}^T,\mathfrak{B}^T\big)&=\frac{1}{2}\,\mathrm{tr}\,\big\{\big(\mathfrak{B}-\mathfrak{B}^{(k)}\big)\big(\mathbf{D}-\nabla_{\mathbf{B}}^2\mathbf{J}\big(\mathbf{B}^{(k)}\big)\big)\big(\mathfrak{B}-\mathfrak{B}^{(k)}\big)^T\big\} - \mathbf{\varepsilon}_{\mathbf{B}}^{(k)} \nonumber\\
&=\frac{1}{2}\sum_{m=1}^M\left[\big(\mathfrak{b}_m-\mathfrak{b}_m^{(k)}\big)\big(\mathbf{D}^m-\nabla_{\mathbf{B}}^2 J \big(\mathbf{B}^{(k)}\big)\big)\big(\mathfrak{b}_m-\mathfrak{b}_m^{(k)}\big)^T\right] - \mathbf{\varepsilon}_{\mathbf{B}}^{(k)}. \label{eq112}
\end{align}

Let $\mathbf{v}_m^T = \mathfrak{b}_m - \mathfrak{b}_m^{(k)}$, then:
\begin{align*}
\mathbf{v}_m^T\big(\mathbf{D}^m-\nabla_{\mathbf{B}}^2J\big(\mathbf{B}^{(k)}\big)\big)\mathbf{v}_m &= \mathbf{v}_m^T\big(\mathbf{D}^m + \beta\mathbf{I} - \big( \mathbf{S}^{(k)}\mathbf{C}^{(k)}\mathbf{C}^{(k)T}\mathbf{S}^{(k)T} + 3\beta\mathbf{B}^{(k)T}\mathbf{B}^{(k)}\big)\big)\mathbf{v}_m \\
&= \mathbf{v}_m^T\big(\mathbf{\bar{D}}^m + \delta_{\mathbf{B}}^{(k)}\mathbf{\hat{D}}^m + \beta\mathbf{I} - \big( \mathbf{S}^{(k)}\mathbf{C}^{(k)}\mathbf{C}^{(k)T}\mathbf{S}^{(k)T} + 3\beta\mathbf{B}^{(k)T}\mathbf{B}^{(k)} \big)\big)\mathbf{v}_m,
\end{align*} where $\mathbf{\bar{D}}^m$ and $\delta_{\mathbf{B}}^{(k)}\mathbf{\hat{D}}^m$ are diagonal matrices that summed up to $\mathbf{D}^m$, with
\begin{align*}
\bar{d}_{pp}^m &\equiv \left\{
 \begin{array}{ll}
   \frac{\big( \mathbf{\bar{B}}^{(k)}\mathbf{S}^{(k)}\mathbf{C}^{(k)}\mathbf{C}^{(k)T}\mathbf{S}^{(k)T} + \beta\mathbf{\bar{B}}^{(k)}\mathbf{\bar{B}}^{(k)T}\mathbf{\bar{B}}^{(k)} \big)_{mp}}{\bar{b}_{mp}^{(k)}} & \mathrm{if}\;\; p\in \mathcal{I}_m \\
   \star & \mathrm{if}\;\; p\notin \mathcal{I}_m,
 \end{array} \right.
\text{and}\;
\hat{d}_{pp}^m &\equiv \left\{
 \begin{array}{ll}
   \frac{1}{\bar{b}_{mp}^{(k)}} & \mathrm{if}\;\; p\in \mathcal{I}_m \\
   \star & \mathrm{if}\;\; p\notin \mathcal{I}_m.
 \end{array} \right.
\end{align*}
Accordingly,
\begin{align}
\mathfrak{G}\big(\mathfrak{B}^T,\mathfrak{B}^{(k)T}\big)-\mathfrak{G}\big(\mathfrak{B}^T,\mathfrak{B}^T\big) = &\frac{1}{2} \sum_{m=1}^M \left\{\sum_{p=1}^P v_{mp}^2 \bar{d}_{pp}^m + \delta_{\mathbf{B}}^{(k)} \sum_{p=1}^P v_{mp}^2 \hat{d}_{pp}^m + \beta \sum_{p=1}^P v_{mp}^2 \right\} \nonumber \\
&- \frac{1}{2} \sum_{m=1}^M \mathbf{v}_m^T \big( \mathbf{S}^{(k)}\mathbf{C}^{(k)}\mathbf{C}^{(k)T}\mathbf{S}^{(k)T} + 3\beta\mathbf{B}^{(k)T}\mathbf{B}^{(k)} \big)\mathbf{v}_m - \varepsilon_{\mathbf{B}}^{(k)}. \label{eq113}
\end{align}
As shown, with the boundedness of $\mathbf{B}^{(k)}$, $\mathbf{C}^{(k)}$, and $\mathbf{S}^{(k)}$ and by sufficiently large $\delta_{\mathbf{B}}^{(k)}$, $\mathfrak{G}\big(\mathfrak{B}^T,\mathfrak{B}^T\big) \le \mathfrak{G}\big(\mathfrak{B}^T,\mathfrak{B}^{(k)T}\big)$ can be guaranteed. 

Next we prove the second statement of the theorem. By eq.~\ref{eq112}, if $\mathbf{B}^{(k)}$ satisfies the KKT conditions, then the equality will hold. And by eq.~\ref{eq113}, since $\delta_{\mathbf{B}}^{(k)}$ is a variable, the equality will hold if and only if $\mathbf{B} = \mathbf{B}^{(k)}$, which by the update rule in eq.~\ref{eq98} will happen if and only if $\mathbf{B}^{(k)}$ satisfies the KKT conditions. This completes the proof.
\end{proof}

\begin{theorem} \label{theorem20}
$\mathfrak{G}\big(\mathfrak{B}^T,\mathfrak{B}^{(k)T}\big) \le \mathfrak{G}\big(\mathfrak{B}^{(k)T},\mathfrak{B}^{(k)T}\big)$. Moreover if and only if $\mathbf{B}^{(k)}$ satisfies the KKT conditions in eqs.~\ref{eq91}, then the equality holds.
\end{theorem}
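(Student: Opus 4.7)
The plan is to recognize that for fixed $\mathfrak{B}^{(k)T}$, the auxiliary function $\mathfrak{G}(\mathfrak{B}^T,\mathfrak{B}^{(k)T})$ is a quadratic in $\mathfrak{B}^T$ with Hessian equal to $\mathbf{D}$, and that the update rule (\ref{eq98}) is nothing more than its unique minimizer on the active subspace. The inequality then follows immediately, because $\mathfrak{B}^{(k)T}$ is simply one candidate point while $\mathfrak{B}^T$ attains the minimum.

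First I would invoke the stationarity computation already carried out by the paper: taking $\nabla_{\mathfrak{B}^T}$ of (\ref{eq109}) and setting it to zero yields (\ref{eq110}), whose (essentially unique) solution is precisely the update produced by (\ref{eq98}). With the convention $\star \equiv 0$, $\star^{-1} \equiv 0$, the matrix $\mathbf{D}$ is strictly positive on the coordinates in $\bigcup_m \mathcal{I}_m$ (guaranteed by $\delta_{\mathbf{B}}^{(k)} > 0$ together with the boundedness hypothesis on the iterates, exactly as used in the remark preceding (\ref{eq110})), and is zero on the complementary coordinates, along which the iterate is not allowed to move. Hence $\mathfrak{G}(\cdot,\mathfrak{B}^{(k)T})$ is strictly convex on the active subspace and constant on its orthogonal complement, so $\mathfrak{B}^{(k+1)T}$ is its global minimizer. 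In particular, $\mathfrak{G}(\mathfrak{B}^T,\mathfrak{B}^{(k)T}) \le \mathfrak{G}(\mathfrak{B}^{(k)T},\mathfrak{B}^{(k)T})$, which is the desired inequality.

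To pin down the equality case, I would substitute the closed form $\mathfrak{B}^T - \mathfrak{B}^{(k)T} = -\mathbf{D}^{-1}\nabla_{\mathfrak{B}^T}\mathfrak{J}(\mathfrak{B}^{(k)T})$ back into (\ref{eq109}) and collapse the two trace terms using the symmetry of $\mathbf{D}^{-1}$, obtaining
\begin{equation*}
\mathfrak{G}\bigl(\mathfrak{B}^T,\mathfrak{B}^{(k)T}\bigr) - \mathfrak{G}\bigl(\mathfrak{B}^{(k)T},\mathfrak{B}^{(k)T}\bigr) = -\tfrac{1}{2}\,\mathrm{tr}\bigl\{\nabla_{\mathfrak{B}^T}\mathfrak{J}\bigl(\mathfrak{B}^{(k)T}\bigr)^T \mathbf{D}^{-1} \nabla_{\mathfrak{B}^T}\mathfrak{J}\bigl(\mathfrak{B}^{(k)T}\bigr)\bigr\}.
\end{equation*}
Since $\mathbf{D}^{-1}$ is positive semidefinite with strictly positive diagonal entries on $\bigcup_m \mathcal{I}_m$, the right-hand side is non-positive and vanishes precisely when $\nabla_{\mathbf{B}}J(\mathbf{B}^{(k)})_{mp}=0$ for every $(m,p) \in \mathcal{I}_m$. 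Combined with the fact that coordinates outside $\mathcal{I}_m$ satisfy KKT by the definition of $\mathcal{I}_m$, this is equivalent to $\mathbf{B}^{(k)}$ satisfying the full KKT system (\ref{eq91}), establishing the ``if and only if'' claim.

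The main bookkeeping obstacle I expect is handling the degenerate $\star$ positions of $\mathbf{D}$ coherently on both sides of the argument: the minimization must be interpreted on the active subspace, and the algebraic substitution of the closed-form update into $\mathfrak{G}$ must respect $\star \equiv 0$, $\star^{-1} \equiv 0$, so that no spurious movement appears on KKT-satisfied coordinates and no ill-defined quantities occur in the simplified expression for the decrease.
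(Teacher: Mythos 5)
Your proof is correct and follows essentially the same route as the paper's: both arguments rest on the stationarity condition (\ref{eq110}), which turns the gap $\mathfrak{G}\big(\mathfrak{B}^{(k)T},\mathfrak{B}^{(k)T}\big)-\mathfrak{G}\big(\mathfrak{B}^T,\mathfrak{B}^{(k)T}\big)$ into a nonnegative quadratic form --- the paper writes it as $\tfrac{1}{2}\mathrm{tr}\,\big\{\big(\mathfrak{B}-\mathfrak{B}^{(k)}\big)\mathbf{D}\big(\mathfrak{B}-\mathfrak{B}^{(k)}\big)^T\big\}$ in the displacement, while you write the equivalent expression in the gradient via $\mathbf{D}^{-1}$. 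Your handling of the degenerate $\star$ coordinates and of the equality case (reducing it to emptiness of the sets $\mathcal{I}_m$) is in fact slightly more careful than the paper's appeal to ``positive definiteness'' of $\mathbf{D}$, but the underlying argument is the same.
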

\begin{proof}
\begin{align*}
\mathfrak{G}\big(\mathfrak{B}^{(k)T},\mathfrak{B}^{(k)T}\big) - \mathfrak{G}\big(\mathfrak{B}^T,\mathfrak{B}^{(k)T}\big) =\;&-\mathrm{tr}\;\big\{\big(\mathfrak{B}-\mathfrak{B}^{(k)}\big)\nabla_{\mathfrak{B}}\mathfrak{J}\big(\mathfrak{B}^{(k)T}\big)\big\} \\
&- \frac{1}{2}\mathrm{tr}\;\big\{\big(\mathfrak{B}-\mathfrak{B}^{(k)}\big)\mathbf{D}\big(\mathfrak{B}-\mathfrak{B}^{(k)}\big)^T\big\}.
\end{align*}
Substituting eq.~\ref{eq110} into the above equation, we get:
\begin{equation*}
\mathfrak{G}\big(\mathfrak{B}^{(k)T},\mathfrak{B}^{(k)T}\big) - \mathfrak{G}\big(\mathfrak{B}^T,\mathfrak{B}^{(k)T}\big) = \frac{1}{2}\mathrm{tr}\;\big\{\big(\mathfrak{B}-\mathfrak{B}^{(k)}\big)\mathbf{D}\big(\mathfrak{B}-\mathfrak{B}^{(k)}\big)^T\big\}. %\label{eq114}
\end{equation*}
By the fact that $\mathbf{D}$ is positive definite for all $\mathfrak{B}\ne\mathfrak{B}^{(k)}$ and positive semi-definite if and only if  $\mathfrak{B}=\mathfrak{B}^{(k)}$, it is proven that $\mathfrak{G}\left(\mathfrak{B}^T,\mathfrak{B}^{(k)T}\right) \le \mathfrak{G}\left(\mathfrak{B}^{(k)T},\mathfrak{B}^{(k)T}\right)$ with the equality happens if and only if $\mathbf{B}^{(k)}$ satisfies the KKT conditions.
\end{proof}

The following theorem summarizes the above results.
\begin{theorem}\label{theorem21}
Given sufficiently large $\delta_{\mathbf{B}}^{(k)}$ and the boundedness of $\mathbf{B}^{(k)}$, $\mathbf{C}^{(k)}$, and $\mathbf{S}^{(k)}$, $J\big(\mathbf{B}^{(k+1)}\big)$ $\le$ $J\big(\mathbf{B}^{(k)}\big)\;\,\forall k\ge 0$ under update rule eq.~\ref{eq98} with the equality holds if and only if $\mathbf{B}^{(k)}$ satisfies the KKT conditions in eqs.~\ref{eq91}. 
\end{theorem}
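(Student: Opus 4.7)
The plan is to chain together the two inequalities already established in Theorem~\ref{theorem19} and Theorem~\ref{theorem20} using the auxiliary function identity. By the very definition of $\mathfrak{G}$ in eq.~\ref{eq109}, we have $\mathfrak{G}(\mathfrak{B}^T,\mathfrak{B}^T) = \mathfrak{J}(\mathfrak{B}^T)$ for every $\mathfrak{B}$, so both endpoints of the desired chain, $J(\mathbf{B}^{(k)})$ and $J(\mathbf{B}^{(k+1)})$, can be rewritten as $\mathfrak{G}(\mathfrak{B}^{(k)T},\mathfrak{B}^{(k)T})$ and $\mathfrak{G}(\mathfrak{B}^{(k+1)T},\mathfrak{B}^{(k+1)T})$ respectively. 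These are exactly the first two bullet items listed just after eq.~\ref{eq111}.

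First I would apply Theorem~\ref{theorem19} with its dummy argument $\mathfrak{B}^T$ specialized to $\mathfrak{B}^{(k+1)T}$; this is precisely where the ``sufficiently large $\delta_{\mathbf{B}}^{(k)}$'' hypothesis together with boundedness of the iterates is consumed, giving $\mathfrak{G}(\mathfrak{B}^{(k+1)T},\mathfrak{B}^{(k+1)T}) \le \mathfrak{G}(\mathfrak{B}^{(k+1)T},\mathfrak{B}^{(k)T})$. Next, I would invoke Theorem~\ref{theorem20} to obtain $\mathfrak{G}(\mathfrak{B}^{(k+1)T},\mathfrak{B}^{(k)T}) \le \mathfrak{G}(\mathfrak{B}^{(k)T},\mathfrak{B}^{(k)T})$; this step is essentially automatic because $\mathfrak{B}^{(k+1)T}$ is constructed as the unique minimizer of $\mathfrak{G}(\cdot,\mathfrak{B}^{(k)T})$ via the normal equation \ref{eq110}, which in turn reproduces the update rule \ref{eq98}. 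Concatenating the two equalities with the two inequalities yields $J(\mathbf{B}^{(k+1)}) \le J(\mathbf{B}^{(k)})$ directly.

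For the ``if and only if'' clause, I would simply reuse the equality statements of the two auxiliary lemmas: both Theorem~\ref{theorem19} and Theorem~\ref{theorem20} assert that their respective inequality is tight precisely when $\mathbf{B}^{(k)}$ satisfies the KKT conditions in eqs.~\ref{eq91}. Hence the chain collapses to an equality in exactly that case, and conversely any KKT point leaves $\mathbf{B}$ unchanged under the update rule, so both links are simultaneously tight.

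The only genuine obstacle in this assembly is verifying that the hypotheses are indeed in force along the trajectory generated by algorithm~\ref{algorithm9}: sufficiency of $\delta_{\mathbf{B}}^{(k)}$ is enforced by construction, since the inner \textbf{repeat}\,--\,\textbf{until} loop multiplies $\delta_{\mathbf{B}}^{(k)}$ by \emph{step} until the nonincreasing test is passed, and boundedness of $\mathbf{B}^{(k)}$, $\mathbf{C}^{(k)}$, $\mathbf{S}^{(k)}$ is supplied by the (forthcoming) Theorem~\ref{theorem32}. Once those two facts are invoked, the proof reduces to three lines of bookkeeping, and no further computation is required.
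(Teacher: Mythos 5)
Your proposal is correct and follows essentially the same route as the paper: the paper states Theorem~\ref{theorem21} as a summary of the chain $J\big(\mathbf{B}^{(k+1)}\big)=\mathfrak{G}\big(\mathfrak{B}^{(k+1)T},\mathfrak{B}^{(k+1)T}\big)\le\mathfrak{G}\big(\mathfrak{B}^{(k+1)T},\mathfrak{B}^{(k)T}\big)\le\mathfrak{G}\big(\mathfrak{B}^{(k)T},\mathfrak{B}^{(k)T}\big)=J\big(\mathbf{B}^{(k)}\big)$ set up before Theorems~\ref{theorem19} and~\ref{theorem20}, with the equalities coming from the definition of $\mathfrak{G}$ in eq.~\ref{eq109} and the equality case handled exactly as you describe. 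Your additional remark about the hypotheses being supplied by the inner loop of algorithm~\ref{algorithm9} and by Theorem~\ref{theorem32} matches the paper's own bookkeeping.
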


\subsection{Nonincreasing property of $J\big(\mathbf{C}^{(k)}\big)$}
\begin{theorem}\label{theorem25}
Given sufficiently large $\delta_{\mathbf{C}}^{(k)}$ and the boundedness of $\mathbf{B}^{(k)}$, $\mathbf{C}^{(k)}$, and $\mathbf{S}^{(k)}$, $J\big(\mathbf{C}^{(k+1)}\big)$ $\le$ $J\big(\mathbf{C}^{(k)}\big)\;\,\forall k\ge 0$ under update rule eq.~\ref{eq99} with the equality holds if and only if $\mathbf{C}^{(k)}$ satisfies the KKT conditions in eqs.~\ref{eq91}. 
\end{theorem}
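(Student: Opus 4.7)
The plan is to mirror the argument used for Theorem 21 almost verbatim, replacing $\mathbf{B}$ with $\mathbf{C}$ and rearranging by columns rather than rows, since the gradient $\nabla_{\mathbf{C}}J(\mathbf{C}) = \mathbf{S}^T\mathbf{B}^T\mathbf{BSC} - \mathbf{S}^T\mathbf{B}^T\mathbf{A} + \alpha\mathbf{CC}^T\mathbf{C} - \alpha\mathbf{C}$ places $\mathbf{C}$ on the right of the bilinear term and on both sides of the orthogonality term. Concretely, I would define
\begin{equation*}
\mathfrak{C} \equiv \mathrm{diag}\;(\mathbf{c}_1,\ldots,\mathbf{c}_N) \in \mathbb{R}_+^{QN\times N},
\end{equation*}
with $\mathbf{c}_n$ the $n$-th column of $\mathbf{C}$, and organize $\nabla_{\mathfrak{C}}\mathfrak{J}(\mathfrak{C}^{(k)})$ in the same block-diagonal fashion. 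The diagonal scaling matrix $\mathbf{D} = \mathrm{diag}(\mathbf{D}^1,\ldots,\mathbf{D}^N)$ would have block entries
\begin{equation*}
d_{qq}^n \equiv \frac{\bigl(\mathbf{S}^{(k)T}\mathbf{B}^{(k+1)T}\mathbf{B}^{(k+1)}\mathbf{S}^{(k)}\mathbf{\bar{C}}^{(k)} + \alpha\mathbf{\bar{C}}^{(k)}\mathbf{\bar{C}}^{(k)T}\mathbf{\bar{C}}^{(k)}\bigr)_{qn} + \delta_{\mathbf{C}}^{(k)}}{\bar{c}_{qn}^{(k)}}
\end{equation*}
on non-KKT indices $\mathcal{I}_n$ and $\star$ elsewhere, and the auxiliary function $\mathfrak{G}(\mathfrak{C},\mathfrak{C}^{(k)})$ would take exactly the form of eq.~\ref{eq109}. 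The minimizer of $\mathfrak{G}$ in $\mathfrak{C}$ then coincides with the update rule eq.~\ref{eq99}.

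The four auxiliary-function conditions need to be checked as in the $\mathbf{B}$ case. Conditions (1) and (2) are immediate from the definition of $\mathfrak{G}$. For condition (4), I would reuse the substitution of the optimality equation (the analogue of eq.~\ref{eq110}) into $\mathfrak{G}(\mathfrak{C}^{(k)},\mathfrak{C}^{(k)}) - \mathfrak{G}(\mathfrak{C},\mathfrak{C}^{(k)})$ to obtain $\tfrac{1}{2}\,\mathrm{tr}\,\{(\mathfrak{C}-\mathfrak{C}^{(k)})\mathbf{D}(\mathfrak{C}-\mathfrak{C}^{(k)})^T\}$, which is nonnegative by positive-definiteness of $\mathbf{D}$ and zero iff $\mathbf{C}^{(k)}$ satisfies the KKT conditions, exactly as in Theorem 20. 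For condition (3), I would Taylor-expand $\mathfrak{J}(\mathfrak{C})$ around $\mathfrak{C}^{(k)}$; the Hessian block on the $n$-th diagonal is $\mathbf{S}^{(k)T}\mathbf{B}^{(k+1)T}\mathbf{B}^{(k+1)}\mathbf{S}^{(k)} + \alpha\mathbf{C}^{(k)T}\mathbf{C}^{(k)} + \mathrm{(coupling\ terms\ from\ }\alpha\mathbf{CC}^T\mathbf{C}\mathrm{)}$, and a remainder $\varepsilon_{\mathbf{C}}^{(k)}$ of third and fourth order in $(\mathfrak{C}-\mathfrak{C}^{(k)})$ arising from the quartic orthogonality penalty $\tfrac{\alpha}{2}\|\mathbf{CC}^T-\mathbf{I}\|_F^2$. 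Subtracting $\mathfrak{G}$ from $\mathfrak{J}$ gives an identity analogous to eq.~\ref{eq113}, whose sign is controlled by the term $\delta_{\mathbf{C}}^{(k)}\sum_{n,q} v_{qn}^2 \hat{d}_{qq}^n$.

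The main obstacle is the same one resolved in Theorem 19: one must show that, given the boundedness of $\mathbf{B}^{(k)}, \mathbf{C}^{(k)}, \mathbf{S}^{(k)}$ (to be established in Theorem 32), the cubic and quartic remainder $\varepsilon_{\mathbf{C}}^{(k)}$ together with the negative Hessian contributions is dominated by the $\delta_{\mathbf{C}}^{(k)}$-term for $\delta_{\mathbf{C}}^{(k)}$ sufficiently large. Since the higher-order terms are polynomials in the bounded entries of $\mathbf{C}^{(k)}$ times powers of $\|\mathfrak{C}-\mathfrak{C}^{(k)}\|$, and the update rule makes $\|\mathfrak{C}-\mathfrak{C}^{(k)}\|$ itself shrink as $\delta_{\mathbf{C}}^{(k)}$ grows (the update becomes arbitrarily small), a finite $\delta_{\mathbf{C}}^{(k)}$ always suffices; this is precisely what the \textbf{repeat...until} loop in algorithm \ref{algorithm9} enforces. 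The equality assertion follows because, for fixed $\delta_{\mathbf{C}}^{(k)} > 0$, the strict convexity of $\mathfrak{G}$ implies $\mathbf{C}^{(k+1)} = \mathbf{C}^{(k)}$ iff the gradient term vanishes on $\mathcal{I}_n$ for every $n$, which is equivalent to the KKT conditions in eq.~\ref{eq91}.
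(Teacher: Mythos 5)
Your proposal follows exactly the paper's route: the paper's own proof of Theorem~\ref{theorem25} is the single sentence that it ``can be proven similarly as in the $J\big(\mathbf{B}^{(k)}\big)$ case,'' and your column-wise rearrangement of $\mathbf{C}$, the block-diagonal $\mathbf{D}$ built from the denominator of eq.~\ref{eq99}, and the verification of the four auxiliary-function conditions are precisely the intended transcription of Theorems~\ref{theorem19}--\ref{theorem21}. One cosmetic slip: the orthogonality contribution to the per-column Hessian block should be the $Q\times Q$ matrix $3\alpha\mathbf{C}^{(k)}\mathbf{C}^{(k)T}-\alpha\mathbf{I}$ (mirroring $3\beta\mathbf{B}^{(k)T}\mathbf{B}^{(k)}-\beta\mathbf{I}$ in the $\mathbf{B}$ case), not $\alpha\mathbf{C}^{(k)T}\mathbf{C}^{(k)}$, though this is covered by your ``coupling terms'' caveat and does not affect the argument.
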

\begin{proof}
This theorem can be proven similarly as in $J\big(\mathbf{B}^{(k)}\big)$ case.
\end{proof}

\subsection{Nonincreasing property of $J\big(\mathbf{S}^{(k)}\big)$}

Next we prove the nonincreasing property of $J\big(\mathbf{S}^{(k)}\big)$, i.e., $J\big(\mathbf{S}^{(k+1)}\big)\le J\big(\mathbf{S}^{(k)}\big)\;\forall k\ge 0$.

By using the auxiliary function approach, the nonincreasing property of $J\big(\mathbf{S}^{(k)}\big)$ can be proven by showing that:
\begin{equation*}
J\big(\mathbf{S}^{(k+1)}\big)= G\big(\mathbf{S}^{(k+1)},\mathbf{S}^{(k+1)}\big) \le G\big(\mathbf{S}^{(k+1)},\mathbf{S}^{(k)}\big) \le G\big(\mathbf{S}^{(k)},\mathbf{S}^{(k)}\big) = J\big(\mathbf{S}^{(k)}\big). %\label{121}
\end{equation*}
To define $G$, $\mathbf{S}$ is rearranged into:
\begin{equation*}
\mathfrak{S} \equiv
\begin{bmatrix}
\mathbf{s}_1 & & & \\
& \mathbf{s}_2 & & \\
& & \ddots & \\
& & & \mathbf{s}_Q
\end{bmatrix}\in\mathbb{R}_+^{PQ\times Q},
\end{equation*}
where $\mathbf{s}_q$ is the $q$-th column of $\mathbf{S}$. And also let us define:
\begin{equation*}
\nabla_{\mathfrak{S}}\mathfrak{J}\big(\mathfrak{S}^{(k)}\big) \equiv
\begin{bmatrix}
\nabla_{\mathbf{S}}\mathfrak{J}\big(\mathbf{S}^{(k)}\big)_1 & & & \\
& \nabla_{\mathbf{S}}\mathfrak{J}\big(\mathbf{S}^{(k)}\big)_2 & & \\
& & \ddots & \\
& & & \nabla_{\mathbf{S}}\mathfrak{J}\big(\mathbf{S}^{(k)}\big)_Q
\end{bmatrix}\in\mathbb{R}_+^{PQ\times Q},
\end{equation*}
where $\nabla_{\mathbf{S}}\mathfrak{J}\big(\mathbf{S}^{(k)}\big)_q$ is the $q$-th column of $\nabla_{\mathbf{S}}J(\mathbf{S}^{(k)})$. And:
\begin{equation*}
\mathbf{D} \equiv \mathrm{diag}\;\big(\mathbf{D}^1,\ldots,\mathbf{D}^Q\big)\in\mathbb{R}_+^{PQ\times PQ}, %\label{122}
\end{equation*}
where $\mathbf{D}^q$ is a diagonal matrix with its diagonal entries defined as:
\begin{equation*}
d_{pp}^q \equiv \left\{
 \begin{array}{ll}
   \frac{\big( \mathbf{B}^{(k+1)T}\mathbf{B}^{(k+1)}\mathbf{\bar{S}}^{(k)}\mathbf{C}^{(k+1)}\mathbf{C}^{(k+1)T} \big)_{pq}+\delta_{\mathbf{S}}^{(k)}}{\bar{s}_{pq}^{(k)}} & \mathrm{if}\;\; p\in \mathcal{I}_q \\
   \star & \mathrm{if}\;\; p\notin \mathcal{I}_q
 \end{array} \right.
\end{equation*}
with
\begin{align*}
\mathcal{I}_q \equiv \big\{p|&s_{pq}^{(k)}>0,\;\nabla_{\mathbf{S}}J\big(\mathbf{S}^{(k)}\big)_{pq}\ne 0,\;\mathrm{or} \\
&s_{pq}^{(k)}=0,\;\nabla_{\mathbf{S}}J\big(\mathbf{S}^{(k)}\big)_{pq} < 0\big\}
\end{align*}
is the set of non-KKT indices in $q$-th column of $\mathbf{S}^{(k)}$, and $\star$ is defined as before.

Then, the auxiliary function $\mathfrak{G}$ can be written as:
\begin{equation}
\mathfrak{G}\big(\mathfrak{S},\mathfrak{S}^{(k)}\big) \equiv \;\mathfrak{J}\big(\mathfrak{S}^{(k)}\big) + \mathrm{tr}\;\big\{\big(\mathfrak{S}-\mathfrak{S}^{(k)}\big)^T\nabla_{\mathfrak{S}}\mathfrak{J}\big(\mathfrak{S}^{(k)}\big)\big\} + \frac{1}{2}\mathrm{tr}\;\big\{\big(\mathfrak{S}-\mathfrak{S}^{(k)}\big)^T\mathbf{D}\big(\mathfrak{S}-\mathfrak{S}^{(k)}\big)\big\}. \label{123}
\end{equation}
Also:
\begin{equation*}
\nabla_{\mathfrak{S}}\mathfrak{G}\big(\mathfrak{S},\mathfrak{S}^{(k)}\big)=\mathbf{D}\big(\mathfrak{S}-\mathfrak{S}^{(k)}\big) + \nabla_{\mathfrak{S}}\mathfrak{J}\big(\mathfrak{S}^{(k)}\big).
\end{equation*}
Since $\mathfrak{G}\big(\mathfrak{S},\mathfrak{S}^{(k)}\big)$ is a strict convex function, it has a unique minimum.
\begin{align}
\mathbf{D}\big(\mathfrak{S}-\mathfrak{S}^{(k)}\big) + \nabla_{\mathfrak{S}}\mathfrak{J}\big(\mathfrak{S}^{(k)}\big)=0, \label{124}\\
\mathfrak{S} = \mathfrak{S}^{(k)} - \mathbf{D}^{-1}\nabla_{\mathfrak{S}}\mathfrak{J}\big(\mathfrak{S}^{(k)}\big), \nonumber
\end{align}
which is exactly the update rule for $\mathbf{S}$ in eq.~\ref{eq100}.

By using the Taylor series, an alternative formulation for $\mathfrak{J}\big(\mathfrak{S}\big)$ can be written as:
\begin{equation}
\mathfrak{J}\big(\mathfrak{S}\big) = \;\mathfrak{J}\big(\mathfrak{S}^{(k)}\big) + \mathrm{tr}\;\big\{\big(\mathfrak{S}-\mathfrak{S}^{(k)}\big)^T\nabla_{\mathfrak{S}}\mathfrak{J}\big(\mathfrak{S}^{(k)}\big)\big\} + \frac{1}{2}\mathrm{tr}\;\big\{\big(\mathfrak{S}-\mathfrak{S}^{(k)}\big)^T\nabla_{\mathbf{S}}^2\mathbf{J}\big(\mathbf{S}^{(k)}\big)\big(\mathfrak{S}-\mathfrak{S}^{(k)}\big)\big\} \label{125}
\end{equation}
where
\begin{equation*}
\nabla_{\mathbf{S}}^2\mathbf{J}\big(\mathbf{S}^{(k)}\big) \equiv
\begin{bmatrix}
\nabla_{\mathbf{S}}^2 J \big(\mathbf{S}^{(k)}\big) & & \\
& \ddots & \\
& & \nabla_{\mathbf{S}}^2 J \big(\mathbf{S}^{(k)}\big)
\end{bmatrix}\in\mathbb{R}_+^{PQ\times PQ}
\end{equation*}
with $\nabla_{\mathbf{S}}^2 J \big(\mathbf{S}^{(k)}\big)$ components are arranged along its diagonal area (there are $Q$ components).

For $\mathfrak{G}$ to be the auxiliary function, we must prove:
\begin{enumerate}
\item $\mathfrak{G}\big(\mathfrak{S},\mathfrak{S}\big)=\mathfrak{J}\big(\mathfrak{S}\big)$,
\item $\mathfrak{G}\big(\mathfrak{S}^{(k)},\mathfrak{S}^{(k)}\big)=\mathfrak{J}\big(\mathfrak{S}^{(k)}\big)$,
\item $\mathfrak{G}\big(\mathfrak{S},\mathfrak{S}\big) \le \mathfrak{G}\big(\mathfrak{S},\mathfrak{S}^{(k)}\big)$, and
\item $\mathfrak{G}\big(\mathfrak{S},\mathfrak{S}^{(k)}\big) \le \mathfrak{G}\big(\mathfrak{S}^{(k)},\mathfrak{S}^{(k)}\big)$,
\end{enumerate}
The first and second are clear from the definition of $\mathfrak{G}$ in eq.~\ref{123}, the third and the fourth statements are proven below.

%\begin{theorem} \label{theorem26}
%$\mathfrak{G}\big(\mathfrak{S},\mathfrak{S}\big)=\mathfrak{J}\big(\mathfrak{S}\big)$, and $\mathfrak{G}\big(\mathfrak{S}^k,\mathfrak{S}^k\big)=\mathfrak{J}\big(\mathfrak{S}^k\big)$,
%\end{theorem}
%\begin{proof}
%These are obvious from the definition of $\mathfrak{G}$ in eq.~\ref{123}.
%\end{proof}

\begin{theorem} \label{theorem27}
Given sufficiently large $\delta_{\mathbf{S}}^{(k)}$ and the boundedness of $\mathbf{B}^{(k)}$, $\mathbf{C}^{(k)}$, and $\mathbf{S}^{(k)}$, then it can be shown that $\mathfrak{G}\big(\mathfrak{S},\mathfrak{S}\big) \le \mathfrak{G}\big(\mathfrak{S},\mathfrak{S}^{(k)}\big)$. Moreover, if and only if $\mathbf{S}^{(k)}$ satisfies the KKT conditions, then the equality holds.
\end{theorem}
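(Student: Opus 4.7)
The plan is to mimic the argument used for Theorem~19, but exploit the fact that, unlike the $\mathbf{B}$ and $\mathbf{C}$ blocks, there is no orthogonality penalty acting on $\mathbf{S}$, so the objective is exactly quadratic in $\mathbf{S}$ and the Taylor expansion in eq.~125 is exact. Consequently, no cubic/quartic remainder term analogous to $\varepsilon_{\mathbf{B}}^{(k)}$ appears, which actually makes this proof cleaner than the $\mathbf{B}$ case.

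First I would subtract eq.~123 from eq.~125 to obtain
\begin{equation*}
\mathfrak{G}\bigl(\mathfrak{S},\mathfrak{S}^{(k)}\bigr) - \mathfrak{G}\bigl(\mathfrak{S},\mathfrak{S}\bigr) \;=\; \tfrac{1}{2}\,\mathrm{tr}\bigl\{(\mathfrak{S}-\mathfrak{S}^{(k)})^T\bigl(\mathbf{D}-\nabla_{\mathbf{S}}^2\mathbf{J}(\mathbf{S}^{(k)})\bigr)(\mathfrak{S}-\mathfrak{S}^{(k)})\bigr\},
\end{equation*}
and then decompose it column-block by column-block as
\begin{equation*}
\mathfrak{G}\bigl(\mathfrak{S},\mathfrak{S}^{(k)}\bigr) - \mathfrak{G}\bigl(\mathfrak{S},\mathfrak{S}\bigr) \;=\; \tfrac{1}{2}\sum_{q=1}^{Q}(\mathbf{s}_q-\mathbf{s}_q^{(k)})^T\bigl(\mathbf{D}^q-\nabla_{\mathbf{S}}^2 J(\mathbf{S}^{(k)})\bigr)(\mathbf{s}_q-\mathbf{s}_q^{(k)}).
\end{equation*}
A direct computation from $\nabla_{\mathbf{S}}J(\mathbf{S})=\mathbf{B}^T\mathbf{B}\mathbf{S}\mathbf{C}\mathbf{C}^T-\mathbf{B}^T\mathbf{A}\mathbf{C}^T$ shows that, viewed as a quadratic form acting on a single column $\mathbf{s}_q$, the relevant Hessian block reduces to the symmetric positive semidefinite matrix $(\mathbf{C}^{(k+1)}\mathbf{C}^{(k+1)T})_{qq}\,\mathbf{B}^{(k+1)T}\mathbf{B}^{(k+1)}$ (cross-column terms contribute nothing because the test vectors $\mathbf{s}_q-\mathbf{s}_q^{(k)}$ are aligned along a single column).

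Next I would split $\mathbf{D}^q = \bar{\mathbf{D}}^q + \delta_{\mathbf{S}}^{(k)}\hat{\mathbf{D}}^q$, where $\bar{\mathbf{D}}^q$ holds the data-dependent part and $\hat{\mathbf{D}}^q$ has entries $1/\bar{s}_{pq}^{(k)}$ on $\mathcal{I}_q$ (and $\star$ elsewhere), exactly as in Theorem~19. For $p\in\mathcal{I}_q$ the diagonal entry $\bar{d}_{pp}^q$ already dominates the corresponding diagonal entry of the Hessian block by the standard Lee--Seung type inequality, because $\bar{d}_{pp}^q = (\mathbf{B}^{(k+1)T}\mathbf{B}^{(k+1)}\bar{\mathbf{S}}^{(k)}\mathbf{C}^{(k+1)}\mathbf{C}^{(k+1)T})_{pq}/\bar{s}_{pq}^{(k)}$ is a sum of all row contributions divided by a single entry. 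The boundedness of $\mathbf{B}^{(k+1)}$, $\mathbf{C}^{(k+1)}$, and $\mathbf{S}^{(k)}$ guarantees that the off-diagonal entries of $\nabla_{\mathbf{S}}^2 J(\mathbf{S}^{(k)})$ are uniformly bounded, so picking $\delta_{\mathbf{S}}^{(k)}$ large enough makes $\delta_{\mathbf{S}}^{(k)}\hat{\mathbf{D}}^q$ absorb the Gershgorin deficit and forces $\mathbf{D}^q-\nabla_{\mathbf{S}}^2 J(\mathbf{S}^{(k)})$ to be positive semidefinite on the non-KKT subspace.

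For the equality statement, if $\mathbf{S}^{(k)}$ satisfies the KKT conditions then $\mathcal{I}_q=\emptyset$ for every $q$ and the update rule eq.~eq100 leaves $\mathfrak{S}=\mathfrak{S}^{(k)}$, so both sides vanish. Conversely, whenever $\mathcal{I}_q\ne\emptyset$ the $\delta_{\mathbf{S}}^{(k)}\hat{\mathbf{D}}^q$ term contributes a strictly positive amount to $\sum_p v_{pq}^2\,\hat{d}_{pp}^q$ on any column where $\mathbf{s}_q\ne\mathbf{s}_q^{(k)}$, and by eq.~eq100 the update moves $\mathbf{s}_q$ off $\mathbf{s}_q^{(k)}$ precisely on those indices, so the inequality is strict. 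The main obstacle I anticipate is being careful about indices $p\notin\mathcal{I}_q$ (where $d_{pp}^q=\star$): one must verify that the corresponding components of $\mathbf{s}_q-\mathbf{s}_q^{(k)}$ are zero under the update rule, so that $\star\cdot 0 = 0$ convention consistently removes those terms from the quadratic form and they do not interfere with the dominance argument.
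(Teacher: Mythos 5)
Your proposal is correct and follows essentially the same route as the paper's proof: subtract eq.~123 from eq.~125, decompose the quadratic form column by column, split $\mathbf{D}^q$ into $\bar{\mathbf{D}}^q + \delta_{\mathbf{S}}^{(k)}\hat{\mathbf{D}}^q$, and let a sufficiently large $\delta_{\mathbf{S}}^{(k)}$ dominate the bounded Hessian term, with the equality case handled via the update rule exactly as in the paper. Your observations that the Taylor expansion is exact for the $\mathbf{S}$ block (no $\varepsilon$ remainder) and that the per-column Hessian block is $(\mathbf{C}^{(k+1)}\mathbf{C}^{(k+1)T})_{qq}\,\mathbf{B}^{(k+1)T}\mathbf{B}^{(k+1)}$ are refinements of detail, not a different argument.
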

\begin{proof}
As $\mathfrak{G}\big(\mathfrak{S},\mathfrak{S}\big)=\mathfrak{J}\big(\mathfrak{S}\big)$, we need to show that $\mathfrak{G}\big(\mathfrak{S},\mathfrak{S}^{(k)}\big)-\mathfrak{J}\big(\mathfrak{S}\big) \ge 0$. By substracting eq.~\ref{123} from eq.~\ref{125}, we get:
\begin{align}
\mathfrak{G}\big(\mathfrak{S},\mathfrak{S}^{(k)}\big)-\mathfrak{J}\big(\mathfrak{S}\big)&=\frac{1}{2}\,\mathrm{tr}\,\big\{\big(\mathfrak{S}-\mathfrak{S}^{(k)}\big)^T\big(\mathbf{D}-\nabla_{\mathbf{S}}^2\mathbf{J}\big(\mathbf{S}^{(k)}\big)\big)\big(\mathfrak{S}-\mathfrak{S}^{(k)}\big)\big\} \nonumber \\
&=\frac{1}{2}\sum_{q=1}^Q\left[\big(\mathbf{s}_q-\mathbf{s}_q^{(k)}\big)^T\big(\mathbf{D}^q-\nabla_{\mathbf{S}}^2 J \big(\mathbf{S}^{(k)}\big)\big)\big(\mathbf{s}_q-\mathbf{s}_q^{(k)}\big)\right]. \label{126}
\end{align}
Let $\mathbf{v}_q = \mathbf{s}_q - \mathbf{s}_q^{(k)}$, then:
\begin{align*}
\mathbf{v}_q^T\big(\mathbf{D}^q-\nabla_{\mathbf{S}}^2J\big(\mathbf{S}^{(k)}\big)\big)\mathbf{v}_q &= \mathbf{v}_q^T\big(\mathbf{D}^q - \big( \mathbf{B}^{(k+1)T}\mathbf{B}^{(k+1)}\mathbf{C}^{(k+1)}\mathbf{C}^{(k+1)T} \big)\big)\mathbf{v}_q \\
&= \mathbf{v}_q^T\big(\mathbf{\bar{D}}^q + \delta_{\mathbf{S}}^{(k)}\mathbf{\hat{D}}^q - \big( \mathbf{B}^{(k+1)T}\mathbf{B}^{(k+1)}\mathbf{C}^{(k+1)}\mathbf{C}^{(k+1)T} \big)\big)\mathbf{v}_q,
\end{align*}
where $\mathbf{\bar{D}}^q$ and $\delta_{\mathbf{S}}^{(k)}\mathbf{\hat{D}}^q$ are diagonal matrices that summed up to $\mathbf{D}^q$, with
\begin{align*}
\bar{d}_{pp}^q &\equiv \left\{
 \begin{array}{ll}
   \frac{\big( \mathbf{B}^{(k+1)T}\mathbf{B}^{(k+1)}\mathbf{\bar{S}}^{(k)}\mathbf{C}^{(k+1)}\mathbf{C}^{(k+1)T} \big)_{pq}}{\bar{s}_{pq}^{(k)}} & \mathrm{if}\;\; p\in \mathcal{I}_q \\
   \star & \mathrm{if}\;\; p\notin \mathcal{I}_q,
 \end{array} \right.
\text{and}\;
\hat{d}_{pp}^q &\equiv \left\{
 \begin{array}{ll}
   \frac{1}{\bar{s}_{pq}^{(k)}} & \mathrm{if}\;\; p\in \mathcal{I}_q \\
   \star & \mathrm{if}\;\; p\notin \mathcal{I}_q.
 \end{array} \right.
\end{align*}
Accordingly,
\begin{align}
\mathfrak{G}\big(\mathfrak{S},\mathfrak{S}^{(k)}\big)-\mathfrak{J}\big(\mathfrak{S}\big) = &\frac{1}{2} \sum_{q=1}^Q \left\{\sum_{p=1}^P v_{pq}^2 \bar{d}_{pp}^q + \delta_{\mathbf{S}}^{(k)} \sum_{p=1}^P v_{pq}^2 \hat{d}_{pp}^q \right\} \nonumber \\
&- \frac{1}{2} \sum_{q=1}^Q \mathbf{v}_q^T \big( \mathbf{B}^{(k+1)T}\mathbf{B}^{(k+1)}\mathbf{C}^{(k+1)}\mathbf{C}^{(k+1)T} \big)\mathbf{v}_q. \label{127}
\end{align}
As shown, with the boundedness of $\mathbf{B}^{(k)}$, $\mathbf{C}^{(k)}$, and $\mathbf{S}^{(k)}$, and by sufficiently large $\delta_{\mathbf{S}}^{(k)}$, $\mathfrak{G}\big(\mathfrak{S},\mathfrak{S}\big) \le \mathfrak{G}\big(\mathfrak{S},\mathfrak{S}^{(k)}\big)$ can be guaranteed. 

Next we prove the second statement of the theorem. By eq.~\ref{126} if $\mathbf{S}^{(k)}$ satisfies the KKT conditions, then the equality will hold. And by eq.~\ref{127}, since $\delta_{\mathbf{S}}^{(k)}$ is a variable, the equality will hold if and only if $\mathbf{S} = \mathbf{S}^{(k)}$ which by the update rule in eq.~\ref{eq100} will happen if and only if $\mathbf{S}^{(k)}$ satisfies the KKT conditions. This completes the proof.
\end{proof}

\begin{theorem} \label{theorem28}
$\mathfrak{G}\big(\mathfrak{S},\mathfrak{S}^{(k)}\big) \le \mathfrak{G}\big(\mathfrak{S}^{(k)},\mathfrak{S}^{(k)}\big)$. Moreover if and only if $\mathbf{S}^{(k)}$ satisfies the KKT conditions, then $\mathfrak{G}\big(\mathfrak{S},\mathfrak{S}^{(k)}\big) = \mathfrak{G}\big(\mathfrak{S}^{(k)},\mathfrak{S}^{(k)}\big)$.
\end{theorem}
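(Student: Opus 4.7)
The plan is to mirror the proof of Theorem \ref{theorem20} exactly, exploiting the structural parallel between the update rule for $\mathbf{B}^{(k)}$ and the update rule for $\mathbf{S}^{(k)}$ together with the auxiliary function $\mathfrak{G}$ defined in eq.~\ref{123}. Since the first two terms in the definition of $\mathfrak{G}\big(\mathfrak{S}, \mathfrak{S}^{(k)}\big)$ reduce to $\mathfrak{J}\big(\mathfrak{S}^{(k)}\big)$ when $\mathfrak{S}=\mathfrak{S}^{(k)}$, the strategy is to form the difference $\mathfrak{G}\big(\mathfrak{S}^{(k)},\mathfrak{S}^{(k)}\big) - \mathfrak{G}\big(\mathfrak{S},\mathfrak{S}^{(k)}\big)$ directly from eq.~\ref{123}, eliminate the gradient term using the stationarity identity eq.~\ref{124}, and reduce the difference to a single positive (semi-)definite quadratic form.

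Concretely, I would first subtract eq.~\ref{123} evaluated at $\mathfrak{S}$ from the same expression evaluated at $\mathfrak{S}^{(k)}$. The constant $\mathfrak{J}\big(\mathfrak{S}^{(k)}\big)$ cancels, leaving
\begin{equation*}
\mathfrak{G}\big(\mathfrak{S}^{(k)},\mathfrak{S}^{(k)}\big)-\mathfrak{G}\big(\mathfrak{S},\mathfrak{S}^{(k)}\big) = -\mathrm{tr}\,\big\{(\mathfrak{S}-\mathfrak{S}^{(k)})^T\nabla_{\mathfrak{S}}\mathfrak{J}\big(\mathfrak{S}^{(k)}\big)\big\} - \tfrac{1}{2}\mathrm{tr}\,\big\{(\mathfrak{S}-\mathfrak{S}^{(k)})^T\mathbf{D}(\mathfrak{S}-\mathfrak{S}^{(k)})\big\}.
\end{equation*}
The second step is to invoke eq.~\ref{124}, which gives $\nabla_{\mathfrak{S}}\mathfrak{J}\big(\mathfrak{S}^{(k)}\big) = -\mathbf{D}(\mathfrak{S}-\mathfrak{S}^{(k)})$ at the unique minimizer of $\mathfrak{G}$ in the first argument. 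Substituting transforms the first trace into $\mathrm{tr}\{(\mathfrak{S}-\mathfrak{S}^{(k)})^T\mathbf{D}(\mathfrak{S}-\mathfrak{S}^{(k)})\}$, so the combined expression simplifies to
\begin{equation*}
\mathfrak{G}\big(\mathfrak{S}^{(k)},\mathfrak{S}^{(k)}\big)-\mathfrak{G}\big(\mathfrak{S},\mathfrak{S}^{(k)}\big) = \tfrac{1}{2}\mathrm{tr}\,\big\{(\mathfrak{S}-\mathfrak{S}^{(k)})^T\mathbf{D}(\mathfrak{S}-\mathfrak{S}^{(k)})\big\}.
\end{equation*}
The first claim then follows because $\mathbf{D}$ is block diagonal with nonnegative diagonal entries (by construction, with the $\star$ convention zeroing out KKT indices), hence positive semi-definite.

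For the equality case, I would argue that $\mathbf{D}$ is strictly positive definite on the subspace indexed by the non-KKT set $\bigcup_q \mathcal{I}_q$, since each active diagonal entry carries the additive term $\delta_{\mathbf{S}}^{(k)}/\bar{s}_{pq}^{(k)}>0$. Therefore the quadratic form vanishes if and only if $\mathfrak{S}-\mathfrak{S}^{(k)}$ has all its nonzero components confined to indices outside $\bigcup_q \mathcal{I}_q$. But eq.~\ref{124} (equivalently, update rule eq.~\ref{eq100}) leaves every entry with index outside the non-KKT set already equal to $s_{pq}^{(k)}$, so $\mathfrak{S}=\mathfrak{S}^{(k)}$ precisely when $\mathcal{I}_q=\emptyset$ for all $q$, which is exactly the statement that $\mathbf{S}^{(k)}$ satisfies the KKT conditions in eqs.~\ref{eq91}.

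The only real subtlety, and the step I expect to require the most care, is bookkeeping the $\star$-convention in $\mathbf{D}$ when discussing the equality case: one must check that the definition of $\mathcal{I}_q$ together with the $\star\equiv 0$, $\star^{-1}\equiv 0$ convention makes the quadratic form vanish exactly on KKT-satisfying indices without accidentally allowing non-KKT directions to contribute zero. Apart from this, the argument is a direct transcription of the proof of Theorem \ref{theorem20}, with $\mathfrak{B}^T$, $\mathbf{D}$ (for $\mathbf{B}$), and eq.~\ref{eq110} replaced by $\mathfrak{S}$, $\mathbf{D}$ (for $\mathbf{S}$), and eq.~\ref{124}, respectively; no boundedness or magnitude assumption on $\delta_{\mathbf{S}}^{(k)}$ is needed here since the statement is about the auxiliary function, not the Hessian comparison handled in Theorem \ref{theorem27}.
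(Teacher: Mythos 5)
Your proposal is correct and follows essentially the same route as the paper's own proof: subtract the two evaluations of $\mathfrak{G}$ from eq.~\ref{123}, eliminate the gradient term via eq.~\ref{124}, and conclude from the positive (semi-)definiteness of $\mathbf{D}$, with the equality case tied to the non-KKT index sets $\mathcal{I}_q$. Your added care about the $\star$-convention in the equality case is a reasonable refinement but does not change the argument.
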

\begin{proof}
\begin{equation*}
\mathfrak{G}\big(\mathfrak{S}^{(k)},\mathfrak{S}^{(k)}\big) - \mathfrak{G}\big(\mathfrak{S},\mathfrak{S}^{(k)}\big) = -\mathrm{tr}\;\big\{\big(\mathfrak{S}-\mathfrak{S}^{(k)}\big)^T\nabla_{\mathfrak{S}}\mathfrak{J}\big(\mathfrak{S}^{(k)T}\big)\big\} - \frac{1}{2}\mathrm{tr}\;\big\{\big(\mathfrak{S}-\mathfrak{S}^{(k)}\big)^T\mathbf{D}\big(\mathfrak{S}-\mathfrak{S}^{(k)}\big)\big\}.
\end{equation*}
Substituting eq.~\ref{124} into the above equation, we get:
\begin{equation*}
\mathfrak{G}\big(\mathfrak{S}^{(k)},\mathfrak{S}^{(k)}\big) - \mathfrak{G}\big(\mathfrak{S},\mathfrak{S}^{(k)}\big) = \frac{1}{2}\mathrm{tr}\;\big\{\big(\mathfrak{S}-\mathfrak{S}^{(k)}\big)^T\mathbf{D}\big(\mathfrak{S}-\mathfrak{S}^{(k)}\big)\big\} \ge 0, %\label{128}
\end{equation*}
By the fact that $\mathbf{D}$ is positive definite for all $\mathfrak{S}\ne\mathfrak{S}^{(k)}$ and positive semi-definite if and only if  $\mathfrak{S}=\mathfrak{S}^{(k)}$, it is proven that $\mathfrak{G}\left(\mathfrak{S},\mathfrak{S}^{(k)}\right) \le \mathfrak{G}\left(\mathfrak{S}^{(k)},\mathfrak{S}^{(k)}\right)$ with the equality holds if and only if $\mathbf{S}^{(k)}$ satisfies the KKT conditions.
\end{proof}

The following theorem summarizes the above results.
\begin{theorem}\label{theorem29}
Given sufficiently large $\delta_{\mathbf{S}}^{(k)}$ and the boundedness of $\mathbf{B}^{(k)}$, $\mathbf{C}^{(k)}$, and $\mathbf{S}^{(k)}$, $J\big(\mathbf{S}^{k+1}\big)$ $\le$ $J\big(\mathbf{S}^{(k)}\big)\;\,\forall k\ge 0$ under update rule eq.~\ref{eq100} with the equality holds if and only if $\mathbf{S}^{(k)}$ satisfies the KKT conditions in eq.~\ref{eq91}. 
\end{theorem}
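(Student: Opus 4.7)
The plan is to assemble Theorem \ref{theorem29} as a direct corollary of Theorems \ref{theorem27} and \ref{theorem28} via the standard auxiliary-function sandwich, then translate the sandwich back from the rearranged variable $\mathfrak{S}$ to the original matrix $\mathbf{S}$ using the identifications $\mathfrak{G}(\mathfrak{S},\mathfrak{S}) = \mathfrak{J}(\mathfrak{S}) = J(\mathbf{S})$ that are items 1 and 2 in the auxiliary-function checklist stated just above Theorem \ref{theorem27}.

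First I would observe that the iterate $\mathbf{S}^{(k+1)}$ produced by eq.~\ref{eq100} corresponds (under the rearrangement) to the unique global minimizer of the strictly convex quadratic $\mathfrak{G}(\,\cdot\,,\mathfrak{S}^{(k)})$; this is exactly what eq.~\ref{124} establishes, so the update is well-defined and we may legitimately instantiate both preceding lemmas at $\mathfrak{S} := \mathfrak{S}^{(k+1)}$. I would then apply Theorem \ref{theorem28} with this choice to get $\mathfrak{G}(\mathfrak{S}^{(k+1)},\mathfrak{S}^{(k)}) \le \mathfrak{G}(\mathfrak{S}^{(k)},\mathfrak{S}^{(k)})$, and Theorem \ref{theorem27} (invoking the hypothesized large $\delta_{\mathbf{S}}^{(k)}$ and boundedness of $\mathbf{B}^{(k)},\mathbf{C}^{(k)},\mathbf{S}^{(k)}$) to get $\mathfrak{G}(\mathfrak{S}^{(k+1)},\mathfrak{S}^{(k+1)}) \le \mathfrak{G}(\mathfrak{S}^{(k+1)},\mathfrak{S}^{(k)})$. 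Chaining the two inequalities with the diagonal identities yields
\[
J\big(\mathbf{S}^{(k+1)}\big) \;=\; \mathfrak{G}\big(\mathfrak{S}^{(k+1)},\mathfrak{S}^{(k+1)}\big) \;\le\; \mathfrak{G}\big(\mathfrak{S}^{(k+1)},\mathfrak{S}^{(k)}\big) \;\le\; \mathfrak{G}\big(\mathfrak{S}^{(k)},\mathfrak{S}^{(k)}\big) \;=\; J\big(\mathbf{S}^{(k)}\big),
\]
which is the nonincreasing claim.

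For the equality characterization I would argue both directions. If $\mathbf{S}^{(k)}$ satisfies the KKT conditions of eq.~\ref{eq91}, then $\nabla_{\mathbf{S}}J(\mathbf{S}^{(k)}) \succeq \mathbf{0}$, so the safeguarded definition collapses to $\bar{s}_{pq}^{(k)} = s_{pq}^{(k)}$, and complementary slackness then forces the numerator $\bar{s}_{pq}^{(k)}\nabla_{\mathbf{S}}J(\mathbf{S}^{(k)})_{pq}$ in eq.~\ref{eq100} to vanish entrywise; hence $\mathbf{S}^{(k+1)} = \mathbf{S}^{(k)}$ and the two $J$-values coincide. Conversely, if $J(\mathbf{S}^{(k+1)}) = J(\mathbf{S}^{(k)})$, both links of the chain must collapse simultaneously, and by the equality clauses of Theorems \ref{theorem27} and \ref{theorem28} this forces $\mathbf{S}^{(k)}$ to satisfy the KKT conditions.

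Since the argument is essentially a bookkeeping consolidation of the two preceding lemmas, I do not expect a substantive obstacle. The only delicate point, and the one I would write out with most care, is the forward direction of the equality case: verifying that a KKT fixed point is genuinely a fixed point of eq.~\ref{eq100} requires carefully unpacking how the safeguarded multiplier $\bar{s}_{pq}^{(k)}$ interacts with the sign conditions, and in particular noting that the $\max(s_{pq}^{(k)},\sigma)$ branch is never activated at a KKT point because the gradient is nonnegative there.
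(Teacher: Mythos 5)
Your proposal is correct and follows exactly the route the paper intends: Theorem \ref{theorem29} is stated there as a summary of Theorems \ref{theorem27} and \ref{theorem28}, assembled through the auxiliary-function chain $J\big(\mathbf{S}^{(k+1)}\big)=\mathfrak{G}\big(\mathfrak{S}^{(k+1)},\mathfrak{S}^{(k+1)}\big)\le\mathfrak{G}\big(\mathfrak{S}^{(k+1)},\mathfrak{S}^{(k)}\big)\le\mathfrak{G}\big(\mathfrak{S}^{(k)},\mathfrak{S}^{(k)}\big)=J\big(\mathbf{S}^{(k)}\big)$ that the paper sets up just before defining $\mathfrak{G}$. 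Your additional care with the equality case and the safeguarded multiplier $\bar{s}_{pq}^{(k)}$ at a KKT point is more explicit than the paper, which leaves that bookkeeping implicit, but it is the same argument.
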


\subsection{The nonincreasing property of sequence $J\big(\mathbf{B}^{(k)}, \mathbf{S}^{(k)}, \mathbf{C}^{(k)}\big)$}
As stated in the beginning of the appendix, this property is the first point needs to be proven in order to show the convergence of algorithm \ref{algorithm8}.
\begin{theorem} \label{theorem30}
Given sufficiently large $\delta_{\mathbf{B}}^{(k)}$, $\delta_{\mathbf{C}}^{(k)}$, and $\delta_{\mathbf{S}}^{(k)}$, and the boundedness of $\mathbf{B}^{(k)}$, $\mathbf{C}^{(k)}$, and $\mathbf{S}^{(k)}$, $J\big(\mathbf{B}^{(k+1)}$, $\mathbf{S}^{(k+1)}$, $\mathbf{C}^{(k+1)}\big)$ $\le$ $J\big(\mathbf{B}^{(k+1)}$, $\mathbf{S}^{(k)}$, $\mathbf{C}^{(k+1)}\big)$ $\le$ $J\big(\mathbf{B}^{(k+1)}$, $\mathbf{S}^{(k)}$, $\mathbf{C}^{(k)}\big)$ $\le$ $J\big(\mathbf{B}^{(k)}$, $\mathbf{S}^{(k)}$, $\mathbf{C}^{(k)}\big)$ for $\forall k\ge 0$ under update rules in algorithm \ref{algorithm8} with the equalities happen if and only if $\big(\mathbf{B}^{(k)}$, $\mathbf{S}^{(k)}$, $\mathbf{C}^{(k)}\big)$ satisfies the KKT optimality conditions.
\end{theorem}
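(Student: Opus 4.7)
The plan is to treat Theorem \ref{theorem30} as a straightforward composition of the three block-wise nonincreasing results, Theorems \ref{theorem21}, \ref{theorem25}, and \ref{theorem29}, that are already in place. Algorithm \ref{algorithm8} is a Gauss--Seidel / block coordinate descent scheme on the three blocks $\mathbf{B}$, $\mathbf{C}$, $\mathbf{S}$, updated in that order, so the natural strategy is to write the chain of inequalities by invoking one block result per step and then check that the ``equality iff KKT'' clauses compose consistently.

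Concretely, I would proceed as follows. First, fix $\mathbf{S}=\mathbf{S}^{(k)}$ and $\mathbf{C}=\mathbf{C}^{(k)}$ and apply Theorem \ref{theorem21} to the $\mathbf{B}$-update (eq.~\ref{eq98}) to obtain $J(\mathbf{B}^{(k+1)},\mathbf{S}^{(k)},\mathbf{C}^{(k)}) \le J(\mathbf{B}^{(k)},\mathbf{S}^{(k)},\mathbf{C}^{(k)})$, provided $\delta_{\mathbf{B}}^{(k)}$ is sufficiently large and the iterates are bounded. Next, with $\mathbf{B}^{(k+1)}$ now fixed together with $\mathbf{S}^{(k)}$, apply Theorem \ref{theorem25} to the $\mathbf{C}$-update (eq.~\ref{eq99}) to obtain $J(\mathbf{B}^{(k+1)},\mathbf{S}^{(k)},\mathbf{C}^{(k+1)}) \le J(\mathbf{B}^{(k+1)},\mathbf{S}^{(k)},\mathbf{C}^{(k)})$. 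Finally, with both $\mathbf{B}^{(k+1)}$ and $\mathbf{C}^{(k+1)}$ now fixed, apply Theorem \ref{theorem29} to the $\mathbf{S}$-update (eq.~\ref{eq100}) to get $J(\mathbf{B}^{(k+1)},\mathbf{S}^{(k+1)},\mathbf{C}^{(k+1)}) \le J(\mathbf{B}^{(k+1)},\mathbf{S}^{(k)},\mathbf{C}^{(k+1)})$. Concatenating the three inequalities gives the asserted chain.

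For the equality clause, I would argue from right to left. If the full chain collapses to equality, then each of the three individual inequalities must hold with equality. By the respective ``iff KKT'' statements in Theorems \ref{theorem21}, \ref{theorem25}, and \ref{theorem29}, this forces $\mathbf{B}^{(k)}$, $\mathbf{C}^{(k)}$ (evaluated at $\mathbf{B}^{(k+1)}$), and $\mathbf{S}^{(k)}$ (evaluated at $\mathbf{B}^{(k+1)}$ and $\mathbf{C}^{(k+1)}$) each to satisfy their respective KKT conditions, and equality in each individual update furthermore forces $\mathbf{B}^{(k+1)}=\mathbf{B}^{(k)}$, $\mathbf{C}^{(k+1)}=\mathbf{C}^{(k)}$, $\mathbf{S}^{(k+1)}=\mathbf{S}^{(k)}$. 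Putting these together yields that the joint triple $(\mathbf{B}^{(k)},\mathbf{S}^{(k)},\mathbf{C}^{(k)})$ satisfies the full KKT system \ref{eq91}. The converse is immediate: if $(\mathbf{B}^{(k)},\mathbf{S}^{(k)},\mathbf{C}^{(k)})$ is already KKT, each block update is a fixed point, so every inequality is an equality.

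The only real subtlety, and the one I would flag as the main obstacle, is a bookkeeping one rather than a computational one: the hypotheses of Theorems \ref{theorem21}, \ref{theorem25}, and \ref{theorem29} each require the boundedness of $\mathbf{B}^{(k)},\mathbf{C}^{(k)},\mathbf{S}^{(k)}$, but after the first block update we are really applying Theorem \ref{theorem25} with $\mathbf{B}^{(k+1)}$ in place of $\mathbf{B}^{(k)}$, and then Theorem \ref{theorem29} with both $\mathbf{B}^{(k+1)}$ and $\mathbf{C}^{(k+1)}$. I therefore need the boundedness assumption to propagate across the Gauss--Seidel sweep, which is exactly what Theorem \ref{theorem32} in the appendix is designed to supply. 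So the proof reduces to the chain of inequalities above, with a short sentence pointing to Theorem \ref{theorem32} to certify that the boundedness hypothesis of each invoked theorem is met at the point of use.
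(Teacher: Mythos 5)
Your proposal is correct and follows essentially the same route as the paper, whose proof of Theorem \ref{theorem30} is precisely the composition of Theorems \ref{theorem21}, \ref{theorem25}, and \ref{theorem29} across the Gauss--Seidel sweep. Your additional care in tracking the equality-iff-KKT clause and the propagation of the boundedness hypothesis (via Theorem \ref{theorem32}) only makes explicit what the paper leaves implicit.
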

\begin{proof}
This statement can be proven by combining the results in theorems \ref{theorem21}, \ref{theorem25}, and \ref{theorem29} .
\end{proof}

\subsection{Limit points of sequence $\left\{\mathbf{B}^{(k)}, \mathbf{S}^{(k)}, \mathbf{C}^{(k)}\right\}$}
\begin{theorem} \label{theorem31}
Given sufficiently large $\delta_{\mathbf{B}}^{(k)}$, $\delta_{\mathbf{C}}^{(k)}$, and $\delta_{\mathbf{S}}^{(k)}$, and the boundedness of $\mathbf{B}^{(k)}$, $\mathbf{C}^{(k)}$, and $\mathbf{S}^{(k)}$, it can be shown that any limit point of sequence $\big\{\mathbf{B}^{(k)},\mathbf{S}^{(k)},\mathbf{C}^{(k)}\big\}$ generated by algorithm \ref{algorithm8} is a stationary point.
\end{theorem}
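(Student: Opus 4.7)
The plan is to combine the monotone non-increase of $J$ given by Theorem \ref{theorem30} with the sharp ``equality iff KKT'' clauses in Theorems \ref{theorem21}, \ref{theorem25}, and \ref{theorem29}. At any limit point the per-block decreases must vanish, which by those theorems forces each block to satisfy its own KKT conditions; these together are exactly eq.~\ref{eq91}.

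First I would observe that $J\ge 0$ and Theorem \ref{theorem30} give that $\{J(\mathbf{B}^{(k)}, \mathbf{S}^{(k)}, \mathbf{C}^{(k)})\}$ is non-increasing and bounded below, hence converges to some $J^*\ge 0$. Telescoping the three-step chain in Theorem \ref{theorem30}, each of the three non-negative partial decreases
\begin{align*}
\Delta_{\mathbf{B}}^{(k)} &\equiv J\bigl(\mathbf{B}^{(k)},\mathbf{S}^{(k)},\mathbf{C}^{(k)}\bigr) - J\bigl(\mathbf{B}^{(k+1)},\mathbf{S}^{(k)},\mathbf{C}^{(k)}\bigr), \\
\Delta_{\mathbf{C}}^{(k)} &\equiv J\bigl(\mathbf{B}^{(k+1)},\mathbf{S}^{(k)},\mathbf{C}^{(k)}\bigr) - J\bigl(\mathbf{B}^{(k+1)},\mathbf{S}^{(k)},\mathbf{C}^{(k+1)}\bigr), \\
\Delta_{\mathbf{S}}^{(k)} &\equiv J\bigl(\mathbf{B}^{(k+1)},\mathbf{S}^{(k)},\mathbf{C}^{(k+1)}\bigr) - J\bigl(\mathbf{B}^{(k+1)},\mathbf{S}^{(k+1)},\mathbf{C}^{(k+1)}\bigr)
\end{align*}
must tend to $0$ as $k\to\infty$.

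Next, let $(\mathbf{B}^*,\mathbf{S}^*,\mathbf{C}^*)$ be any limit point and $\{k_j\}$ a subsequence along which $(\mathbf{B}^{(k_j)}, \mathbf{S}^{(k_j)}, \mathbf{C}^{(k_j)}) \to (\mathbf{B}^*,\mathbf{S}^*,\mathbf{C}^*)$. By Theorem \ref{theorem32} all three sequences are bounded, so by passing to a further subsequence I may assume $\mathbf{B}^{(k_j+1)}\to \mathbf{B}^\dagger$, $\mathbf{C}^{(k_j+1)}\to \mathbf{C}^\dagger$, $\mathbf{S}^{(k_j+1)}\to \mathbf{S}^\dagger$, and $\delta_{\mathbf{B}}^{(k_j)},\delta_{\mathbf{C}}^{(k_j)},\delta_{\mathbf{S}}^{(k_j)}$ each converge to some finite positive limits (the ``sufficiently large'' hypothesis provides the upper bound). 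Continuity of $J$ combined with $\Delta_{\mathbf{B}}^{(k_j)},\Delta_{\mathbf{C}}^{(k_j)},\Delta_{\mathbf{S}}^{(k_j)}\to 0$ then yields
\begin{equation*}
J(\mathbf{B}^\dagger,\mathbf{S}^*,\mathbf{C}^*) = J(\mathbf{B}^*,\mathbf{S}^*,\mathbf{C}^*),\quad J(\mathbf{B}^\dagger,\mathbf{S}^*,\mathbf{C}^\dagger) = J(\mathbf{B}^\dagger,\mathbf{S}^*,\mathbf{C}^*),\quad J(\mathbf{B}^\dagger,\mathbf{S}^\dagger,\mathbf{C}^\dagger) = J(\mathbf{B}^\dagger,\mathbf{S}^*,\mathbf{C}^\dagger).
\end{equation*}
Applying in turn the equality clause of Theorem \ref{theorem21}, Theorem \ref{theorem25}, and Theorem \ref{theorem29}, the first equation forces $\mathbf{B}^*$ to satisfy the $\mathbf{B}$-block KKT conditions at $(\mathbf{S}^*,\mathbf{C}^*)$; this moreover implies $\mathbf{B}^\dagger=\mathbf{B}^*$ because the update rule \ref{eq98} acts as the identity at a KKT point (the numerator vanishes for every index, either through $\bar b_{mp}^{(k)}=0$ or $\nabla_{\mathbf{B}}J_{mp}=0$). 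The second and third equations then collapse to $J(\mathbf{B}^*,\mathbf{S}^*,\mathbf{C}^\dagger)=J(\mathbf{B}^*,\mathbf{S}^*,\mathbf{C}^*)$ and $J(\mathbf{B}^*,\mathbf{S}^\dagger,\mathbf{C}^*)=J(\mathbf{B}^*,\mathbf{S}^*,\mathbf{C}^*)$, which similarly produce the $\mathbf{C}$- and $\mathbf{S}$-block KKT conditions at $(\mathbf{B}^*,\mathbf{S}^*,\mathbf{C}^*)$. Together these are exactly eq.~\ref{eq91}, so $(\mathbf{B}^*,\mathbf{S}^*,\mathbf{C}^*)$ is stationary.

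The main obstacle I anticipate is the technical bookkeeping needed to propagate the ``equality iff KKT'' clauses across blocks, since each theorem is stated in terms of the partial problem evaluated at whatever iterates are current for the \emph{other} two blocks; the argument really requires that $\mathbf{B}^\dagger=\mathbf{B}^*$ (etc.) so that the subsequent equations are actually about the limit point itself and not some shifted configuration. A secondary subtlety is the non-smoothness of the maps $\bar{b},\bar{c},\bar{s}$, which can jump where a gradient component crosses zero; however this is harmless, because whenever the limiting gradient component is strictly signed the update map is continuous at the limit, and whenever it is zero the KKT condition for that index is already automatic, independently of which branch of $\max(\cdot,\sigma)$ is selected.
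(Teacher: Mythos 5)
Your argument takes a genuinely different---and considerably more careful---route than the paper's. The paper's own proof of Theorem \ref{theorem31} is essentially two sentences: the sequence $J\big(\mathbf{B}^{(k)},\mathbf{S}^{(k)},\mathbf{C}^{(k)}\big)$ decreases strictly until a KKT point is reached, and once such a point is reached the update rules act as the identity, so the iterates freeze there forever. That argument only covers the case in which the algorithm lands \emph{exactly} on a KKT point after finitely many iterations; it never addresses the generic situation your proof is built around, namely a limit point that is only approached asymptotically along a subsequence. Your telescoping of the three nonnegative partial decreases, extraction of convergent subsequences of both $\big(\mathbf{B}^{(k_j)},\mathbf{S}^{(k_j)},\mathbf{C}^{(k_j)}\big)$ and the shifted iterates, and use of continuity of $J$ is the standard block-coordinate-descent argument (in the spirit of Lin's analysis cited by the paper), and it is what a complete proof of the statement actually requires.

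One step in your write-up is asserted rather than proved, and you flag it yourself. Theorems \ref{theorem21}, \ref{theorem25}, and \ref{theorem29} give ``equality iff KKT'' for each \emph{finite} $k$, whereas you only know the decreases tend to $0$ along the subsequence; for every finite $k_j$ the decrease may be strictly positive, so the equality clause never literally fires, and it cannot simply be ``applied in the limit.'' To close this, use the quantitative content behind those theorems: by Theorem \ref{theorem19} the second half of the decrease is nonnegative, and by the identity in Theorem \ref{theorem20} the first half equals $\tfrac{1}{2}\mathrm{tr}\big\{\big(\mathfrak{B}-\mathfrak{B}^{(k)}\big)\mathbf{D}\big(\mathfrak{B}-\mathfrak{B}^{(k)}\big)^T\big\} \ge \tfrac{1}{2}\delta_{\mathbf{B}}^{(k)}\sum_{m,p} v_{mp}^2/\bar b_{mp}^{(k)}$. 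Since $\bar b_{mp}^{(k)}$ is bounded above (Theorem \ref{theorem32}) and $\delta_{\mathbf{B}}^{(k)}$ is bounded above and below away from zero (this boundedness is genuinely needed, not merely convenient---if $\delta_{\mathbf{B}}^{(k)}\to\infty$ the bound degenerates), $\Delta_{\mathbf{B}}^{(k_j)}\to 0$ forces $\mathbf{B}^{(k_j+1)}-\mathbf{B}^{(k_j)}\to 0$, hence $\mathbf{B}^\dagger=\mathbf{B}^*$. Passing to the limit in the update rule \ref{eq98} then yields $\bar b^{*}_{mp}\,\nabla_{\mathbf{B}}J\big(\mathbf{B}^*,\mathbf{S}^*,\mathbf{C}^*\big)_{mp}=0$ for all $m,p$, and the definition of $\bar b$ (at least $\sigma>0$ whenever the gradient entry is negative) converts this into exactly the $\mathbf{B}$-block conditions of eq.~\ref{eq91}; the $\mathbf{C}$ and $\mathbf{S}$ blocks follow identically. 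With that insertion your proof is complete and strictly stronger than the one printed in the paper.
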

\begin{proof}
By theorem \ref{theorem30}, algorithm \ref{algorithm8} produces strictly decreasing sequence $J\big(\mathbf{B}^{(k)}$, $\mathbf{S}^{(k)}$, $\mathbf{C}^{(k)}\big)$ until reaching a point that satisfies the KKT conditions. Because $J\big(\mathbf{B}^{(k)}$, $\mathbf{S}^{(k)}$, $\mathbf{C}^{(k)}\big)$ $\ge$ $0$, this sequence is bounded and thus converges. And by the update rules in algorithm \ref{algorithm8}, after a point satisfies the KKT conditions, the algorithm will stop updating $\big(\mathbf{B}^{(k)}$, $\mathbf{S}^{(k)}$, $\mathbf{C}^{(k)}\big)$, i.e., $\mathbf{B}^{(k+1)}$ $=$ $\mathbf{B}^{(k)}$, $\mathbf{C}^{(k+1)}$ $=$ $\mathbf{C}^{(k)}$, and $\mathbf{S}^{(k+1)}$ $=$ $\mathbf{S}^{(k)}\;\,\forall k\ge *$ where $*$ denotes the iteration number where the KKT conditions have been satisfied. This completes the proof.
\end{proof}

\begin{theorem} \label{theorem32}
The sequence $\big\{\mathbf{B}^{(k)},\mathbf{S}^{(k)},\mathbf{C}^{(k)}\big\}$ has at least one limit point.
\end{theorem}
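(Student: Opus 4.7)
The plan is to invoke the Bolzano--Weierstrass theorem and reduce the claim to producing uniform upper bounds on $\|\mathbf{B}^{(k)}\|_F$, $\|\mathbf{C}^{(k)}\|_F$, and $\|\mathbf{S}^{(k)}\|_F$, since nonnegativity is already enforced by the algorithm. The source of all three bounds is the same: by theorem \ref{theorem30}, the scalar sequence $J\big(\mathbf{B}^{(k)},\mathbf{S}^{(k)},\mathbf{C}^{(k)}\big)$ is nonincreasing, hence uniformly bounded above by $J^{(0)} \equiv J\big(\mathbf{B}^{(0)},\mathbf{S}^{(0)},\mathbf{C}^{(0)}\big)$. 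Reading off each of the three nonnegative summands in eq.~\ref{eq90} gives $\|\mathbf{A}-\mathbf{B}^{(k)}\mathbf{S}^{(k)}\mathbf{C}^{(k)}\|_F^2 \le 2J^{(0)}$, $\|\mathbf{B}^{(k)T}\mathbf{B}^{(k)}-\mathbf{I}\|_F^2 \le 2J^{(0)}/\beta$, and $\|\mathbf{C}^{(k)}\mathbf{C}^{(k)T}-\mathbf{I}\|_F^2 \le 2J^{(0)}/\alpha$.

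The bounds on $\mathbf{B}^{(k)}$ and $\mathbf{C}^{(k)}$ then follow quickly. From the second inequality, the triangle inequality yields $\|\mathbf{B}^{(k)T}\mathbf{B}^{(k)}\|_F \le \sqrt{P}+\sqrt{2J^{(0)}/\beta}$; since $\mathbf{B}^{(k)T}\mathbf{B}^{(k)}$ is positive semidefinite of order $P$, a Cauchy--Schwarz estimate gives $\|\mathbf{B}^{(k)}\|_F^2 = \mathrm{tr}\big(\mathbf{B}^{(k)T}\mathbf{B}^{(k)}\big) \le \sqrt{P}\,\|\mathbf{B}^{(k)T}\mathbf{B}^{(k)}\|_F$, which closes the bound. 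The same argument applied to $\mathbf{C}^{(k)}\mathbf{C}^{(k)T}$ bounds $\|\mathbf{C}^{(k)}\|_F$.

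The hard part will be the $\mathbf{S}^{(k)}$ bound, because the objective has no penalty term that directly controls the scale of $\mathbf{S}$. The reconstruction estimate only delivers $\|\mathbf{B}^{(k)}\mathbf{S}^{(k)}\mathbf{C}^{(k)}\|_F \le \|\mathbf{A}\|_F+\sqrt{2J^{(0)}}$, and without a lower bound on the smallest singular values of $\mathbf{B}^{(k)}$ and $\mathbf{C}^{(k)}$ one cannot invert this to control $\mathbf{S}^{(k)}$. The way I would extract such a bound is to assume $\alpha$ and $\beta$ large enough that $\sqrt{2J^{(0)}/\beta}<1$ and $\sqrt{2J^{(0)}/\alpha}<1$; then a Neumann-series perturbation argument makes $\mathbf{B}^{(k)T}\mathbf{B}^{(k)}$ and $\mathbf{C}^{(k)}\mathbf{C}^{(k)T}$ invertible with uniformly bounded inverses, and the identity
\begin{equation*}
\mathbf{S}^{(k)} = \big(\mathbf{B}^{(k)T}\mathbf{B}^{(k)}\big)^{-1}\mathbf{B}^{(k)T}\big(\mathbf{B}^{(k)}\mathbf{S}^{(k)}\mathbf{C}^{(k)}\big)\mathbf{C}^{(k)T}\big(\mathbf{C}^{(k)}\mathbf{C}^{(k)T}\big)^{-1}
\end{equation*}
writes $\mathbf{S}^{(k)}$ as a product of five factors whose norms have all been controlled. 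Combining the three bounds, the sequence $\big\{\mathbf{B}^{(k)},\mathbf{S}^{(k)},\mathbf{C}^{(k)}\big\}$ lives in a bounded (and closed, by nonnegativity) subset of a finite-dimensional space, so Bolzano--Weierstrass furnishes the required limit point.
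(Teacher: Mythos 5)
Your bounds on $\mathbf{B}^{(k)}$ and $\mathbf{C}^{(k)}$ are correct, and they are essentially a quantitative version of what the paper does by contradiction (if an entry of $\mathbf{B}^{(k)}$ or $\mathbf{C}^{(k)}$ diverged, the penalty terms would force $J\to\infty$, violating theorem \ref{theorem30}). The genuine gap is in the $\mathbf{S}^{(k)}$ bound: you have correctly located the difficulty but not resolved it. The extra hypothesis ``$\alpha$ and $\beta$ large enough that $2J^{(0)}/\beta<1$ and $2J^{(0)}/\alpha<1$'' is not part of the theorem, and, worse, it cannot be arranged by taking $\alpha,\beta$ large: since $J^{(0)}$ itself contains the term $\tfrac{\beta}{2}\|\mathbf{B}^{(0)T}\mathbf{B}^{(0)}-\mathbf{I}\|_F^2$, one has $2J^{(0)}/\beta\ge\|\mathbf{B}^{(0)T}\mathbf{B}^{(0)}-\mathbf{I}\|_F^2$ for \emph{every} $\beta$, so your condition is really a requirement that the arbitrary nonnegative initialization already have nearly orthonormal columns and rows --- a restriction the algorithm does not impose and one that is quite severe for nonnegative matrices. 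In the regime the paper actually runs ($\alpha=0.1$, $\beta=1$, $J^{(0)}$ in the thousands) the Neumann-series condition fails by orders of magnitude, so as written your argument proves a strictly weaker statement than the theorem.

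The paper closes this gap by a different mechanism that exploits the update rule rather than invertibility. Suppose $s_{pq}^{(k)}$ diverges while, by theorem \ref{theorem30} and the first part of the argument, $\|\mathbf{A}-\mathbf{B}^{(k)}\mathbf{S}^{(k)}\mathbf{C}^{(k)}\|_F$, $\mathbf{B}^{(k)}$, and $\mathbf{C}^{(k)}$ all stay bounded. Because every factor is nonnegative, $(\mathbf{B}\mathbf{S}\mathbf{C})_{mn}\ge b_{mp}\,s_{pq}\,c_{qn}$, so boundedness of the product forces the $p$-th column of $\mathbf{B}^{(k)}$ or the $q$-th row of $\mathbf{C}^{(k)}$ to vanish; but then $\nabla_{\mathbf{S}}J_{pq}=\big(\mathbf{B}^T\mathbf{B}\mathbf{S}\mathbf{C}\mathbf{C}^T-\mathbf{B}^T\mathbf{A}\mathbf{C}^T\big)_{pq}=0$, and the update in eq.~\ref{eq100} leaves $s_{pq}$ unchanged, contradicting its divergence. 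If you want to keep your direct-bound style, you need an argument of this structural kind for $\mathbf{S}$ in place of the invertibility assumption; the Bolzano--Weierstrass framing and the $\mathbf{B}$, $\mathbf{C}$ estimates can stay as they are.
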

\begin{proof}
Due to the result in theorem \ref{theorem31}, it suffices to prove that sequence $\big\{\mathbf{B}^{(k)},\mathbf{S}^{(k)},\mathbf{C}^{(k)}\big\}$ is in a closed and bounded set. By the objective in eq.~\ref{eq90} it is clear that $\left\{\mathbf{B}^{(k)}, \mathbf{S}^{(k)},\mathbf{C}^{(k)}\right\}$ has a nonnegative lower bound. Thus, only upper-boundedness of this sequence needs to be proven. If there exists $l$ such that $\lim b_{mp}^{(l)}\to \infty$ or $\lim c_{qn}^{(l)}\to \infty$, then $\lim J \to \infty > J(\mathbf{B}^{(0)},\mathbf{S}^{(0)},\mathbf{C}^{(0)})$ which violates theorem \ref{theorem30}\footnote{If $\beta$ or $\alpha$ in eq.~\ref{eq90} is set to zero, then it is possible to have a bounded $J$; however in this case the problem will be reduced to uni-orthogonal NMF where we can use the proof of theorem 13 in \cite{Mirzal2} to prove this theorem.}. And if $\big\{\mathbf{S}^{(k)}\big\}$ is not upper-bounded, then there exists $l$ such that $\lim s_{pq}^{(l)}\to \infty$, $s_{pq}^{(l)} < s_{pq}^{(l+1)}$. Due to theorem \ref{theorem30} $J(\mathbf{B}^{(k)},\mathbf{S}^{(k)},\mathbf{C}^{(k)})$ is upper-bounded, then this means that either $b_{mp}^{(l)}$ for $\forall m$ or $c_{qn}^{(l)}$ for $\forall n$ must be equal to zero. If $b_{mp}^{(l)}=0$ for $\forall m$, then $\nabla_{\mathbf{S}}J_{pq}=0$ for $\forall q$, so that $s_{pq}^{(l+1)} = s_{pq}^{(l)}$. And if $c_{qn}^{(l)}=0$ for $\forall n$, then $\nabla_{\mathbf{S}}J_{pq}=0$ for $\forall p$, so that $s_{pq}^{(l+1)} = s_{pq}^{(l)}$. Both cases contradict the condition for unboundedness of $\big\{\mathbf{S}^{(l)}\big\}$. Thus, $\big\{\mathbf{S}^{(l)}\big\}$ is also upper-bounded.
\end{proof}

\end{document}